\def\E{\mathbb E}
\DeclareSymbolFont{matha}{OML}{txmi}{m}{it}
\DeclareMathSymbol{\varv}{\mathord}{matha}{118}
\definecolor{nice_green}{rgb}{0, 0.5, 0}
\definecolor{wb}{rgb}{0, 0.5, 0}
\definecolor{removed}{rgb}{0.9, 0.9, 0.9}
\definecolor{pv}{rgb}{0.5, 0.0, 0.3}
\newcommand{\norm}[1]{\ensuremath{\|#1\|}}
\newtheorem{theorem}{Theorem}
\newtheorem{definition}{Definition}
\newtheorem{lemma}{Lemma}
\newtheorem{corollary}{Corollary}
\def\S{\mathcal{S}}
\def\A{\mathcal{A}}
\def\R{\mathcal{R}}
\def\Q{\mathcal Q}
\def\T{\mathcal T}
\def\I{\mathcal I}
\def\P{\mathcal P}
\def\B{\mathcal B}
\def\N{\mathcal N}
\def\eqref#1{equation~\ref{#1}}
\def\1{\bm{1}}
\DeclareMathAlphabet{\mathsfit}{\encodingdefault}{\sfdefault}{m}{sl}
\SetMathAlphabet{\mathsfit}{bold}{\encodingdefault}{\sfdefault}{bx}{n}
\newcommand{\softmax}{\mathrm{softmax}}
\DeclareMathOperator*{\argmax}{arg\,max}
\DeclareMathOperator*{\argmin}{arg\,min}
\newcommand{\stack}[1]{\!\!\begin{array}{c}\scriptstyle #1\end{array}\!\!}
\newcommand{\brs}[1][-1mm]{\\[#1]\scriptstyle}
\let\oldparagraph\paragraph
\renewcommand{\paragraph}[1]{\par\vspace{-2mm}\oldparagraph{#1}}
\title{

Value Improved Actor Critic Algorithms

}
\author{%
  Yaniv Oren \\
  Department of Intelligent Systems\\
  Delft University of Technology\\
  2628 CD Delft, The Netherlands \\
  \texttt{y.oren@tudelft.nl} \\
  \And
  Moritz A. Zanger \\
  Department of Intelligent Systems\\
  Delft University of Technology\\
  2628 CD Delft, The Netherlands \\
  \texttt{m.a.zanger@tudelft.nl} \\
  \And
  Pascal R. van der Vaart \\
  Department of Intelligent Systems\\
  Delft University of Technology\\
  2628 CD Delft, The Netherlands \\
  \texttt{p.r.vandervaart-1@tudelft.nl} \\
  \And
  Mustafa Mert \c{C}elikok \\
  Dept. of Mathematics \& Computer Science \\
  University of Southern Denmark \\
  Odense, Denmark \\
  \texttt{celikok@imada.sdu.dk} \\
  \And 
  Wendelin B{\"o}hmer \\
  Department of Intelligent Systems\\
  Delft University of Technology\\
  2628 CD Delft, The Netherlands \\
  \texttt{j.w.bohmer@tudelft.nl} \\
  \And 
  Matthijs T. J. Spaan \\
  Department of Intelligent Systems\\
  Delft University of Technology\\
  2628 CD Delft, The Netherlands \\
  \texttt{m.t.j.spaan@tudelft.nl} \\
}
\begin{document}

\maketitle

\begin{abstract}
To learn approximately optimal acting policies for decision problems, modern Actor Critic algorithms rely on deep Neural Networks (DNNs) to parameterize the acting policy and \textit{greedification operators} to iteratively improve it. 
The reliance on DNNs suggests an improvement that is gradient based, which is per step much less greedy than the improvement possible by greedier operators such as the greedy update used by Q-learning algorithms.
On the other hand, slow changes to the policy can also be beneficial for the stability of the learning process, resulting in a tradeoff between greedification and stability.
To better address this tradeoff, we propose to decouple the acting policy from the policy evaluated by the critic.
This allows the agent to separately improve the critic's policy (e.g. \textit{value improvement}) with greedier updates while maintaining the slow gradient-based improvement to the parameterized acting policy.
We investigate the convergence of this approach in the finite-horizon domain using a popular analysis scheme which generalizes Policy Iteration with arbitrary improvement operators and approximate evaluation.
Empirically, incorporating value-improvement into the popular off-policy actor-critic algorithms TD3 and SAC significantly improves or matches performance over the baselines respectively, across different environments from the DeepMind continuous control domain, with negligible compute and implementation cost\footnote{Code is available at \href{https://github.com/YanivO1123/viac}{https://github.com/YanivO1123/viac}.}.
\end{abstract}

\section{Introduction}
\label{section:introduction}
The objective of Reinforcement Learning (RL) is to learn acting policies  $\pi$, a probability distribution over actions, that, when executed, maximize the expected return (i.e.,\ value) in a given task. 
Modern RL methods of the Actor-Critic (AC) family \citep[e.g.,][]{ppo, td3, sac2, mpo} use deep neural networks to parameterize the acting policy, which is iteratively improved using variations of \textit{policy improvement operators} based in stochastic gradient-descent (SGD), e.g., the policy gradient \citep{policy_gradient}.
These methods rely on a specific type of policy improvement operators called \textit{greedification operators}, which produce a new policy $\pi'$ that increases the current evaluation $Q^\pi$
(see Definition \ref{def:go} for more detail).

In gradient-based optimization the magnitude of the update to the policy - the amount of greedification - is governed by the learning rate,
which cannot be tuned independently to induce the maximum greedification possible at every step, the greedy update $\pi(s) = \argmax_a Q^\pi(s,a) $ (which we define generally as any policy $\pi$ that has support only on maximizing actions).
Similarly, executing $N$ repeating gradient steps with respect to the same batch will encourage the parameters to over-fit to the batch (as well as being computationally intensive) and is thus does not address the problem of limited greedification of gradient based operators.
For these reasons, the greedification of DNN-based policies is typically slow compared to, for instance, the $\argmax$ greedification used in Policy Iteration \citep{sutton2018reinforcement} and Q-learning \citep{dqn}.

While limited greedification can slow down learning, previous work has shown that too much greedification can cause instability in the learning process through overestimation bias \cite[see][]{double_dqn,bohmer2016non,td3}, which can be addressed through softer, less-greedy updates \citep[][]{fox_taming_noise}.
This leads to a direct tradeoff between \textit{greedification} and \textit{learning stability}.

Previous work partially addresses this tradeoff by decoupling the policy improvement into two steps.
First, an improved policy with \textit{controllable} greediness is explicitly produced by a greedification operator as a target.
Second, the acting policy is regressed against this target using supervised learning loss, such as cross-entropy.
The target policy is usually not a DNN, and can be for instance a Monte Carlo Tree Search-based policy, a variational parametric distribution, or a nonparametric model \citep[see][]{sac2,mpo,grill2020monte,musli,gumbelmuzero}. 
Unfortunately, this approach does not address the tradeoff fully: 
The parameterized acting policy is still improved with gradient-based optimization which imposes similar limitations on the rate of change to the acting policy.

To better address this tradeoff, we propose to explicitly decouple the acting policy from the \textit{evaluated} policy (the policy evaluated by the critic), and apply greedification independently to both.
This allows for (i) the evaluation of policies that need not be parameterized and can be arbitrarily greedy, 
while (ii) maintaining the slower policy improvement to the acting policy that is suitable for DNNs and facilitates learning stability.
We refer to an update step which evaluates an independently-improved policy as a \textit{value improvement} step and to this approach as Value-Improved Actor Critic (VIAC).
Since this framework diverges from the assumption made by the majority of RL methods (evaluated policy $\equiv$ acting policy) it is unclear whether this approach converges and for which improvement operators.
Our first result is that \textit{policy improvement} is not a sufficient condition for convergence to the optimal policy of even exact Policy Iteration algorithms because it allows for infinitesimal improvement.
To classify improvement operators that guarantee convergence, we identify necessary and sufficient conditions for operators to guarantee convergence to an optimal policy for a family of generalized Approximate Policy Iteration algorithms, a popular setup for underlying-convergence analysis of AC algorithms \citep{tsitsiklis2002convergence,smirnova2019convergence}.

We prove convergence for this class of operators in both generalized Approximate Policy Iteration and Value-Improved generalized Approximate Policy Iteration algorithms in finite-horizon MDPs.
Prior work has shown that the generalized Approximate Policy Iteration setup converges for specific operators, as well as for all operators that induce deterministic policies 
\citep[see][]{williams1993analysis,tsitsiklis2002convergence,bertsekas2011approximate,smirnova2019convergence}.
Our results complement prior work by extending convergence to stochastic policies and a large class of practical operators, such as the operator developed for the Gumbel~MuZero algorithm \citep{gumbelmuzero}, as well as the Value-Improved extension to the algorithm.
We demonstrate that incorporating value-improvement into practical algorithms can be beneficial with experiments in Deep Mind's control suite \citep{dm_control} with the popular off-policy AC algorithms TD3 \citep{td3} and SAC \citep{sac2}, where in all environments tested VI-TD3/SAC significantly outperform or match their respective baselines.
\section{Background}
\label{section:background}
The reinforcement learning problem is formulated as an agent interacting with a Markov Decision Process (MDP) $\mathcal{M}(\S, \A, P, R, \rho, H) $, where $ \S $ a state space, $ \A $ an action space, 
$ P: \S \times \A \to \mathscr{P}(\S) $ is a conditional probability measure over the state space that defines the transition probability $P(s,a) $. 
The immediate reward $R(s, a)$ is a state-action dependent bounded random variable.
Initial states are sampled from the start-state distribution $\rho$. 
In finite horizon MDPs, $H$ specifies the length of a trajectory in the environment.
Many RL setups and algorithms consider the infinite horizion case, where $H \to \infty$, but for simplicity's sake our theoretical analysis in Section \ref{section:GOPI} remains restricted to finite horizons, discrete state spaces and finite (and thus discrete) action spaces $|\A| < \infty $.
Note that in finite horizon MDP the policy is not stationary, as the same state can have different optimal actions at different timesteps $t$ of an episode. 
We model this without loss of generality still as a stationary policy problem by augmenting the state with the decision time $t$, which casts an underlying state that is visited twice in an episode as two different states, which preserves stationarity of the policy in the augmented state space.
The resulting state and transitions of the MDP then become a directed acyclic graph (DAG).
Throughout the paper, we will assume all states in the MDP contain the timestep as part of the representation.

The objective of the agent is to find a policy $\pi: \S \to \mathscr{P}(\A)$, a distribution over actions at each state, that maximizes the objective $J$, the expected return from the starting state distribution $\rho$. 
We denote the set of all possible policies with $\Pi$.
This quantity can also be written as the expected state value $V^\pi$ with respect to starting states $s_0$:
\begin{gather*}
    \label{standard_rl_objective}
    J(\pi) = 
    \E\big[V^\pi(s_0) \,\big| \stack{s_0 \sim \rho} \big]
    = 
    \E \Big[
        {\textstyle\sum\limits_{t=0}^{H-1}} \gamma^t r_t
        \,\Big| \stack{s_0 \sim \rho, \, s_{t+1} \sim P(s_t, a_t) \brs a_t \sim \pi(s_t), \, r_t \sim R(s_t, a_t)} 
    \Big] \,.
\end{gather*}
The discount factor $0 < \gamma \leq 1 $ is traditionally set to 1 in finite horizon MDPs.
The state value $V^\pi$ can also be used to define a state-action $Q$-value and vice versa, 
i.e.,\ $\forall s \in \S, \forall a \in \A$:
\begin{align*}
    Q^\pi(s,a) = \E \Big[
        r + \gamma V^\pi(s')
        \, \Big |\,
        \stack{r \sim R(s,a) \brs s' \sim P(s,a)}
    \Big]\,,
    \qquad V^\pi(s) = \E\big[Q^\pi(s,a) \,\big| \stack{a\sim\pi(s)}\big] \,.
\end{align*}
We refer to the optimal policy $\pi^* = \argmax_\pi V^\pi$ and its value as $V^*$ and $Q^*$ respectively.

\paragraph{Policy Improvement} To find $\pi^*$, many RL and Dynamic Programming (DP) approaches based in approximate or exact Policy Iteration \citep{sutton2018reinforcement} can be cast as iterative processes that aim to produce a sequence of policies $\pi_n$ that \textit{improve} over iterations such that the exact values $V^{\pi_n}$ or approximate values $v^{\pi_n} \approx V^{\pi_n} $ satisfy $ V^{\pi_{n+1}} > V^{\pi_n} $ (or in the approximate case $v^{\pi_{n+1}} \gtrapprox v^{\pi_n}$) using \textit{policy improvement operators}:
\begin{definition}[Policy Improvement Operator]
    If an operator $\I: \Pi \to \Pi $ satisfies:
    \begin{align}
        \forall \pi \in \Pi, \forall s \in S: \,\, V^{\I(\pi)}(s) \geq V^{\pi}(s), \quad \forall \pi \in \Pi,\exists s \in S: \,\, V^{\I(\pi)}(s) > V^{\pi}(s)
    \end{align}
    (i.e., policy improvement), 
     as long as $\pi$ is not yet an optimal policy $V^\pi \neq V^*$,
     we call $\I$ a policy improvement operator.
    \label{def:pio}
\end{definition}

\paragraph{Greedification} The policy improvement theorem \citep{sutton2018reinforcement} is a fundamental result in RL and DP theory, which connects the policy improvement optimization process to a specific maximization problem referred to in literature as \textit{greedification} \citep[see][]{greedification_kls}. 
Greedification is the process of finding a policy $\pi'$ which increases another policy, $\pi$'s, evaluation $Q^\pi$ (Equation \ref{eq:greedification}):
\begin{theorem}[Policy Improvement]
\label{thm:policy_improvement}
Let $\pi$ and $\pi'$ be two policies such that $\forall s \in \S$:
\begin{align}
    \sum_{a \in \A}Q^\pi(s,a)\pi'(a|s) &\geq \sum_{a \in \A}Q^\pi(s,a)\pi(a|s) := V^\pi(s). \label{eq:greedification}
    \\[-1mm]
    \text{Then:} \quad V^{\pi'}(s) &\geq V^\pi(s).\label{eq:improvement}
\end{align}
In addition, if there is strict inequality of Equation \ref{eq:greedification} at any state, then there must be strict inequality of Equation \ref{eq:improvement} at at least one state.
\end{theorem}
We refer to \cite{sutton2018reinforcement} for proof.
Theorem \ref{thm:policy_improvement} proves that when the evaluation $Q^\pi$ is exact, greedification with respect to $\pi, Q^\pi$ produces an improved policy $\pi'$.
If the inequality in Equation \ref{eq:greedification} is strict $>$ we call $\pi'$ greedier than $\pi$ and any policy $\pi'$ such that $ \sum_{a \in \A}Q^\pi(s,a)\pi'(a|s) = \max_{a \in \A}Q^\pi(s,a) $ a greedy policy with respect to $Q^\pi$.

\paragraph{Greedification Operators} The policy improvement theorem and greedification give rise to the most popular class of policy improvement operators, \textit{greedification operators}, which produce policy improvement (Equation \ref{eq:improvement}) specifically by greedification:

\begin{definition}[Greedification Operator]
    If an operator $\I: \Pi \times \Q \to \Pi $ satisfies:
    \begin{align}
        \sum_{a \in \A} \I(\pi, q)(a|s) q(s,a) \geq \sum_{a \in \A} \pi(a|s) q(s,a), \quad \forall \pi \in \Pi, \, \forall q \in \Q, \, \forall s \in S,
        \label{def:go:eq:nonstrict_greedification}
    \end{align}
    as well as $\exists s \in \S $ 
    such that:
    \begin{align}
        \sum_{a \in \A} \I(\pi, q)(a|s) q(s,a) > \sum_{a \in \A} \pi(a|s) q(s,a), \quad \forall \pi \in \Pi, \, \forall q \in \Q,
        \label{def:go:eq:strict_greedification}
    \end{align}
    unless $\pi$ is already greedy with respect to $q$: $ \sum_{a \in \A} \pi(a|s) q(s,a) = \max_a q(s,a), \forall s \in \S $, 
    we call $\I$ a greedification operator.
    \label{def:go}
\end{definition}

The set $ \Q$ denotes all bounded functions $ q: \S \times \A \to \mathbb{R} $.
Since practical operators are not generally designed to distinguish between exact $Q^\pi$ and approximated $q \approx Q^\pi$, we formulate the definition more generally in terms of $ q \in \Q $.
Greedification operators are policy improvement operators for $q = Q^\pi$ (i.e., Theorem \ref{thm:policy_improvement}).
Although most of the analysis in this paper will focus on greedification operators, the problem we point to in our first theoretical result is not unique to greedification and applies to policy improvement operators in general, which motivates us to explicitly distinguish between the two.
We provide an example of a policy improvement operator that is not a greedification operator in Appendix \ref{app:pi_is_not_go}, to demonstrate that greedification operators are a strict subset of policy improvement operators (when $q = Q^\pi$).

Perhaps the most famous greedification operator is \textit{the greedy operator} $ \I_{\argmax}(\pi, q)(s) = \argmax_a q(s,a) $, which drives foundational algorithms such as Value Iteration, Policy Iteration and Q-learning \citep{sutton2018reinforcement}.
Many modern RL methods on the other hand are based in the actor critic (AC) framework, which we generally refer to as the iteration of (approximate) \textit{policy improvement} (improving the actor), (approximate) \textit{policy evaluation} (evaluating the actor, i.e., updating the critic).
ACs traditionally rely on variations of the \textit{policy gradient} operator \citep{policy_gradient}, which is well suited for the greedification of parameterized policies.

Other popular greedification operators are \textit{deterministic greedification} operators $\I_{det}$ \citep{williams1993analysis} which produce policies that are greedier (Equation \ref{eq:greedification}) and deterministic.
The regularized-policy improvement operator \citep[see][]{grill2020monte} used by Gumbel~MuZero \citep{gumbelmuzero} $\I_{gmz}(\pi, q)(s) = \softmax(\sigma(q(s, \cdot)) + \log \pi(s)) $ (for  $\sigma$ a monotonically increasing
transformation).
Best-of-N (BoN), a popular operator in large language model alignment with RL \citep{bon,bon2}.
At a state $s$ BoN samples $N$ actions $A_N = \{a_1,\dots, a_N\}$ from $\pi(s)$ and evaluates them using $q(s,a_i)$. 
BON returns the maximizing action: $\I_{BON}(\pi, q)(s) = \argmax_{a \in A_N} q(s,a) $.
As the sample size $N$ increases, BoN better approximates the $\argmax_{a\in \A} q(s,a)$.

\paragraph{Implicit greedification operators} 
Recently, \cite{iql} proposed that it is also possible to produce \textit{implicit} greedification, by training a critic to approximate the value of a greedier policy directly, without that policy being explicitly defined.
The authors demonstrate that by training a critic $v_\psi$ with the asymmetric expectile loss $\mathcal{L}_2^{\tau}$ on a data set $\mathcal D$ drawn with policy $\pi$,
\begin{align}
    \mathcal{L(\theta)} = \E\big[\mathcal{L}_2^{\tau}\big (v_\psi(s), Q^\pi(s,a)\big) 
        \big| s,a \sim \mathcal D\big] \,, 
    \quad
    \mathcal{L}_2^{\tau}(x, y) = |\tau - \mathbbm{1}_{y - x < 0}| \, (y-x)^2 \,,
\end{align}
for $\tau > \frac{1}{2}$
the critic $v_\psi(s)$ directly estimates the value of a policy than is greedier than $\pi$, with $\tau \to 1$ corresponding to the value of an $\argmax$ policy.
This operator is then used to drive their Implicit Q-learning (IQL) algorithm for offline-RL, where the $\mathcal{L}_2^{\tau}$ enables the critic to approximate the value of an \textit{optimal} policy without the bootstrapping of actions that are out of the training distribution.

\paragraph{Generalized Policy Iteration}
A popular approach for analyzing the underlying convergence behavior of large families of RL algorithms is to analyze the convergence of a DP algorithm which abstracts the underlying learning dynamics of the RL algorithms \citep[for example, see][]{smirnova2019convergence}.
A common setup used for this analysis is that of an approximate Policy Iteration algorithm (sometimes called specifically Optimistic or Modified Policy Iteration, see \citep{bertsekas2011approximate}) which iterates \textit{policy improvement}, \textit{approximate policy evaluation}. 
The policy improvement step most often uses the greedy operator. The policy evaluation step generalizes across Value Iteration and Policy Iteration, by doing a finite number $k$ of Bellman updates with the same policy $ q_{i+1}(s,a) = \T^\pi q_{i}(s,a) = \mathbb{E}[R(s,a)] + \gamma \mathbb{E}_{s' \sim P}[\sum_{a' \in \A} \pi(a'|s') q_i(s',a')], i = 1, \dots, k $.
Since our objective in this work is to analyze the underlying convergence of RL algorithms with respect to as many possible policy improvement operators, we formulate a variation of this algorithm which is generalized to all policy improvement operators $\I$ (Algorithm \ref{alg:api}), which we refer to as Generalized Policy Iteration (GPI).
$\T^*$ denotes the Bellman optimality operator, $  \T^* q_{i}(s,a) = \mathbb{E}[R(s,a)] + \gamma \mathbb{E}_{s' \sim P}[\max_{a' \in \A} q(s',a')] $.
\begin{algorithm}
    \caption{Generalized Policy Iteration}
    \label{alg:api}
    \begin{algorithmic}[1]
        \State For starting functions $q \in \Q, \, \pi \in \Pi $ greedification operator $\I$, $ k \geq 1 $ and $\epsilon > 0$
        \While{$|\sum_{a \in \A} \big (\pi(a|s) q(s,a)\big ) - \max_b q(s, b)| > 0, \forall s \in \S$ and $ |q(s,a) - \T^* q(s,a) | > 0 $}
            \State $ q(s,a) \gets (\T^\pi)^k q(s,a), \, \forall (s,a) \in \S \times \A $
            \label{alg:api:line:eval}
            \State $\pi(s) \gets \I(\pi, q)(s), \, \forall s \in \S$
            \label{line:pi}
        \EndWhile
    \end{algorithmic}
\end{algorithm}
\section{Value Improved Generalized Policy Iteration Algorithms}
\label{section:GOPI}
Since the framework of Value-Improved AC/GPI generalizes AC/GPI beyond algorithms that evaluate their own policy, our first objective is to analyze the the underlying convergence of this family of algorithms under a very general setup.
We begin by extending the DP framework of GPI, which underlies ACs, to a DP framework which underlies VIACs.
The framework is extended by decoupling the improvement of the \textit{acting} policy from that of the \textit{evaluated} policy (line \ref{alg:api:line:eval} in Algorithm \ref{alg:api}).
We refer to this new framework as Value-Improved GPI (Algorithm \ref{alg:vi_api}).
Modifications to the original algorithm are marked in \textcolor{blue}{blue}.
Since the acting and evaluated policies are improved with different operators $\mathcal{I}_1$ and $\mathcal{I}_2$, it is not apparent whether $\pi$ of Algorithm \ref{alg:vi_api} converges to the optimal policy, i.e. whether decoupling the policies is sound. 
Therefore, our aim is to establish general pairs of operators for which this process converges. 
\begin{algorithm}
    \caption{Value-Improved Generalized Policy Iteration}
    \label{alg:vi_api}
    \begin{algorithmic}[1]
        \State For starting vectors $q \in \Q, \, \pi \in \Pi $, policy improvement operator\textcolor{blue}{s} $\I_1, \textcolor{blue}{\I_2}$, $ k \geq 1 $
            \While{$|\sum_{a \in \A} \big (\pi(a|s) q(s,a)\big ) - \max_b q(s, b)| > 0, \forall s \in \S$ and $ |q(s,a) - \T^* q(s,a) | > 0 $}
            \State $ q(s,a) \gets (\T^{\textcolor{blue}{\I_2(\pi, q)}})^k q, \, \forall (s,a) \in \S \times \A $
                \label{alg:vi_opi:eval}
            \State $\pi(s) \gets \I_1(\pi, q)(s), \, \forall s \in \S$
            \EndWhile
    \end{algorithmic}
\end{algorithm}

A fundamental result in RL is that policy iteration algorithms converge to the optimal policy for any policy improvement operator (Definition \ref{def:pio} and by extension, all greedification operators) that produces deterministic policies. 
This holds because a finite MDP has only a finite number of deterministic policies through which the policy iteration process iterates \citep{sutton2018reinforcement}.
This result however does not generalize to operators that produce \textit{stochastic} policies, which are used by many practical RL algorithms such as PPO~\citep{ppo}, MPO~\citep{mpo}, SAC~\citep{sac2}, and Gumbel~MuZero~\citep{gumbelmuzero}.
Our first theoretical result is that for stochastic policies, the policy improvement property (whether satisfied through greedification or not) is not sufficient to guarantee convergence, even in the limiting case of Policy Iteration with exact evaluation.

\begin{theorem}[Improvement is not enough]
    \label{thm:pi_is_insufficient}
    Policy improvement is not a sufficient condition for the convergence of Policy Iteration algorithms (Algorithm \ref{alg:api} with exact evaluation) to the optimal policy for all starting policies $\pi_0 \in \Pi$ in all finite-state MDPs.
\end{theorem}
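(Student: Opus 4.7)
The plan is to construct an explicit counterexample: a finite-state MDP, a starting policy, and a policy improvement operator under which Algorithm~\ref{alg:api} with exact evaluation fails to converge to the optimal policy. The key observation is that Equation~\ref{eq:improvement} only requires $V^{\pi'}\!\ge V^\pi$ and places no lower bound on the size of improvement, so one can engineer an operator whose improvements are strictly positive at every step yet shrink so quickly that the iterates accumulate at a sub-optimal value.

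Concretely, I take a single-state MDP with two actions $a_1, a_2$, deterministic rewards $R(s,a_1)=0$ and $R(s,a_2)=1$, and horizon $H=1$, so that writing $x := \pi(a_2\mid s)$ any policy satisfies $V^\pi(s) = x$ with unique optimum $V^{\pi^*}(s)=1$. Because evaluation is exact, the critic in the loop is always $q = Q^\pi$ with $Q^\pi(s,a_1)=0$ and $Q^\pi(s,a_2)=1$. I then define the piecewise operator
\begin{align*}
\I(\pi, q)(a_2\mid s) \;=\;
\begin{cases}
\tfrac{1}{4} + \tfrac{x}{2}, & x < \tfrac{1}{2},\\
\tfrac{1}{2} + \tfrac{x}{2}, & x \ge \tfrac{1}{2},
\end{cases}
\end{align*}
whose two branches both strictly increase $x$ (and hence the value of the policy) whenever $x<1$, so $\I$ is a strict policy improvement operator in the sense of Equation~\ref{eq:improvement}.

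Starting from $\pi_0$ with $x_0 = 0$, the lower branch applies at every iteration and the update becomes the affine contraction $x_{k+1} = \tfrac{1}{4} + x_k/2$, whose unique fixed point is $\tfrac{1}{2}$. Its closed form $x_k = \tfrac{1}{2}(1-2^{-k})$ stays strictly below $\tfrac{1}{2}$ for all $k$, so the iterates never cross into the upper branch. Consequently $V^{\pi_k}(s) \to \tfrac{1}{2} < 1 = V^{\pi^*}(s)$, and Algorithm~\ref{alg:api} fails to converge to an optimal policy, which is enough to prove the theorem.

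The main conceptual point, and the only real obstacle, is ruling out that a continuous construction might also work: if $\I$ were continuous on the compact policy space, strict improvement would force every accumulation point of the orbit to be a fixed point of $\I$ and hence optimal. The counterexample therefore hinges on the discontinuity of $\I$ at $x=\tfrac{1}{2}$, which traps the iterates in a basin of attraction around a sub-optimal value and realizes the ``infinitesimal improvement'' mechanism mentioned in the introduction.
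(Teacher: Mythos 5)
Your proof is correct and takes essentially the same route as the paper's: both construct a tiny finite MDP in which $Q^\pi$ is independent of $\pi$ and an operator that strictly improves at every step but whose improvements shrink geometrically toward a sub-optimal stochastic policy (the paper contracts toward the softmax policy via $\alpha\pi + (1-\alpha)\softmax(Q^\pi)$, you contract toward the uniform policy via $x \mapsto \tfrac{1}{4} + \tfrac{x}{2}$). Your closing observation that continuity of $\I$ together with strict improvement would rule out such counterexamples is a nice addition not made explicit in the paper, but it does not change the argument.
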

\paragraph{Proof sketch.} With stochastic policies, an infinitesimal policy improvement is possible, which can satisfy the policy improvement condition at every step and yet converge in the limit to policies that are \textit{not} $\argmax$ policies. 
Since every optimal policy is an $\argmax$ policy (note that we define $\argmax$ policies as policies with support only on maximizing actions, not necessarily as deterministic policies), Policy Iteration with such operators cannot be guaranteed to converge to the optimal policy.
For a complete proof see Appendix \ref{proof:pi_is_not_enough}.

\paragraph{Why is this a problem?} 
Many algorithms (for example, GumbelMZ \cite{gumbelmuzero}) are motivated by policy improvement through demonstrating greedification.
Theorem \ref{thm:pi_is_insufficient} demonstrates that this is not sufficient to establish that the resulting policy improvement will lead to an optimal policy.
For that reason, convergence for these algorithms must generally proven individually for each new operator (e.g., see MPO and GreedyAC \cite{greedification_kls}), which is often an arduous and nontrivial process.

Furthermore, Theorem \ref{thm:pi_is_insufficient} and its underlying intuition highlight a critical gap: we currently lack guiding principles for designing novel greedification operators in the form of necessary and sufficient conditions for convergence to the optimal policy.
To illustrate that this can lead to problems in practice, we show in Appendices \ref{proof:sapio_gmz} and \ref{proof:gmz_can_be_insufficient} that choices of the $\sigma$ used by the greedification operator $\I_{gmz}$ can render this operator either sufficient or insufficient.
To address this problem, we identify a necessary condition and two independent sufficient conditions for greedification operators, such that they induce convergence of Algorithm \ref{alg:api}. 
\begin{definition}[Necessary Greedification]
\label{def:necessity}
    In the limit of $n$ applications of a greedification operator $\I$ on a value estimate $q \in \Q$ and a starting policy $\pi_0 \in \Pi$, the policy $\pi_n$ converges to a greedy policy with respect to $q, \, \forall s \in \S$:
\begin{align}
    \lim_{n \to \infty} \sum_{a \in \A} q(s,a) \pi_n(a|s) = \max_a q(s,a), \quad 
    \text{where} \quad \pi_{n+1}(s) = \I(\pi_n, q)(s).
    \label{property:necessary}
\end{align}
\end{definition}

\paragraph{Intuition.}
Since every optimal policy is an $\argmax$ policy (has support only on actions that maximize $Q^*$), if a greedification operator cannot converge to an $\argmax$ policy even in the limit and for a fixed $q$ then this operator cannot converge to an optimal policy in general.
See Appendix \ref{proof:pi_is_not_enough} for a concrete example where such a condition is necessary for convergence of a Policy Iteration algorithm.
It is possible to formulate the same condition more specifically for the set of all $Q$ functions $\{ Q^\pi \,|\, \forall \pi \in \Pi \}$. 
However, since we are interested in algorithms that may not have access to exact values $Q^\pi$, it seems more prudent to define it more generally $\forall q \in \Q$.
Since practical operators are not generally designed to distinguish between exact $Q^\pi$ and approximated $q \approx Q^\pi$, we formulate the definition more generally in terms of $ q \in \Q $.

Unfortunately, the necessary greedification condition is not sufficient, even in the case of exact evaluation. 
This is due to the fact that assuming convergence to a greedy policy in the limit for a \emph{fixed} $q$ function does not necessarily imply the same when the $q$ function changes between iterations. 
There exist settings where the ordering of actions $a, a', q(s,a) < q(s,a') $ can oscillate between iterations, preventing the convergence to greedy policies 
(See Appendix \ref{proof:nec_is_not_suff} for a concrete example).
Below, we identify two additional conditions which are each \emph{sufficient} for convergence in finite horizon and finite action spaces. 
The first condition resolves the issue by lower-bounding the rate of improvement, which guarantees that the oscillation does not continue infinitely. The second simply augments the necessary greedification condition to require convergence for \emph{any} sequence of Q functions.

\begin{definition}[Lower Bounded Greedification]
    We call an operator $\I $ a lower-bounded greedification operator if $\I$ is a greedification operator (Definition\ \ref{def:go}) and for every $q \in \Q $, $ \exists \epsilon > 0, $ such that $\forall s \in \S$ and $\forall \pi \in \Pi$:
    $$\sum_{a \in \A}\I(\pi, q)(a|s)q(s,a) - \sum_{a \in \A}\pi(a|s)q(s,a)  \quad>\quad \epsilon, $$
    unless $\sum_{a \in \A}\I(\pi, q)(a|s)q(s,a) = \max_a q(s,a), \, \forall s \in \S $.
    \label{def:f_sgo}
\end{definition}

\paragraph{Intuition.} 
Since the lower bound $\epsilon$ is constant with respect to a stationary $q$, it eliminates the possibility of infinitesimal improvements and guarantees convergence to an $\argmax$ policy in finite iterations with respect to the stationary $q$ (See Lemma \ref{lemma:finite_time_convergence} and Appendix \ref{proof:convergence_w_bounded_greedification} for proof).
We note that this definition does not guarantee convergence to an optimal policy nor an $\argmax$ policy with respect to a \textit{non-stationary} $q_n$.
\begin{definition}[Limit-Sufficient Greedification]
\label{def:inf_sgo}
Let $q_0, q_1,\dots \in \Q $ be a sequence of functions such that $\lim_{n \to \infty} q_n = q $ for some $ q \in \Q $.
Let $ \pi_0, \pi_1, \dots$ be a sequence of policies where $ \pi_{n+1} = \I(\pi_{n}, q_{n+1}) $ for some operator $\I$.
    We call an operator $\I $ a Sufficient greedification operator if $\I$ is a greedification operator (Definition \ref{def:go}) and in the limit $n \to \infty$ the improved policy $\pi_{n+1}$ converges to a greedy policy with respect to the limiting value $q, \, \forall s \in \S$: 
    \begin{align}
        \lim_{n \to \infty} \sum_{a \in \A} \pi_n(a|s)q_n(s,a)  = \max_a q(s,a).
    \end{align}
\end{definition}
\vspace{-1mm}

\paragraph{Intuition.} Even in the presence of infinitesimal improvement and non-stationary estimates $q_n$, a limit sufficient greedification operator is guaranteed to converge to a greedy policy in the limit $n \to \infty$, as long as there exists a limiting value on the sequence of value estimates $\lim_{n \to \infty} q_n = q$.

\paragraph{Practical operators that are sufficient operators}
Lower bounded greedification is used to establish convergence for MPO (see Appendix A.2, Proposition 3 of \citep{mpo}).
Similarly, deterministic operators $\I_{det}$ are also lower bounded greedification operators (see Appendix \ref{app:det_go_is_bgo} for proof).
Lower bounded greedification operators however cannot contain operators that induce convergence to the greedy policy \textit{only} in the limit, because the convergence they induce is in finite steps.
$\I_{gmz}$ on the other hand induces convergence only in the limit, and in fact is more generally a limit-sufficient greedification operator (see Appendix \ref{proof:sapio_gmz} for proof).
The deterministic greedification operator on the other hand does not converge with respect to arbitrary non-stationary sequences $\lim_{n \to \infty}q_n$ (see Appendix \ref{proof:limit_suff_is_not_bounded_greedification}), which leads us to conclude that both sets are useful in that they both contain practical operators and neither set contain the other.
The greedy operator on the other hand is a member of \textit{both} sets, demonstrating that the sets are also not disjoint (see Appendix \ref{proof:greedy} for proof).

Equipped with Definitions \ref{def:f_sgo} and \ref{def:inf_sgo} we establish our main theoretical result, convergence for both Algorithms \ref{alg:api}  and \ref{alg:vi_api} for all sufficient greedification operators:
\begin{theorem}[Convergence of Algorithms \ref{alg:api}  and \ref{alg:vi_api}]
    \label{thm:api_converges}
    Generalized Policy Iteration algorithms and their Value Improved extension (Algorithms \ref{alg:api} and \ref{alg:vi_api} respectively) converge for sufficient greedification operators, in finite iterations (for operators defined in Definition \ref{def:f_sgo}) or in the limit (for operators defined in Definition \ref{def:inf_sgo}), in finite-horizon MDPs.
\end{theorem}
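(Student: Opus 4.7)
My plan is a three-stage argument: first establish a monotone improvement chain for the value of the evaluated policy, then use the finite-horizon structure and boundedness to identify a limit, and finally invoke the sufficient-operator conditions to pin that limit down as $Q^*$.

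For stage one, I first handle Algorithm~\ref{alg:api}. With any greedification operator $\I$, the policy update yields $V^{\pi_{n+1}} \geq V^{\pi_n}$ by the policy improvement theorem, provided the critic $q_n$ is close enough to $Q^{\pi_n}$. Since $\T^\pi$ is a contraction (geometrically with rate $\gamma$ when $\gamma<1$, and exactly after $H$ iterations in the finite-horizon case), the $k$ repeated Bellman backups in line~\ref{alg:api:line:eval} shrink the residual $\|q_n - Q^{\pi_n}\|$ along the iterates. I then extend to Algorithm~\ref{alg:vi_api} by observing that the map $(\pi, q) \mapsto \I_2(\I_1(\pi, q), q)$ is itself a greedification operator, since two successive greedifications with respect to the same $q$ compose: $\sum_a \I_2(\I_1(\pi,q),q)(a|s)\, q(s,a) \geq \sum_a \I_1(\pi,q)(a|s)\, q(s,a) \geq \sum_a \pi(a|s)\, q(s,a)$ for every $s$. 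The composite inherits boundedness (Definition~\ref{def:f_sgo}) or limit-sufficiency (Definition~\ref{def:inf_sgo}) from $\I_2$, because the additional greedification step can only strengthen the inequality. Consequently, the evaluated policy $\tilde{\pi}_{n+1} := \I_2(\I_1(\pi_n, q_n), q_n)$ of Algorithm~\ref{alg:vi_api} can be analysed exactly as the improved policy of Algorithm~\ref{alg:api} under this composite operator.

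For stage two, $\{V^{\tilde{\pi}_n}\}$ is monotonically non-decreasing and uniformly bounded above by $V^*$, hence converges to some $V^\infty$; combined with the contractive evaluation step, $q_n$ likewise converges to some $q^\infty$ satisfying a Bellman consistency equation with respect to the limiting policies. For stage three I split by operator type. If $\I_2$ is a bounded greedification operator, each non-greedy iteration contributes strict improvement of at least $\epsilon > 0$ in the greedification inequality; combined with the uniformly bounded potential for improvement, this forces the process to reach a policy greedy with respect to the current $q_n$ in finitely many iterations, and a greedy fixed point in a finite MDP is necessarily optimal. If $\I_2$ is limit-sufficient, I feed the convergent sequence $q_0, q_1, \ldots \to q^\infty$ together with the induced policies $\tilde{\pi}_n$ directly into Definition~\ref{def:inf_sgo} to conclude $\lim_{n\to\infty} \sum_a \tilde{\pi}_n(a|s)\, q_n(s,a) = \max_a q^\infty(s,a)$ for every $s$; the Bellman optimality equation for $q^\infty$ then follows, giving $q^\infty = Q^*$ and $\tilde{\pi}_n \to \pi^*$.

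The main obstacle I anticipate is the imperfect evaluation for small $k$: $q_n$ is not exactly $Q^{\pi_n}$, so the policy improvement theorem does not apply verbatim to $q_n$ and one must track evaluation error alongside greedification. I plan to handle this by decomposing each iterate into an evaluation-error term that contracts geometrically under the subsequent Bellman backups and a policy-improvement term whose monotonicity is preserved. In the bounded-operator case the finite-termination argument sidesteps the issue entirely, while in the limit-sufficient case the joint convergence of $q_n$ and $\tilde{\pi}_n$ feeds directly into the hypothesis of Definition~\ref{def:inf_sgo}.
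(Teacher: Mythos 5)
Your proposal takes a genuinely different route from the paper --- a global monotone-improvement argument rather than the paper's backwards induction over the time index $t = H-1, \dots, 0$ --- but as written it has gaps that the paper's structure is specifically designed to avoid.

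First, the monotone chain $V^{\pi_{n+1}} \geq V^{\pi_n}$ is not available here. The operator greedifies with respect to $q_n$, not $Q^{\pi_n}$, and for $k < H$ the evaluation is never exact, so the policy improvement theorem does not apply and monotonicity of $V^{\pi_n}$ can genuinely fail (this is the classical difficulty with Optimistic/Modified Policy Iteration). Your proposed fix --- decomposing into a contracting evaluation-error term and a monotone improvement term --- is exactly the hard part and is left unexecuted. Second, stage two asserts that $q_n$ converges to some $q^\infty$, but this is needed as a \emph{hypothesis} of Definition~\ref{def:inf_sgo} before you can invoke limit-sufficiency in stage three; without an independent argument for convergence of $q_n$ the limit-sufficient branch is circular. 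The paper resolves both problems at once by inducting backwards from terminal states: $q^m_{H-1} = \R_{H-1} = q^*_{H-1}$ holds exactly and immediately, the sufficiency condition then forces $\pi^m_{H-1}$ toward the greedy policy, and an $\epsilon$-bound of the form $\norm{q^{m+1}_t - q^*_t} \leq \norm{\pi^{m+1}_{t+1}\cdot q^m_{t+1} - \max_a q^*_{t+1}}$ propagates convergence one step earlier in the horizon. No global monotonicity or a priori convergence of $q_n$ is ever needed.

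Third, your reduction of Algorithm~\ref{alg:vi_api} to Algorithm~\ref{alg:api} via the composite $(\pi,q) \mapsto \I_2(\I_1(\pi,q),q)$ mis-models the algorithm. The policy carried to the next iteration is $\I_1(\pi_n, q_n)$, not the composite's output: the $\I_2$-improved policy is used once for the Bellman backup and then discarded, so the recursion $\tilde\pi_{n+1} = \I_c(\tilde\pi_n, q_{n+1})$ required by Definitions~\ref{def:necessity} and~\ref{def:inf_sgo} never occurs. You also attribute the sufficiency requirement to $\I_2$, whereas the paper's Corollary~\ref{cor:vi_need_not_sgo} shows the opposite division of labor: $\I_1$ must be sufficient, and $\I_2$ only needs to be non-detrimental (the non-strict inequality of Equation~\ref{def:go:eq:nonstrict_greedification}), because in the inductive step the evaluated policy only needs to be at least as greedy as the acting policy for the $\epsilon$-bound to go through. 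Your bounded-operator branch (strict $\epsilon$-improvement plus boundedness forces finite termination) is essentially the paper's Lemma on finite-time convergence and is sound in spirit, but it too needs to be run inside the per-time-step induction rather than globally.
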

\textbf{Proof sketch:} Using induction from terminal states, the proof builds on the immediate convergence of values of terminal states $ s_H $, convergence of policies at states $ s_{H-1} $ and finally on showing that given that $q, \pi$ converge for all states $s_{t+1}$, they also converge for all states $s_t$ (in finite iterations or in the limit, respectively).
The evaluation of a \textit{greedier} policy (value improvement, line \ref{alg:vi_opi:eval} in Algorithm \ref{alg:vi_api}) is accepted by the induction that underlies the convergence of Algorithm \ref{alg:api} which allows us to use the same method to establish convergence for Algorithm \ref{alg:vi_api}.
The full proof is provided in the Appendix.
In \ref{proof:thm_1} for Algorithm \ref{alg:api} with limit sufficient operators and $k=1$, extended to $k \geq 1$ in~\ref{proof:thm_1_extended}, to 
Value-Improved algorithms in~\ref{proof:thm_2},
and to lower-bounded operators in~\ref{proof:convergence_w_bounded_greedification}.

A corollary of $\I_{gmz}$ being a limit-sufficient greedification operator along with Theorem \ref{thm:api_converges} is the convergence of a process underlying the Gumbel~MuZero algorithm. 
\begin{corollary}
The Generalized Policy Iteration process underlying the Gumbel~MuZero algorithm family converges to the optimal policy for finite horizon MDPs, for all $\pi_0 \in \Pi $ such that $ \log \pi(a|s) $ is defined $ \forall s \in \S, a \in \A$.
\label{cor:convergence_gmz}
\end{corollary}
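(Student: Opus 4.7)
}
The plan is to derive the corollary essentially as a direct consequence of Theorem \ref{thm:api_converges} combined with the classification of $\I_{gmz}$ as a limit-sufficient greedification operator (Supplementary Materials \ref{proof:sapio_gmz}). First, I would invoke the latter to conclude that $\I_{gmz}$ satisfies Definition \ref{def:inf_sgo}; this is exactly the hypothesis Theorem \ref{thm:api_converges} requires of the greedification operator driving Algorithm \ref{alg:api}. The Generalized Policy Iteration process underlying the Gumbel~MuZero family is precisely Algorithm \ref{alg:api} instantiated with $\I = \I_{gmz}$, so the theorem yields convergence of $(\pi_n, q_n)$ to an $\argmax$ policy and its value in finite-horizon MDPs.

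The second step is to verify that the iterates remain in the domain on which $\I_{gmz}$ is defined. Since $\I_{gmz}(\pi, q)(s) = \softmax\bigl(\sigma(q(s,\cdot)) + \log \pi(s)\bigr)$, we need $\log \pi_n(a|s)$ to be finite at every iteration. I would argue inductively: given $\pi_0$ with strictly positive entries by assumption, and given that in a finite-horizon MDP with bounded rewards the iterates $q_n$ stay in a bounded subset of $\Q$, $\sigma(q_n(s,\cdot))$ is finite; the softmax of a finite vector yields strictly positive probabilities. Therefore $\pi_{n+1}(a|s) > 0$ for all $s,a$, and $\log \pi_{n+1}$ is well defined, closing the induction.

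Finally, I would note that the appeal to Theorem \ref{thm:api_converges} covers arbitrary evaluation depth $k \geq 1$ (using the extension in Supplementary Materials \ref{proof:thm_1_extended}), which is the setting relevant to MuZero-style implementations that do not evaluate policies exactly. The main obstacle is not the convergence argument itself, which is essentially packaged by Theorem \ref{thm:api_converges}, but rather the bookkeeping needed to check that the specific operator $\I_{gmz}$ fits the abstract framework: namely (i) that $\sigma$ is chosen so that $\I_{gmz}$ is indeed a limit-sufficient greedification operator (this is exactly the content of \ref{proof:sapio_gmz}, where viable $\sigma$'s are characterised), and (ii) that the support condition on $\pi_0$ propagates through iterations so the operator never becomes ill-defined. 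Once those two points are addressed, the corollary is immediate.
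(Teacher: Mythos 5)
Your proposal is correct and follows essentially the same route as the paper: the corollary is obtained by combining the lemma that $\I_{gmz}$ (with a stationary, monotonically increasing $\sigma$) is a limit-sufficient greedification operator with Theorem \ref{thm:api_converges}. Your additional inductive check that the softmax keeps $\pi_n$ strictly positive, so $\log\pi_n$ stays defined at every iteration, is a worthwhile piece of bookkeeping that the paper leaves implicit in the hypothesis on $\pi_0$, but it does not change the argument.
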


Interestingly, $\I_2$ does not need to be a sufficient or even a necessary greedification operator for convergence to the optimal policy, as is established by the following corollary:
\begin{corollary}
Algorithm \ref{alg:vi_api} converges to the optimal policy for any non-detriment operator $\I_2$ (e.g., operators that satisfy the non-strict inequality of Equation \ref{def:go:eq:nonstrict_greedification}), as long as $\I_1$ is itself sufficient.
\label{cor:vi_need_not_sgo}
\end{corollary}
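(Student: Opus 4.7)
The plan is to revisit the backward induction that underlies the proof of Theorem \ref{thm:api_converges} for Algorithm \ref{alg:vi_api} and isolate exactly which property of $\I_2$ the induction actually consumes. The claim I want to establish along the way is that the induction only uses the non-strict inequality (Equation \ref{def:go:eq:nonstrict_greedification}) on the policy fed into the Bellman operator, while all of the actual greedification is done by $\I_1$. Thus the proof should reduce to instantiating the existing induction with a very slightly weaker assumption at the evaluation step.

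Concretely, fix a horizon step $t$ and assume (inductive hypothesis) that along the sequence $(\pi_n, q_n)$ produced by Algorithm \ref{alg:vi_api} we have $\pi_n(s') \to \pi^\ast(s')$ and $q_n(s', a') \to q^\ast(s', a')$ for every $s'$ reachable only at steps $t+1, \dots, H$; the base case at $t = H$ is immediate since terminal $q$ is determined by reward alone. The key quantity to control is the inner expectation appearing in line \ref{alg:vi_opi:eval},
\begin{align*}
    U_n(s') \;=\; \sum_{a' \in \A} \I_2(\pi_n, q_n)(a'|s') \, q_n(s', a') \,.
\end{align*}
Non-detriment of $\I_2$ yields the lower bound $U_n(s') \geq \sum_{a'} \pi_n(a'|s') q_n(s', a')$, while the definition of maximum gives $U_n(s') \leq \max_{a'} q_n(s', a')$. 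Both bounds converge to $V^\ast(s')$: the upper by continuity of $\max$ on $\mathbb{R}^{|\A|}$ together with $q_n(s', \cdot) \to q^\ast(s', \cdot)$; the lower because $\pi_n(s')$ converges to the $\argmax$ policy $\pi^\ast(s')$ in the simplex (jointly with $q_n$). Squeezing gives $U_n(s') \to V^\ast(s')$. Plugging this into the Bellman update $q_{n+1}(s_t, a) = \E[R(s_t, a)] + \gamma\, \E_{s' \sim P(s_t, a)}[U_n(s')]$, iterated $k$ times with the same evaluated policy held fixed across the inner sweeps (so each intermediate Bellman application inherits the same squeeze), yields $q_n(s_t, a) \to q^\ast(s_t, a)$ for every $a$. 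Once the critic has converged at $s_t$, sufficiency of $\I_1$ (Definition \ref{def:f_sgo} or \ref{def:inf_sgo}) delivers $\pi_n(s_t) \to \pi^\ast(s_t)$ exactly as in the proof of Theorem \ref{thm:api_converges}, closing the inductive step and propagating back to $t = 0$.

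The main obstacle I expect is not in the squeeze itself, which is elementary, but in verifying that the squeeze can be iterated across the $k$ inner Bellman applications without a loss of rate: because $\I_2(\pi_n, q_n)$ is held constant over those $k$ steps but $q_n$ is not the limiting $q^\ast$, the intermediate values $q_n^{(i)}$ need to remain in a neighbourhood of $q^\ast$ on steps $\geq t+1$. This is already ensured by the inductive hypothesis (those entries are unchanged by the sweep restricted to $s_t$), so the bound transports cleanly. No additional structure on $\I_2$ beyond the non-strict inequality of Equation \ref{def:go:eq:nonstrict_greedification} is required, which gives the corollary.
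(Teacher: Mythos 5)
Your proposal is correct and follows essentially the same route as the paper: the paper's own proof of this corollary simply observes that the inductive argument for Theorem \ref{thm:api_converges} already sandwiches the evaluated policy's value between $[\pi^{m}_{t+1}\cdot q^{m}_{t+1}](s)$ (via the non-strict inequality of Equation \ref{def:go:eq:nonstrict_greedification}) and $\max_{a'} q^{m}_{t+1}(s,a')$, which is exactly the squeeze on your $U_n(s')$, so only non-detriment of $\I_2$ is ever consumed. The only (harmless) imprecisions are that the induction hypothesis gives convergence of the scalar $\sum_a \pi_n(a|s')q_n(s',a')$ rather than convergence of $\pi_n(s')$ in the simplex, and that the $k$ inner Bellman sweeps update all states rather than only $s_t$ — both points are already covered by the paper's extended induction hypothesis and do not affect the argument.
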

\par\vspace{-1mm}
For proof see Appendix \ref{proof:thm_2}.
As another consequence, Corollary \ref{cor:vi_need_not_sgo} also establishes that the algorithm converges when implicit greedification operators such as the expectile loss are used for value improvement.
Motivated that the Generalized Policy Iteration process underlying VIAC algorithms converges, we proceed to empirically evaluate practical RL VIAC algorithms.
\section{Value Improved Actor Critic Algorithms}
\label{section:VI_AC}
Value-improvement can be incorporated into existing AC algorithms in one of two ways: (i) Incorporating an additional \textit{explicit} greedification operator to produce a greedier evaluation policy, and use the greedier policy to bootstrap actions from which to generate value targets.
(ii) Incorporating an \textit{implicit} greedification operator, for example by replacing the value loss with an asymmetric loss (Algorithms \ref{alg:vi_ac} and \ref{alg:implicit_vi_ac} respectively in Appendix \ref{app:vi_ac_psudocode} and implementation details in Appendix \ref{appendix:experimental_details}).
We begin by testing the hypothesis that additional greedification of the evaluation policy (e.g., value improvement) can directly lead to accelerated learning by extending TD3 \citep{td3} with value improvement (VI-TD3, Algorithm \ref{alg:vi_ac}).
\begin{figure}[H]
    \vspace{-3mm}
    \centering
    \includegraphics[width=1\linewidth]{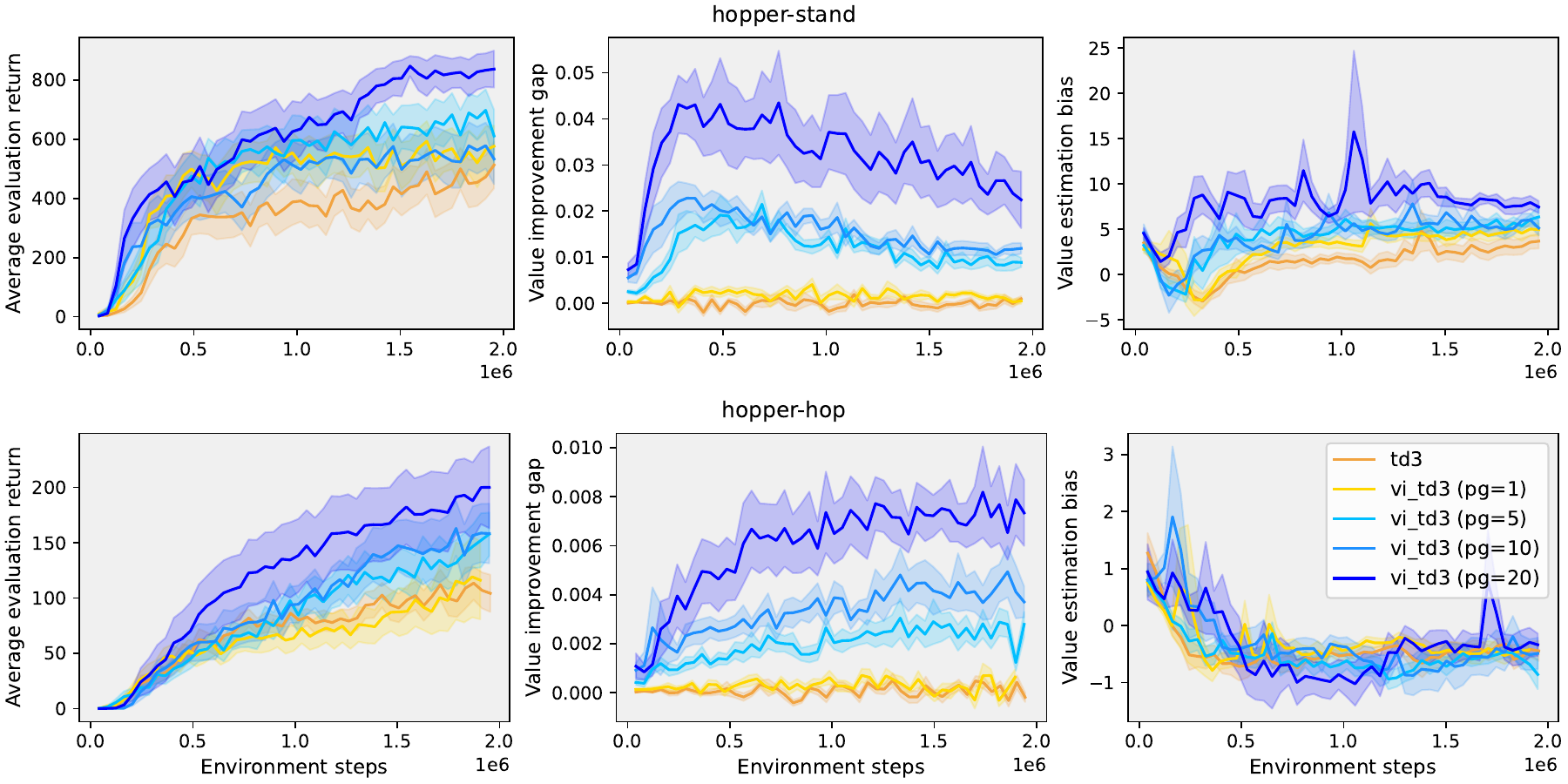}
    \vspace{-5mm}
    \caption{
    Mean and one standard error in the shaded area across 10 seeds for VI-TD3 with $\I_2$ the deterministic policy gradient and increasing number of gradient steps (pg=n), with baseline (pg=0) TD3 for reference.
    }
    \label{fig:pg_greedification}
    \vspace{-3mm}
\end{figure}

We choose the same improvement operator already used by TD3 $\I_2 = \I_1$, the deterministic policy gradient, as a value improvement operator, in order to decouple possible effects stemming from the combination of different operators.
In order to compare different degrees of greedification, a different number $pg=n$ of repeating gradient steps with respect to the same batch are applied to the \textit{evaluated} policy, which is then discarded after each use. 
We evaluate the performance of the agents in classic control environments from the DeepMind continuous control benchmark \citep{dm_control}.
Results are presented in Figure \ref{fig:pg_greedification}, where performance increases with greedification (left).

As expected, greedier targets are larger targets (center), that is the difference between the value bootstrap that uses the greedier policy $\pi'$ and the baseline bootstrap increases with greedification.
An interaction with over-estimation bias (which we compute in the same manner as \cite{redq}) exists in some environments like hopper-stand, but cannot explain the improved performance exclusively, as shown in the hopper-hop environment (Figure \ref{fig:pg_greedification}, right).

\begin{figure}[ht]
    \centering
    \includegraphics[width=1.0\linewidth]{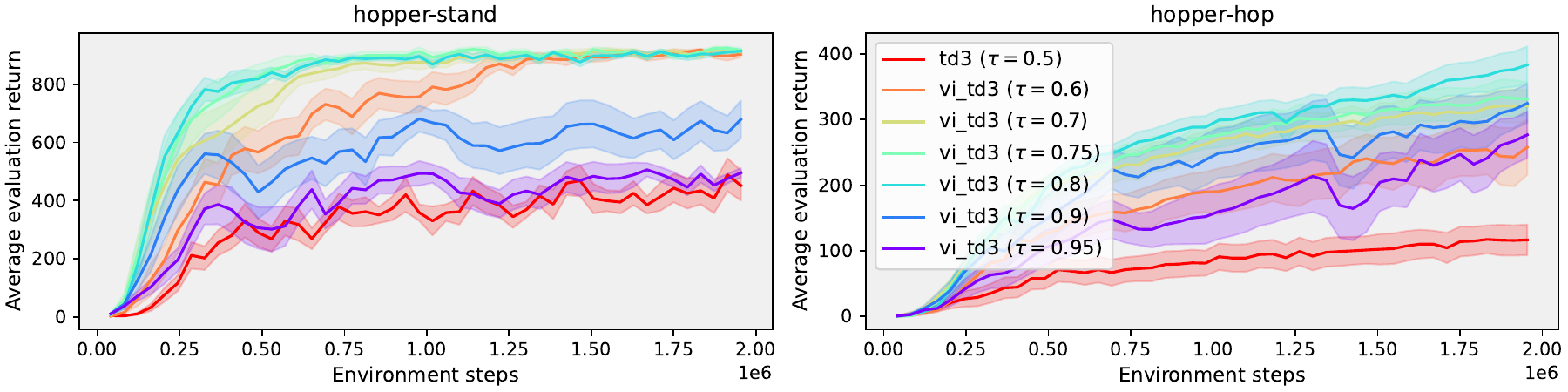}
    \vspace{-2.5mm}
    \caption{
    Mean and one standard error across 10 seeds for VI-TD3 with expectile loss with different values of the expectile parameter $\tau$. 
    Performance increases up to $\tau=0.8$ and then decays.
    }
    \label{fig:tau_ablations}
    \vspace{-2.5mm}
\end{figure}

Repeating gradient steps are very computationally expensive however.
Implicit policy improvement on the other hand provides value improvement for negligible compute and implementation cost. 
In addition, the greedification amount can be chosen with the greedification parameter $\tau$ directly.
In Figure \ref{fig:tau_ablations} we evaluate VI-TD3 with implicit value improvement (Algorithm \ref{alg:implicit_vi_ac}) based in the expectile loss operator and increasing values of the greedification-parameter $\tau$.
The increased greedification monotonically improves performance up to a point, from which performance monotonically degrades, suggesting that the greedification of the evaluated policy can be tuned as a hyperparameter.
This supports the conclusions of previous literature that there is a tradeoff between stability and greedification, and suggests that this tradeoff can be at least partially addressed with value improvement.

\begin{figure}[ht]
    \centering
    \vspace{-2.5mm}
    \includegraphics[width=1\linewidth]{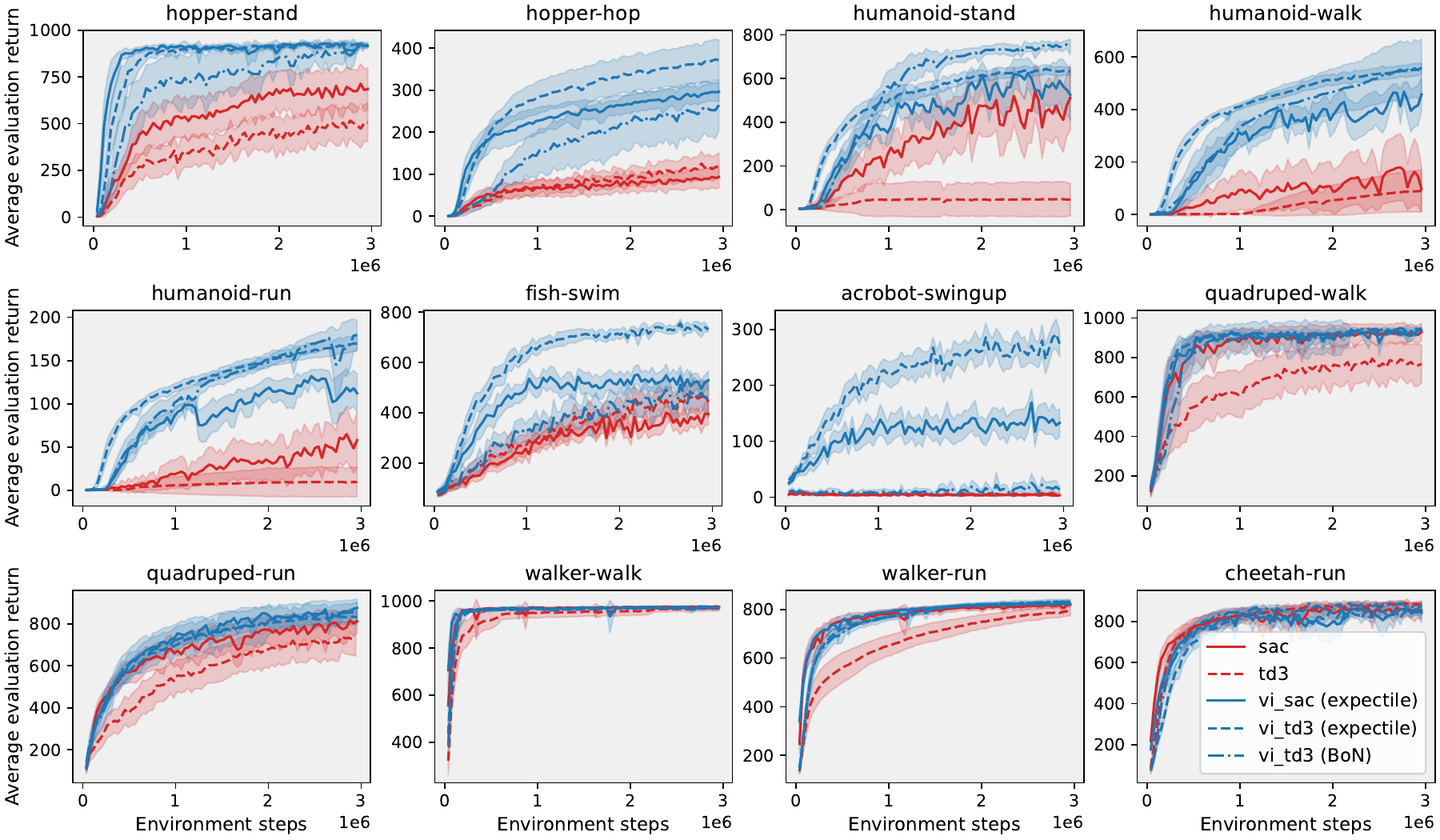}
    \vspace{-5mm}
    \caption{
    Baseline TD3 (dashed) and SAC (solid) in red vs. VI-TD3/SAC (blue) with expectile loss and BoN (dot-dashed).
    Mean and two standard errors ($\approx 95\%$ Gaussian CI) in the shaded area of evaluation curves across 10 seeds for BoN and 20 for the other agents.
    }
    \label{fig:results_vi_td3_vi_sac}
    \vspace{-3mm}
\end{figure}

In Figure \ref{fig:results_vi_td3_vi_sac} we compare baseline TD3 and SAC \citep{sac2} to VI-TD3 and VI-SAC with implicit value improvement ($\tau=0.75$, blue) across a larger number of control environments.
Across all environments, the VI (blue) variations significantly outperform or matches their respective baseline (red) while introducing negligible additional compute and implementation cost.

The significant performance increase provided by the expectile operator in Figure \ref{fig:results_vi_td3_vi_sac} raises the possibility that the performance gains are not due to the greedification of the evaluated policy, however.
Rather, the implicit expectile operator itself may be responsible for the performance gain.
To investigate whether this is the case, we include an additional VI-TD3 variation with an explicit improvement operator operator: Best-of-N (BoN, dot-dashed, see Section \ref{section:background}).
BoN-based VI-TD3 ($N = 64$) provides similar performance gains to expectile-based VI-TD3 in most environments, demonstrating that significant performance gains in these domains are not unique to the expectile operator.

Along similar lines, although Figure \ref{fig:pg_greedification} demonstrates that it is possible to see performance gains that are decoupled from overestimation increase, that may not be the case for the expectile operator.
It is possible that the (overall more significant) performance gains of the expectile operator are instead resulting from strongly-increased over-estimation bias.
In Figure~\ref{fig:all_over_estimation_bias_ablations} in Appendix~\ref{app:additional_albations} we investigate in more detail the interaction between greedification, performance and over estimation bias per environment for the expectile operator.
We find that in the majority of the environments significant performance gains are observed without any over estimation increase. 
We summarize these results in Table \ref{table:over_estimation_summary} in Appendix~\ref{app:additional_albations}.

We include additional results in Appendix \ref{app:additional_albations}:
(i) 
In Figure \ref{fig:td7_results} we include results for the recent algorithm TD7 \citep{td7}, which shows similar performance gains with value improvement.
(ii) 
In Figure \ref{fig:stupidity_ablation} we investigate increasing the greediness of the update to the parameterized acting policy.
Repeating gradient steps on the same batch do not effectively increase sample efficiency, as discussed in Section \ref{section:introduction}.
(iii) 
In Figure \ref{fig:stability_ablations} we investigate the tradeoff between spending the compute of additional gradient steps for value improvement vs. for increased replay ratio. 
In line with similar findings in literature \citep{redq}, replay ratio provides a strong performance gain for small ratios. However, as the ratio increases, performance degrades, a result which the literature generally attributes to instability.
The VI agent on the other hand does not degrade with the same increase to compute.
(iv) 
In Figure \ref{fig:dp_pi_vs_vi} we investigate more generally how the greedification of the acting policy compares to the greedification of the evaluated policy - is one more important than the other? Can value improvement replace policy improvement?
Our results suggest that as long as the Value-Improvement operator is rooted in a policy $\pi_i$ maintained during the learning process, the rate at which this policy is improved is the more impactful factor.
On the other hand, slow policy improvement does not prevent value-improvement from significantly accelerating the learning process.
\section{Related Work}
\label{section:related_work}
VIAC generalizes the underlying learning scheme of several recent methods.
XQL \citep{extreme_ql}, SQL, and EQL \citep{eql_sql} build on the implicit policy improvement setup of IQL to design algorithms that iterate a \textit{value improvement $\rightarrow$ policy extraction} loop, rather than the \textit{policy improvement $\rightarrow$ policy evaluation} loop of standard AC methods.
When the policy extraction step (learning a policy given a value function) can be interpreted as a policy improvement operator, these algorithms become members of the VIAC family with specific instantiations of operators. BEE \citep{bee} is another recent method that explicitly considers a value target that includes its own maximization operator, albeit from the perspective of mixing targets that take into account exploration / exploitation tradeoffs in the environment.
Similarly, OBAC \cite{obac} learns a pair of value functions, one for the acting policy and one using implicit policy improvement over data from the replay buffer and uses both pairs of value functions to train the acting policy.

In model-based RL, it is popular to employ improvement operators for action selection and policy improvement, and sometimes even to generate value targets \citep[e.g., value improvement, for example see][]{moerland2023model}.
The more common setup employs the same operator for action selection and policy improvement (for example, AlphaZero \citep{AlphaZero}). 
MuZero Reanalyze \citep{schrittwieser2021online} is an example of an algorithm that considers using the same operator (tree search in this case) to produce value targets as well, and thus can be thought of as belonging to the setup of VIAC.
These algorithms however are traditionally motivated from the perspective that the acting policy, target policy and evaluated policy all coincide as they are all produced by the same operator.

TD3 can be thought of as an example of an agent which acts, improves, and evaluates three different policies: 
The acting policy is improved using the \textit{deterministic policy gradient}, during action selection the acting policy is modified with noise in order to induce exploration, 
and finally the evaluated policy is regularized with a differently-parameterized noise in order to improve learning stability.
Although only one policy improvement operator is used, TD3 can be thought of as an algorithm which decouples the acting policy from the evaluated policy.
GreedyAC \citep{greedy_ac} inspired a family of algorithms \citep[see][]{fat_to_thin} which share similarities with the VIAC framework in that they explicitly maintain two different policies, although both policies are used for policy improvement, as opposed to value improvement.
\section{Conclusions}
\label{Conclusions}
In this work we have proposed to decouple the evaluated policy from the acting policy and apply policy improvement additionally to the evaluated policy (\textit{value improvement})
in order to better control the tradeoff between greedification and stability in Actor Critic (AC) algorithms.
We refer to this approach as Value-Improved AC (VIAC).
VIAC provides a unified perspective on recent online and offline RL algorithms which combine different improvement operators \citep{extreme_ql,eql_sql,bee,xuuni}.
We've shown that policy improvement is not a sufficient condition for convergence of Dynamic Programming / RL algorithms with stochastic policies.
We've identified necessary and sufficient conditions for convergence of such algorithms and demonstrated that Generalized Policy Iteration (Algorithm \ref{alg:api}), which underlies the learning dynamics of ACs, converges to the optimal policy for such sufficient greedification operators in the finite horizon domain.
With this groundwork laid, we've demonstrated that Value-Improved Generalized Policy Iteration  (Algorithm \ref{alg:vi_api}), which respectively underlies VIACs, converges to the optimal policy in the finite horizon domain.

Empirically, VI-TD3 and VI-SAC significantly improve upon or match the performance of their respective baselines in all DeepMind control environments tested with negligible increase in compute and implementation costs.
We hope that our work will act as motivation to design future AC and general Approximate Policy Iteration algorithms with multiple improvement operators in mind, as well as extend existing algorithms with value-improvement.

\section*{Acknowledgments}
We thank Joery de Vries and Caroline Horsch for timely assistance with compute. 
Mustafa Mert Çelikok was partially funded by the Hybrid Intelligence Center,
a 10-year programme funded by the Dutch Ministry of Education, Culture and Science through the Netherlands Organisation for Scientific Research, https://hybridintelligence-centre.nl, grant number 024.004.022.
The project has received funding from the EU Horizon 2020 programme under grant
number 964505 (Epistemic AI). 
The computational resources for the experiments were provided by the Delft High Performance Computing Centre (DHPC) and the Delft Artificial Intelligence Cluster (DAIC).

\bibliography{main}
\bibliographystyle{rlj}


\newpage
\section*{NeurIPS Paper Checklist}

\begin{enumerate}

\item {\bf Claims}
    \item[] Question: Do the main claims made in the abstract and introduction accurately reflect the paper's contributions and scope?
    \item[] Answer: \answerYes{} 
    \item[] Justification: In Section 4 Figure 3 we demonstrate that TD3 and SAC with value improvement significantly improve or match performance over baseline in mujoco. In Section 4 Figure 1 and 2 we demonstrate that it is possible to directly increase sample efficiency through greedification of the evaluation policy (value improvement) up to a point from which performance degrades, suggesting that greedification can be tuned as a hyperparameter. In Section 3 Theorem 2 and its proof in Appendix \ref{proof:pi_is_not_enough} prove that improvement is not enough for convergence to the optimal policy for policy iteration algorithms. 
    The necessary and sufficient conditions identified (Definitions \ref{def:necessity}, \ref{def:f_sgo} and \ref{def:inf_sgo}) and the operators belonging to them are supported proofs in Appendix \ref{appendix_proofs}.
    Specifically, the necessary condition is supported by the example in Appendix \ref{proof:pi_is_not_enough}.
    That it is not sufficient is proven in Appendix \ref{proof:nec_is_not_suff}.
    That the sufficient conditions are sufficient as well as convergence of Algorithms \ref{alg:api} and \ref{alg:vi_api} (Theorem \ref{thm:api_converges}) is proven in Appendices: \ref{proof:thm_1} for Algorithm \ref{alg:api} with limit sufficient operators and $k=1$, extended to $k \geq 1$ in \ref{proof:thm_1_extended}, to 
Value-Improved algorithms in \ref{proof:thm_2},
and to lower-bounded operators in \ref{proof:convergence_w_bounded_greedification}.

    \item[] Guidelines:
    \begin{itemize}
        \item The answer NA means that the abstract and introduction do not include the claims made in the paper.
        \item The abstract and/or introduction should clearly state the claims made, including the contributions made in the paper and important assumptions and limitations. A No or NA answer to this question will not be perceived well by the reviewers. 
        \item The claims made should match theoretical and experimental results, and reflect how much the results can be expected to generalize to other settings. 
        \item It is fine to include aspirational goals as motivation as long as it is clear that these goals are not attained by the paper. 
    \end{itemize}

\item {\bf Limitations}
    \item[] Question: Does the paper discuss the limitations of the work performed by the authors?
    \item[] Answer: \answerYes{} 
    \item[] Justification: Our theoretical analysis is limited to finite horizon MDPs with finite action spaces and discrete state spaces, which is described in the background and in the theorems.
    \item[] Guidelines:
    \begin{itemize}
        \item The answer NA means that the paper has no limitation while the answer No means that the paper has limitations, but those are not discussed in the paper. 
        \item The authors are encouraged to create a separate "Limitations" section in their paper.
        \item The paper should point out any strong assumptions and how robust the results are to violations of these assumptions (e.g., independence assumptions, noiseless settings, model well-specification, asymptotic approximations only holding locally). The authors should reflect on how these assumptions might be violated in practice and what the implications would be.
        \item The authors should reflect on the scope of the claims made, e.g., if the approach was only tested on a few datasets or with a few runs. In general, empirical results often depend on implicit assumptions, which should be articulated.
        \item The authors should reflect on the factors that influence the performance of the approach. For example, a facial recognition algorithm may perform poorly when image resolution is low or images are taken in low lighting. Or a speech-to-text system might not be used reliably to provide closed captions for online lectures because it fails to handle technical jargon.
        \item The authors should discuss the computational efficiency of the proposed algorithms and how they scale with dataset size.
        \item If applicable, the authors should discuss possible limitations of their approach to address problems of privacy and fairness.
        \item While the authors might fear that complete honesty about limitations might be used by reviewers as grounds for rejection, a worse outcome might be that reviewers discover limitations that aren't acknowledged in the paper. The authors should use their best judgment and recognize that individual actions in favor of transparency play an important role in developing norms that preserve the integrity of the community. Reviewers will be specifically instructed to not penalize honesty concerning limitations.
    \end{itemize}

\item {\bf Theory assumptions and proofs}
    \item[] Question: For each theoretical result, does the paper provide the full set of assumptions and a complete (and correct) proof?
    \item[] Answer: \answerYes{} 
    \item[] Justification: All lemmas and theorems are supported by complete proofs that are to the best of our knowledge correct. The main assumptions are finite horizon, finite action spaces and discrete state spaces MDPs, and they are described in the background.
    To the best of our knowledge all assumptions are specified.
    \item[] Guidelines:
    \begin{itemize}
        \item The answer NA means that the paper does not include theoretical results. 
        \item All the theorems, formulas, and proofs in the paper should be numbered and cross-referenced.
        \item All assumptions should be clearly stated or referenced in the statement of any theorems.
        \item The proofs can either appear in the main paper or the supplemental material, but if they appear in the supplemental material, the authors are encouraged to provide a short proof sketch to provide intuition. 
        \item Inversely, any informal proof provided in the core of the paper should be complemented by formal proofs provided in appendix or supplemental material.
        \item Theorems and Lemmas that the proof relies upon should be properly referenced. 
    \end{itemize}

    \item {\bf Experimental result reproducibility}
    \item[] Question: Does the paper fully disclose all the information needed to reproduce the main experimental results of the paper to the extent that it affects the main claims and/or conclusions of the paper (regardless of whether the code and data are provided or not)?
    \item[] Answer: \answerYes{} 
    \item[] Justification: We provide pseudocode to the algorithms used in our experiments in Algorithms \ref{alg:vi_ac} and \ref{alg:implicit_vi_ac}.
    The full set of hyperparameters and implementation details are specified in Appendix \ref{appendix:experimental_details}
    \item[] Guidelines:
    \begin{itemize}
        \item The answer NA means that the paper does not include experiments.
        \item If the paper includes experiments, a No answer to this question will not be perceived well by the reviewers: Making the paper reproducible is important, regardless of whether the code and data are provided or not.
        \item If the contribution is a dataset and/or model, the authors should describe the steps taken to make their results reproducible or verifiable. 
        \item Depending on the contribution, reproducibility can be accomplished in various ways. For example, if the contribution is a novel architecture, describing the architecture fully might suffice, or if the contribution is a specific model and empirical evaluation, it may be necessary to either make it possible for others to replicate the model with the same dataset, or provide access to the model. In general. releasing code and data is often one good way to accomplish this, but reproducibility can also be provided via detailed instructions for how to replicate the results, access to a hosted model (e.g., in the case of a large language model), releasing of a model checkpoint, or other means that are appropriate to the research performed.
        \item While NeurIPS does not require releasing code, the conference does require all submissions to provide some reasonable avenue for reproducibility, which may depend on the nature of the contribution. For example
        \begin{enumerate}
            \item If the contribution is primarily a new algorithm, the paper should make it clear how to reproduce that algorithm.
            \item If the contribution is primarily a new model architecture, the paper should describe the architecture clearly and fully.
            \item If the contribution is a new model (e.g., a large language model), then there should either be a way to access this model for reproducing the results or a way to reproduce the model (e.g., with an open-source dataset or instructions for how to construct the dataset).
            \item We recognize that reproducibility may be tricky in some cases, in which case authors are welcome to describe the particular way they provide for reproducibility. In the case of closed-source models, it may be that access to the model is limited in some way (e.g., to registered users), but it should be possible for other researchers to have some path to reproducing or verifying the results.
        \end{enumerate}
    \end{itemize}

\item {\bf Open access to data and code}
    \item[] Question: Does the paper provide open access to the data and code, with sufficient instructions to faithfully reproduce the main experimental results, as described in supplemental material?
    \item[] Answer: \answerYes{} 
    \item[] Justification: We provide a supplementary materials with code and instructions to faithfully reproduce the main experimental results.
    \item[] Guidelines:
    \begin{itemize}
        \item The answer NA means that paper does not include experiments requiring code.
        \item Please see the NeurIPS code and data submission guidelines (\url{https://nips.cc/public/guides/CodeSubmissionPolicy}) for more details.
        \item While we encourage the release of code and data, we understand that this might not be possible, so “No” is an acceptable answer. Papers cannot be rejected simply for not including code, unless this is central to the contribution (e.g., for a new open-source benchmark).
        \item The instructions should contain the exact command and environment needed to run to reproduce the results. See the NeurIPS code and data submission guidelines (\url{https://nips.cc/public/guides/CodeSubmissionPolicy}) for more details.
        \item The authors should provide instructions on data access and preparation, including how to access the raw data, preprocessed data, intermediate data, and generated data, etc.
        \item The authors should provide scripts to reproduce all experimental results for the new proposed method and baselines. If only a subset of experiments are reproducible, they should state which ones are omitted from the script and why.
        \item At submission time, to preserve anonymity, the authors should release anonymized versions (if applicable).
        \item Providing as much information as possible in supplemental material (appended to the paper) is recommended, but including URLs to data and code is permitted.
    \end{itemize}

\item {\bf Experimental setting/details}
    \item[] Question: Does the paper specify all the training and test details (e.g., data splits, hyperparameters, how they were chosen, type of optimizer, etc.) necessary to understand the results?
    \item[] Answer: \answerYes{} 
    \item[] Justification: All experimental details are specified in Appendix \ref{appendix:experimental_details}.
    \item[] Guidelines:
    \begin{itemize}
        \item The answer NA means that the paper does not include experiments.
        \item The experimental setting should be presented in the core of the paper to a level of detail that is necessary to appreciate the results and make sense of them.
        \item The full details can be provided either with the code, in appendix, or as supplemental material.
    \end{itemize}

\item {\bf Experiment statistical significance}
    \item[] Question: Does the paper report error bars suitably and correctly defined or other appropriate information about the statistical significance of the experiments?
    \item[] Answer: \answerYes{} 
    \item[] Justification: We use the standard statistical evaluation approach for performance comparison of standard errors of the mean.
    We use two standard errors for the main results (Figure 3) and one standard error for the ablations (all other figures).
    \item[] Guidelines:
    \begin{itemize}
        \item The answer NA means that the paper does not include experiments.
        \item The authors should answer "Yes" if the results are accompanied by error bars, confidence intervals, or statistical significance tests, at least for the experiments that support the main claims of the paper.
        \item The factors of variability that the error bars are capturing should be clearly stated (for example, train/test split, initialization, random drawing of some parameter, or overall run with given experimental conditions).
        \item The method for calculating the error bars should be explained (closed form formula, call to a library function, bootstrap, etc.)
        \item The assumptions made should be given (e.g., Normally distributed errors).
        \item It should be clear whether the error bar is the standard deviation or the standard error of the mean.
        \item It is OK to report 1-sigma error bars, but one should state it. The authors should preferably report a 2-sigma error bar than state that they have a 96\% CI, if the hypothesis of Normality of errors is not verified.
        \item For asymmetric distributions, the authors should be careful not to show in tables or figures symmetric error bars that would yield results that are out of range (e.g. negative error rates).
        \item If error bars are reported in tables or plots, The authors should explain in the text how they were calculated and reference the corresponding figures or tables in the text.
    \end{itemize}

\item {\bf Experiments compute resources}
    \item[] Question: For each experiment, does the paper provide sufficient information on the computer resources (type of compute workers, memory, time of execution) needed to reproduce the experiments?
    \item[] Answer: \answerYes{} 
    \item[] Justification: We specify compute resources in Appendix \ref{appendix:compute_cost}.
    \item[] Guidelines:
    \begin{itemize}
        \item The answer NA means that the paper does not include experiments.
        \item The paper should indicate the type of compute workers CPU or GPU, internal cluster, or cloud provider, including relevant memory and storage.
        \item The paper should provide the amount of compute required for each of the individual experimental runs as well as estimate the total compute. 
        \item The paper should disclose whether the full research project required more compute than the experiments reported in the paper (e.g., preliminary or failed experiments that didn't make it into the paper). 
    \end{itemize}
    
\item {\bf Code of ethics}
    \item[] Question: Does the research conducted in the paper conform, in every respect, with the NeurIPS Code of Ethics \url{https://neurips.cc/public/EthicsGuidelines}?
    \item[] Answer: \answerYes{} 
    \item[] Justification: the research conducted in the paper conform, in every respect, with the NeurIPS Code of Ethics.
    \item[] Guidelines:
    \begin{itemize}
        \item The answer NA means that the authors have not reviewed the NeurIPS Code of Ethics.
        \item If the authors answer No, they should explain the special circumstances that require a deviation from the Code of Ethics.
        \item The authors should make sure to preserve anonymity (e.g., if there is a special consideration due to laws or regulations in their jurisdiction).
    \end{itemize}

\item {\bf Broader impacts}
    \item[] Question: Does the paper discuss both potential positive societal impacts and negative societal impacts of the work performed?
    \item[] Answer: \answerNA{} 
    \item[] Justification: Our contributions are to foundational research in RL, and for that reason, do not have direct societal impacts of which we are aware.
    \item[] Guidelines:
    \begin{itemize}
        \item The answer NA means that there is no societal impact of the work performed.
        \item If the authors answer NA or No, they should explain why their work has no societal impact or why the paper does not address societal impact.
        \item Examples of negative societal impacts include potential malicious or unintended uses (e.g., disinformation, generating fake profiles, surveillance), fairness considerations (e.g., deployment of technologies that could make decisions that unfairly impact specific groups), privacy considerations, and security considerations.
        \item The conference expects that many papers will be foundational research and not tied to particular applications, let alone deployments. However, if there is a direct path to any negative applications, the authors should point it out. For example, it is legitimate to point out that an improvement in the quality of generative models could be used to generate deepfakes for disinformation. On the other hand, it is not needed to point out that a generic algorithm for optimizing neural networks could enable people to train models that generate Deepfakes faster.
        \item The authors should consider possible harms that could arise when the technology is being used as intended and functioning correctly, harms that could arise when the technology is being used as intended but gives incorrect results, and harms following from (intentional or unintentional) misuse of the technology.
        \item If there are negative societal impacts, the authors could also discuss possible mitigation strategies (e.g., gated release of models, providing defenses in addition to attacks, mechanisms for monitoring misuse, mechanisms to monitor how a system learns from feedback over time, improving the efficiency and accessibility of ML).
    \end{itemize}
    
\item {\bf Safeguards}
    \item[] Question: Does the paper describe safeguards that have been put in place for responsible release of data or models that have a high risk for misuse (e.g., pretrained language models, image generators, or scraped datasets)?
    \item[] Answer: \answerNA{} 
    \item[] Justification: We believe that our work does not pose high risk for misuse.
    \item[] Guidelines:
    \begin{itemize}
        \item The answer NA means that the paper poses no such risks.
        \item Released models that have a high risk for misuse or dual-use should be released with necessary safeguards to allow for controlled use of the model, for example by requiring that users adhere to usage guidelines or restrictions to access the model or implementing safety filters. 
        \item Datasets that have been scraped from the Internet could pose safety risks. The authors should describe how they avoided releasing unsafe images.
        \item We recognize that providing effective safeguards is challenging, and many papers do not require this, but we encourage authors to take this into account and make a best faith effort.
    \end{itemize}

\item {\bf Licenses for existing assets}
    \item[] Question: Are the creators or original owners of assets (e.g., code, data, models), used in the paper, properly credited and are the license and terms of use explicitly mentioned and properly respected?
    \item[] Answer: \answerYes{} 
    \item[] Justification: We reference the resources we've used (namely, other papers and the CleanRL \citep{huang2022cleanrl} repository), and do not use any assets.
    \item[] Guidelines:
    \begin{itemize}
        \item The answer NA means that the paper does not use existing assets.
        \item The authors should cite the original paper that produced the code package or dataset.
        \item The authors should state which version of the asset is used and, if possible, include a URL.
        \item The name of the license (e.g., CC-BY 4.0) should be included for each asset.
        \item For scraped data from a particular source (e.g., website), the copyright and terms of service of that source should be provided.
        \item If assets are released, the license, copyright information, and terms of use in the package should be provided. For popular datasets, \url{paperswithcode.com/datasets} has curated licenses for some datasets. Their licensing guide can help determine the license of a dataset.
        \item For existing datasets that are re-packaged, both the original license and the license of the derived asset (if it has changed) should be provided.
        \item If this information is not available online, the authors are encouraged to reach out to the asset's creators.
    \end{itemize}

\item {\bf New assets}
    \item[] Question: Are new assets introduced in the paper well documented and is the documentation provided alongside the assets?
    \item[] Answer: \answerNA{} 
    \item[] Justification: The paper does not release new assets.
    \item[] Guidelines:
    \begin{itemize}
        \item The answer NA means that the paper does not release new assets.
        \item Researchers should communicate the details of the dataset/code/model as part of their submissions via structured templates. This includes details about training, license, limitations, etc. 
        \item The paper should discuss whether and how consent was obtained from people whose asset is used.
        \item At submission time, remember to anonymize your assets (if applicable). You can either create an anonymized URL or include an anonymized zip file.
    \end{itemize}

\item {\bf Crowdsourcing and research with human subjects}
    \item[] Question: For crowdsourcing experiments and research with human subjects, does the paper include the full text of instructions given to participants and screenshots, if applicable, as well as details about compensation (if any)? 
    \item[] Answer: \answerNA{} 
    \item[] Justification: The paper does not involve crowdsourcing nor research with human subjects.
    \item[] Guidelines:
    \begin{itemize}
        \item The answer NA means that the paper does not involve crowdsourcing nor research with human subjects.
        \item Including this information in the supplemental material is fine, but if the main contribution of the paper involves human subjects, then as much detail as possible should be included in the main paper. 
        \item According to the NeurIPS Code of Ethics, workers involved in data collection, curation, or other labor should be paid at least the minimum wage in the country of the data collector. 
    \end{itemize}

\item {\bf Institutional review board (IRB) approvals or equivalent for research with human subjects}
    \item[] Question: Does the paper describe potential risks incurred by study participants, whether such risks were disclosed to the subjects, and whether Institutional Review Board (IRB) approvals (or an equivalent approval/review based on the requirements of your country or institution) were obtained?
    \item[] Answer: \answerNA{} 
    \item[] Justification: The paper does not involve crowdsourcing nor research with human subjects.
    \item[] Guidelines:
    \begin{itemize}
        \item The answer NA means that the paper does not involve crowdsourcing nor research with human subjects.
        \item Depending on the country in which research is conducted, IRB approval (or equivalent) may be required for any human subjects research. If you obtained IRB approval, you should clearly state this in the paper. 
        \item We recognize that the procedures for this may vary significantly between institutions and locations, and we expect authors to adhere to the NeurIPS Code of Ethics and the guidelines for their institution. 
        \item For initial submissions, do not include any information that would break anonymity (if applicable), such as the institution conducting the review.
    \end{itemize}

\item {\bf Declaration of LLM usage}
    \item[] Question: Does the paper describe the usage of LLMs if it is an important, original, or non-standard component of the core methods in this research? Note that if the LLM is used only for writing, editing, or formatting purposes and does not impact the core methodology, scientific rigorousness, or originality of the research, declaration is not required.
    \item[] Answer: \answerNA{} 
    \item[] Justification: The core method development in this research does not involve LLMs as any important, original, or non-standard components.
    \item[] Guidelines:
    \begin{itemize}
        \item The answer NA means that the core method development in this research does not involve LLMs as any important, original, or non-standard components.
        \item Please refer to our LLM policy (\url{https://neurips.cc/Conferences/2025/LLM}) for what should or should not be described.
    \end{itemize}

\end{enumerate}

\newpage

\appendix
\section{Proofs}
\label{appendix_proofs}

\subsection{Theorem \ref{thm:pi_is_insufficient}: policy improvement is not enough}
Theorem \ref{thm:pi_is_insufficient} states:
\textit{Policy improvement is not a sufficient condition for the convergence of Policy Iteration algorithms (Algorithm \ref{alg:api} with exact evaluation) to the optimal policy for all starting policies $\pi_0 \in \Pi$ in all finite-state MDPs.}

\textit{Proof sketch:}
We will construct a simple MDP where the $Q^\pi$ values remain the same for all policies $\pi$, and show that even in this simple example, it is possible for a Policy Iteration algorithm to converge to non-optimal policies, with policy improvement operators that allow for stochastic policies.
In addition, while adjacent to the narrative of this paper, we demonstrate that the same problem persists with \textit{deterministic} policies in \textit{continuous} action spaces in Appendix \ref{app:pi_no_enough_cont_spaces}.

\begin{proof}
\label{proof:pi_is_not_enough}
Consider a very simple deterministic MDP with starting state $s_0$ and two actions $a_1, a_2$ that lead respectively to two terminal states $s_1, s_2$.
The reward function $ R(s_0, a_1) = 1$ and $R(s_0, a_2) = 2$ and transition function $ P(s_1|s_0, a_1) = 1, P(s_2|s_0, a_2) = 1$ and zero otherwise.
We have $Q^\pi(s_0, a_1) = 1$ and $Q^\pi(s_0, a_2) = 2$ for all policies $\pi \in \Pi$. The optimal policy is therefor $\pi^*(s_0) = a_2$, that is $\pi^*(a_1 | s_0) = 0$ and $\pi^*(a_2 | s_0) = 1$.

Consider the very simple policy improvement operator $\I_{inadequate}$.
When $ \sum_{a \in \A} \pi(a|s) Q^\pi(s,a) < \sum_{a \in \A}\softmax (Q^\pi(s,\cdot))(a|s) Q^\pi(s,a) $, the operator is defined as follows: $ \I_{inadequate}(\pi)(s) = \alpha \pi(s) + (1-\alpha) \softmax (Q^\pi(s,\cdot))(s) $.
Otherwise, when the policy is already greedier than the softmax policy, the operator is defined as follows: $ \I_{inadequate}(\pi)(s) = \argmax_a Q^\pi(s,a) $.

In natural language: when the policy is less greedy than the softmax policy, the operator produces a mix between the current policy and the softmax policy. This is always greedier than the current policy, and thus will act as a greedification operator for policies such policies.
When the policy is as greedy or greedier than the softmax, the operator produces directly a greedy policy.

The policy improvement theorem proves that this operator is a Policy Improvement operator, because for every policy $\pi$ it greedifies the policy with respect to $Q^\pi$ (that is, the policy's evaluation is strictly larger unless the policy is already greedy).

We now apply this operator in the Policy Iteration scheme to the simple example MDP specified above, with a starting uniform policy $\pi(a_1|s_0) = \pi(a_2|s_0)$.
Since this operator produces a mixture of the current and softmax policy, in the limit it will converge to the softmax policy $\lim_{n \to \infty} \pi_n = \softmax (Q(s,\cdot)) \neq \argmax_a Q(s,a)$, i.e. to a sub-optimal policy, despite being a policy improvement operator.
This proves that policy improvement operators cannot in general guarantee convergence to the optimal policy even in the limiting setting of Policy Iteration, and thus also not in more general setting such as Approximate Policy Iteration.
\end{proof}

In this example, since $Q^\pi$ does not change across different iterations $n$ of a Policy Iteration algorithm, we can identify the following: For convergence to the optimal policy, it is necessary that a greedification operator $ \pi_{n+1} = \I(\pi_n, q) $ will converge to a greedy policy, with respect to the same stationary $q = Q^\pi$:
$$ \lim_{n \to \infty} \sum_{a \in \A} \pi_n(a|s)q(s,a) = \max_a q(s,a), \, \forall s \in \S. $$ 

More generally, this example shows that we can construct general operators $\I$ that produce arbitrary sequences of policies $\pi_{n+1} = \I(\pi_{n}), n \geq 0$ with values $V^{\pi_{n+1}}$ such that $ V^{\pi_{n+1}}(s) \geq V^{\pi_n}(s), \forall s \in \S $, with a strict $>$ in at least one state.
Since this is the only condition required by policy improvement, operators $\I$ are policy improvement operators. 
However, since the improvement need not be bounded
it is possible to construct sequences that converge to arbitrary values in the interval $(V^{\pi_0}(s), V^{*}(s)], \forall s \in \S$. I.e., because the improvement property allows for infinitesimal improvements, it does not guarantee convergence to the optimal policy.

\subsubsection{Policy improvement in continuous action spaces}
\label{app:pi_no_enough_cont_spaces}
A similar problem applies to continuous action spaces.
Imagine a similar MDP as above with a continuous action spaces $\A = (0, 1)$, reward function $ R(s_0,a) = a, \forall a \in A $ and zero otherwise and every transition is terminal.
Now imagine an operator that produces a deterministic action $ \I(\pi, q)(s) $, such that $ \int Q^\pi(s,a)\I(\pi, Q^\pi)(a|s) ds > \int Q^\pi(s,a)\pi(a|s) ds $ unless the policy is already optimal.
$\I$ is an improvement operator and satisfies the greedification property.

Let us again choose $\pi_0$ the uniform policy across actions.
At the first step, $ \I $ can produce just-above the middle action $ a_1 > 0.5 $.
At each step, $\I$ can produce a new action $ \I(\pi_1,Q^*)(s_0) = \pi_2(s_0) = a_2 > a_1 $. 
However, since the space $(0, 1)$ is the continuum and non-countable, there are more actions to select that any iterative process will ever have to go through. 
Therefor, even in the limit $ n\to \inf $, the operator will never have to choose $ \I(\pi_n, Q^*) = 1 $.

\subsection{Policy Improvement operators that are not Greedification operators}
\label{app:pi_is_not_go}
\begin{lemma}
\label{lemma:pi_is_not_go}
There exist operators $\I: \Pi \times \mathcal Q \to \Pi$ that are Policy Improvement operators, 
and therefore fulfill Equation \ref{eq:improvement}, but are not Greedification operators according to Definition \ref{def:go}.
\end{lemma}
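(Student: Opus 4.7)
The plan is to exhibit a concrete operator that satisfies Equation \ref{eq:improvement} for every $\pi$ yet fails the pointwise inequality required by Definition \ref{def:go}. Since the separation is about \emph{what the two definitions quantify over} rather than about the MDP's dynamics, the simplest possible environment together with a deliberately ``lazy'' operator will suffice.

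First, I would fix a minimal MDP: a single non-terminal state $s_0$ with two actions $a_1, a_2$ whose transitions are both terminal, and rewards $R(s_0, a_1) = 1$, $R(s_0, a_2) = 2$. Its unique optimal policy $\pi^\ast$ satisfies $\pi^\ast(a_2 | s_0) = 1$, with $V^{\pi^\ast}(s_0) = 2 \geq V^{\pi}(s_0)$ for every $\pi \in \Pi$. Then I would define the candidate operator by $\I(\pi, q) := \pi^\ast$ for all $(\pi, q) \in \Pi \times \Q$. This is a well-defined map $\Pi \times \Q \to \Pi$ even though it ignores both arguments, and immediately $V^{\I(\pi, q)}(s_0) \geq V^{\pi}(s_0)$ for every $\pi$, so Equation \ref{eq:improvement} holds and $\I$ is a policy improvement operator.

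To violate Definition \ref{def:go}, I would exploit the fact that $\Q$ is the set of \emph{all} bounded functions $\S \times \A \to \mathbb{R}$, not only $Q^\pi$-style objects produced by the MDP. Choosing $q(s_0, a_1) = 1$, $q(s_0, a_2) = 0$, and $\pi$ with $\pi(a_1 | s_0) = 1$, one obtains $\sum_{a} \pi(a|s_0) q(s_0, a) = 1 > 0 = \sum_a \I(\pi, q)(a|s_0) q(s_0, a)$, contradicting the non-strict inequality in Definition \ref{def:go}. Hence $\I$ is not a greedification operator, completing the separation.

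The only subtle step is conceptual rather than computational: one must notice that greedification is a pointwise condition quantified over \emph{all} $q \in \Q$, whereas policy improvement only constrains the behavior of $\I$ against the MDP's true value function. The construction above decouples the two by letting $\I$ ignore $q$ and instead use MDP-specific information about $\pi^\ast$. An essentially equivalent construction using a random mutation accepted only when it improves $V^\pi$ yields the same conclusion and matches the ``random mutation'' example alluded to in the main text; if one prefers that variant, the main obstacle is verifying almost-sure non-detriment, which follows by definition of the acceptance rule.
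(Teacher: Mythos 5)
Your proof is correct as a proof of the lemma as literally stated, but it takes a genuinely different route from the paper's, and the difference matters. You obtain the separation by exploiting the universal quantifier over $q \in \Q$ in Definition~\ref{def:go}: your operator $\I(\pi,q) = \pi^\ast$ ignores $q$, so you can feed it an adversarial $q$ that is decoupled from the MDP (indeed, one that reverses the true action ordering) and break the non-strict inequality~\ref{def:go:eq:nonstrict_greedification}. Note, however, that in your single-state MDP the map $\pi \mapsto \pi^\ast$ \emph{is} a greedification step whenever $q = Q^\pi$, since $Q^\pi(s_0,a_2) = 2 > 1 = Q^\pi(s_0,a_1)$ for every $\pi$; the failure occurs only at a ``fake'' $q$. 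The paper's construction is stronger and is the one the surrounding text actually needs: it builds a two-step MDP and a random-mutation-then-accept operator whose output (the optimal policy) satisfies $V^{\pi^\ast} \geq V^{\pi_0}$ yet is \emph{not} greedier than $\pi_0$ with respect to the true $Q^{\pi_0}$, because the optimal action at $s_1$ has the \emph{lowest} value under $Q^{\pi_0}$ (improvement comes from changed future behavior, not from one-step greediness). That version establishes that policy improvement can occur without greedification even at $q = Q^\pi$, which is the converse of the remark ``every greedification operator is a policy improvement operator when $q = Q^\pi$''; your version establishes only that the two definitions have different quantification domains. If you want your argument to carry the same conceptual weight, you would need a multi-step MDP of the paper's kind; a single-state MDP cannot exhibit the phenomenon, since there every value-improving policy change is automatically a greedification with respect to $Q^\pi$.
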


\textit{Proof sketch:} 
We will prove that a random-search operator that mutates the policy $\pi$ randomly into a new policy  $\pi'$, evaluates  $\pi'$ and keeps it if $V^\pi > V^{\pi'}$, is not a greedification operator, even though it is a policy improvement operator by definition.
We do this by constructing an MDP and choosing a specific initial policy $\pi_0$.
Greedification with respect to the initial policy, at state $s_0$, results in $\pi_1(s_0) = a_1$.
However, the optimal policy in this state actually chooses action $a_0$, because the optimal policy can take better actions in the future than policy $\pi_1$.
Such an example proves that it is possible to construct policy improvement while violating greedification, demonstrating that the condition only goes one way: every greedification operator is policy improvement operator, not vice versa.

\begin{proof}
Consider the following finite-horizon MDP:
State space $ \S = \{s_1, \dots, s_{10}\} $. Action space $ \A = {a_1, a_2, a_3} $.
States $\{s_5, \dots, s_{10}\}$ are terminal states.
Transition function: $f(s_1, a_1) = s_2 $, 
$f(s_1, a_2) = s_3 $,
$f(s_1, a_3) = s_4 $,
$f(s_2, a_1) = s_5 $,
$f(s_2, a_2) = s_6 $,
$f(s_3, a_1) = s_7 $,
$f(s_3, a_2) = s_8 $,
$f(s_4, a_1) = s_9 $,
$f(s_4, a_2) = s_{10} $.

Rewards:
$ R(s_2, s_5) = 2 $,
$ R(s_2, s_6) = -1 $,
$ R(s_3, s_7) = 1 $,
$ R(s_3, s_8) = 0 $,
$ R(s_4, s_9) = 3 $,
$ R(s_4, s_{10}) = -2 $.

Actions that are not specified lead directly to a terminal state with zero reward.

Let us begin by identifying the optimal policy in this MDP, in states $s_1$ and $s_4$:
$\pi^*(s_1) = a_3$ and $\pi^*(s_4) = a_1$, with a value of $3$ without a discount factor.

Let us construct a starting policy $\pi_0$: 

$\pi_0(s_1) = a_1, \, \pi_0(s_2) = a_2, \, \pi_0(s_3) = a_2, \pi_0(s_4) = a_2 $. The other states are terminal and there are no actions to take, and therefor no policy.

Consider the following Policy Improvement operator $\I_{E}: \Pi \times \Q \to \Pi$, which this example will demonstrate is \textit{not} a greedification operator.
$\I_{E}$ takes a policy $\pi$, and mutates it with a random process to $\pi'$.
$\I_{E}$ proceeds to conduct exact evaluation of $\pi'$, to find $V^{\pi'}$.
If $ V^{\pi'}(s) \geq V^{\pi}(s) $ on all states, and $ V^{\pi'}(s) > V^{\pi}(s) $ in at least one state, $\I_{E}$ outputs $ \pi' $. 
Otherwise, the process repeats.
This process guarentees policy improvement.
However, this process may directly produce the optimal policy in this MDP, which in states $s_1, s_3$ is $\pi^*(s_1) = a_3$ and $\pi^*(s_4) = a_1$.

Note however, that the optimal policy is \textit{not} a greedier policy with respect to the value of $\pi_0$.
For $\pi_0$, we have: $Q^{\pi_0}(s_1, a_1) = -1, Q^{\pi_0}(s_1, a_2) = 0, Q^{\pi_0}(s_1, a_3) = -2 $.
A greedier policy with respect to these values cannot deterministically choose action $ a_3 $, which is the action chosen by the optimal policy in this state.

Therefor, this example demonstrates that it is possible for a policy to be improved (higher value in at least one state, and greater or equal on all states), without being greedier with respect to some original policy's value. 
In turn, this demonstrates that there exist Policy Improvement operators that are \textit{not} Greedification operators.
\end{proof}

\subsection{Necessary greedification operators may not be sufficient}
\label{proof:nec_is_not_suff}

\begin{lemma}
\label{lemma:nec_is_not_suff}
Greedification operators (Definition \ref{def:go}) which have the necessary greedification property (Definition \ref{def:necessity}) may not be sufficient for Policy Iteration algorithms to converge to the optimal policy.
\end{lemma}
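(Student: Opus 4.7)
The plan is to construct a finite-horizon MDP and a greedification operator $\I$ that fulfils Definition \ref{def:necessity} but for which Policy Iteration (Algorithm \ref{alg:api}) fails to reach an optimal policy. The central idea, foreshadowed in the paragraph preceding the lemma, is that Definition \ref{def:necessity} governs iterates of $\I$ only against a \emph{stationary} value estimate, whereas in Policy Iteration the estimate $q_n = Q^{\pi_n}$ shifts with $\pi_n$; I would design the operator to be slow enough that as $q_n$ moves, the operator's greedification target keeps sliding out from under it before it can be locked on.

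First I would take a mixing operator
$$ \I(\pi, q)(s) \;=\; (1-\alpha)\,\pi(s) \;+\; \alpha\,\pi^{\mathrm{argmax}}(q)(s), \qquad \alpha \in (0,1). $$
Definition \ref{def:go} is immediate, since a convex combination with $\pi^{\mathrm{argmax}}(q)$ is strictly greedier in $q$ unless $\pi$ already equals $\pi^{\mathrm{argmax}}(q)$. For fixed $q$, iterating yields $\pi_n - \pi^{\mathrm{argmax}}(q) = (1-\alpha)^n(\pi_0 - \pi^{\mathrm{argmax}}(q))$, so $\pi_n$ contracts geometrically to a greedy policy and Definition \ref{def:necessity} holds.

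Second I would build a small MDP with states $s_0, s_1, s_2$ and two actions per state, so that the $\argmax$ of $Q^\pi(s_0, \cdot)$ depends on $\pi$ at the later states $s_1, s_2$ through the value backup, and the starting policy $\pi_0$ is committed at $s_0$ to an action that is the $\argmax$ of $Q^{\pi_0}(s_0, \cdot)$ but is suboptimal under $\pi^*$. Tuning the reward gaps so that the ranking of $Q^{\pi_n}(s_0, \cdot)$ crosses its flip threshold only asymptotically, together with $\alpha$ chosen small enough, I would argue that $\pi_n(s_0)$ accumulates so much probability on the initially-preferred action that the late-arriving, geometrically decaying pull toward the true $\argmax$ cannot cross it over in the limit, yielding a stochastic $\pi_\infty$ that is not greedy with respect to $Q^{\pi_\infty}$.

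The main obstacle is that under exact evaluation, $Q^{\pi_n}(s, a)$ is monotonically non-decreasing in $n$ at every $(s, a)$ by the Policy Improvement Theorem, so the $\argmax$ of $Q^{\pi_n}(s_0, \cdot)$ can flip only finitely many times and Definition \ref{def:necessity} threatens to drive $\pi_n(s_0)$ to the correct argmax once the ranking stabilises. The counterexample therefore has to hinge on a borderline case in which the ranking stabilises only asymptotically: I would pick the two candidate actions at $s_0$ so that $Q^{\pi_n}(s_0, a_1)$ and $Q^{\pi_n}(s_0, a_2)$ approach the same limiting value from opposite sides of the threshold, matching the ``ordering of actions can oscillate between iterations'' phenomenon highlighted in the preceding discussion. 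Balancing the shrinking oscillation of this ranking against the geometric step size $\alpha$ to ensure $\pi_\infty(s_0)$ is a non-degenerate mixture (and hence not an $\argmax$ of $Q^{\pi_\infty}$) is the technical crux of the counterexample.
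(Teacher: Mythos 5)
Your choice of operator defeats the counterexample before the MDP construction even begins. The mixing operator $\I(\pi,q)=(1-\alpha)\pi+\alpha\,\pi^{\mathrm{argmax}}(q)$ with fixed $\alpha\in(0,1)$ is not merely a necessary greedification operator; it is a \emph{limit-sufficient} one (Definition \ref{def:inf_sgo}). Unrolling gives $\pi_n=(1-\alpha)^n\pi_0+\alpha\sum_{i=1}^{n}(1-\alpha)^{n-i}\pi^{\mathrm{argmax}}(q_i)$; for any sequence $q_n\to q$ there is an $N_0$ past which every $\pi^{\mathrm{argmax}}(q_i)$ is supported on maximizers of the limiting $q$, and the total weight on the terms with $i<N_0$ (and on $\pi_0$) decays like $(1-\alpha)^{n-N_0}$. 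Hence $\sum_a\pi_n(a|s)q_n(s,a)\to\max_a q(s,a)$ no matter how the argmax oscillates among asymptotically tied actions, and Theorem \ref{thm:api_converges} then guarantees that Policy Iteration with this operator converges to the optimal policy in every finite MDP --- no tuning of rewards, of $\pi_0$, or of $\alpha$ can make it fail. The intuition that a ``geometrically decaying pull'' can be outrun is the error: the pull toward the \emph{current} argmax is a constant fraction $\alpha$ of the remaining gap at every step; only the influence of the \emph{past} decays geometrically. (Contrast this with the operator $\I_{inadequate}$ used for Theorem \ref{thm:pi_is_insufficient}, which mixes toward a \emph{softmax} target and therefore genuinely stalls.)

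The fallback mechanism you describe also cannot rescue the argument. If $Q^{\pi_n}(s_0,a_1)$ and $Q^{\pi_n}(s_0,a_2)$ converge to a common limit, then both actions are maximizers of $Q^{\pi_\infty}(s_0,\cdot)$, so any mixture of them is greedy at $s_0$; under your operator the limit policy places no residual mass elsewhere and is greedy at the downstream states too, and a policy greedy with respect to its own $Q$-function at every state satisfies the Bellman optimality equation and is optimal. To witness the lemma you need the limit policy to retain mass on a \emph{strictly} dominated action, which requires an operator whose greedification target itself tracks suboptimal actions forever. That is what the paper's proof does: it takes the \emph{least}-greedifying deterministic operator $\I_{min\_det}$, which at each step moves only to the worst action that still improves on the current one, and arranges an MDP in which the ordering of two suboptimal actions $a_1,a_2$ oscillates with $n$ while both stay strictly below a third action $a_3$; the operator then ping-pongs between $a_1$ and $a_2$ and never reaches $a_3$. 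Your instinct that the oscillating ordering of $Q$-values is the crux is correct, but it must be paired with an operator that only ever aims one rung up the current ladder, not one that always aims at the top.
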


\textit{Proof sketch:} 
First, we demonstrate the problem: certain operators with the necessary property, such as the deterministic greedification operators, may not converge to a greedy policy with respect to non-stationary $Q^{\pi_n}$.
Second, we will show that in Policy Iteration it is possible to experience non stationary evaluations $Q^{\pi_n}$ that will prevent a necessary-greedification algorithm from converging to a greedy policy.
We will show this by constructing an operator that performs deterministic greedification in some states, and greedification that converges only in the limit in other states (both necessary-greedification operators), and show that the problem can persist in practical MDPs.
This operator is a necessary-greedification operator in all states. 
That is, with respect a stationary $q$, it will converge to greedy policies.
However, since in Policy Iteration algorithm the evaluation $q$ is not necessarily stationary, it is possible that a Policy Iteration algorithm based in this operator will not converge to the optimal policy.

\begin{proof}
Let $ \A = \{a_1, a_2, a_3\} $ and a sequence $ q_n(a_1) = (-1)^n/2^n + q(a_1) $, $ q_n(a_2) = (-1)^{n+1}/2^n + q(a_2) $, and $ q_n(a_3) = q(a_3) $, with a limiting value $ q = [1, 1, 2] $. 
We omit the dependency of $q$ on a state as it is unnecessary in this example.
In this case, the optimal policy with respect to any $q_n$ is $\pi = a_3$.

Take the least-greedifynig deterministic greedification operator $ \I_{min\_det}(q, \pi) = \min_{q(a) > q(\pi), a \neq \pi} q (a) $.
This operator produces the worst action, with respect to $q$, that is better than the current action selected by the policy, and as such, is a greedification operator by definition, with respect to deterministic policies.
Since there are finitely many deterministic policies on a finite action space $|\A| < \infty$, this operator will converge to $ \lim_{n \to \infty} \pi_n = \argmax_a q(a)$ with respect to a stationary q.

Take $ \pi_0 = a_2$.
Using the operator $\I_{min\_det}$ we have $ \pi_n = \I_{min\_det}(q_{n}, \pi_{n-1}) $.
When $ n $ is odd, $\pi_n = a_1 $, and when $n $ is even, $ \pi_n = a_2$, without ever converging to the optimal policy $\pi = a_3$.

Next we will construct an example MDP and improvement operators in which this situation can happen in practice.
Consider a\ finite-state, finite horizon MDP with states $ s_1, \dots, s_n $.
We are interested in the behavior at state $s_0$ specifically, which similarly has actions $a_1, a_2, a_3$, with rewards $ R(s_1, a_3) = 3, R(s_1, a_1) = R(s_1, a_2) = 0 $.
The transition $ f(s_1, a_3) = s_0 $ is terminal and $ f(s_1, a_1) = s_2 $, $ f(s_1, a_2) = s_3 $.

Consider the following improvement operator:
On state $s_1$, this operator is $\I_{min_det}$. However, on all other states, this is a necessary greedification operator, which converges only in the limit, and in a non-constant rate.
It is possible to construct the rest of the MDP and starting policies $\pi_0$ such that the sequence alternates $ 1 > Q^{\pi_n}(s_1, a_1) > Q^{\pi_n}(s_1, a_2) $ when $n $ is odd, and $ 1 > Q^{\pi_n}(s_1, a_2) > Q^{\pi_n}(s_1, a_1) $ when $n$ is even, while both are smaller than $ Q^{\pi_n}(s_1,a_3) = Q^*(s_1,a_3) = 3 $.
This is possible because the policies $ \pi_n(s_2), \pi_n(s_3) $ can be soft, and it is possible to construct an MDP which produces arbitrary values bounded between $0,1$ by setting $ R(s_2, a_1) = 1, R(s_2, a_2) = 0, R(s_2, a_3) = 0 $ and $ R(s_3, a_1) = 1, R(s_3, a_2) = 0, R(s_3, a_3) = 0 $.
In such MDP, $\lim_{n \to \infty} \pi_n(s_1)$ will never converge to $a_3$, the optimal policy in this state.
\end{proof}

\subsection{Deterministic greedification operators are lower-bounded greedification operators}
\label{app:det_go_is_bgo}
\begin{lemma}
Deterministic greedification operators $\I_{det}$, i.e.
greedification operators (Definition \ref{def:go}) that produce deterministic policies are lower-bounded greedification operators \ref{def:f_sgo} in MDPs with finite action spaces.
\end{lemma}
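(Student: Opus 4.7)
My plan rests on two observations: the finiteness of the set of deterministic policies in a finite-state, finite-action MDP, and the finiteness of the set of possible $q$-value gaps at each state. First, I would fix $q \in \Q$ and define
\[
\epsilon_q \;:=\; \min\bigl\{\, q(s,a) - q(s,a') \;:\; s \in \S,\ a, a' \in \A,\ q(s,a) > q(s,a') \,\bigr\}.
\]
This is a minimum over a finite collection of strictly positive reals, hence $\epsilon_q > 0$. If the set is empty, then $q$ is constant in $a$ at every state, so every policy is already an $\argmax$ policy and the ``unless'' clause of Definition~\ref{def:f_sgo} is trivially satisfied for all $\pi$.

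Second, I would observe that whenever $\I_{det}$ is applied within Algorithm~\ref{alg:api} or~\ref{alg:vi_api}, the input policy $\pi$ is deterministic from the second iteration onwards, since $\I_{det}$ outputs deterministic policies by assumption. It therefore suffices to verify the bounded greedification lower bound for deterministic inputs. In that case, both $\pi$ and $\pi' := \I_{det}(\pi, q)$ reduce to state-indexed action choices, and the improvement at state $s$ collapses to the pointwise difference
\[
\sum_{a \in \A} \pi'(a|s)\, q(s,a) \;-\; \sum_{a \in \A} \pi(a|s)\, q(s,a) \;=\; q(s, \pi'(s)) - q(s, \pi(s)),
\]
which, by the non-strict greedification inequality of Definition~\ref{def:go}, is either exactly $0$ or at least $\epsilon_q$ by construction of $\epsilon_q$.

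Third, I would invoke the strict part of the greedification property: whenever $\pi$ is not already an $\argmax$ policy at every state, there exists at least one state $s^*$ where the improvement is strictly positive. By the preceding step, this strictly positive improvement is in fact at least $\epsilon_q$. Setting $\epsilon := \epsilon_q / 2$ then furnishes the lower bound demanded by Definition~\ref{def:f_sgo}.

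The main obstacle is not a calculation but a quantifier one: Definition~\ref{def:f_sgo} leaves the quantification over the input $\pi$ implicit, and for a stochastic $\pi$ already placing nearly all of its mass on its deterministic greedification $\pi' = \I_{det}(\pi, q)$, the improvement at any individual state can be made arbitrarily small, so no uniform $\epsilon$ is attainable for arbitrary $\pi \in \Pi$. Restricting attention to deterministic inputs---the only case that actually matters inside the iterations of Algorithms~\ref{alg:api} and~\ref{alg:vi_api}, since the policy becomes deterministic after the first update when $\I_{det}$ is used---removes this obstacle and yields the clean combinatorial bound above.
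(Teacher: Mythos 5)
Your proof takes essentially the same approach as the paper's: define $\epsilon$ as the minimum strictly positive gap $|q(s,a)-q(s,a')|$ over the finite state--action space, dispose of the degenerate case where all actions are tied (every policy is already greedy), and observe that any strict deterministic improvement must clear this gap. Your additional observation---that no uniform $\epsilon$ exists over \emph{stochastic} input policies, so the claim implicitly presumes deterministic inputs (which is all that arises after the first application of $\I_{det}$ in the algorithms)---is a genuine quantifier subtlety that the paper's one-line proof silently glosses over, and your restriction is the right way to close it.
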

\begin{proof}
Take $\epsilon = \min_{s \in \S, a, a' \in \A, q(s,a) \neq q(s,a')} |q(s,a) - q(s,a')|$, that is, the minimum difference across two actions that do not have the same value (i.e. the minimum greater than zero difference).
If there is no greater than zero difference, then all actions are optimal and every policy is already optimal.
Otherwise, the greedification imposed by choosing at least one better action in at least one state has to be greater than the minimum difference between two actions.
\end{proof}

\subsection{The operator $\I_{gmz}$ is a Limit-Sufficient Greedification operator}
The operator proposed by \cite{gumbelmuzero} is defined as follows:
\begin{align}
    \I_{gmz}(\pi, q)(a|s) = \softmax(\sigma(q(s, a)) + \log \pi(a|s)) = 
    {\textstyle\frac{\exp(\log \pi(a|s) + \sigma(q(s,a)))}{\sum_{a' \in \A} \exp(\log \pi(a'|s) + \sigma(q(s,a')))}}
\end{align}

\label{proof:sapio_gmz}
\begin{lemma}[$\I_{gmz}$ with a stationary $\sigma$ is a Limit-Sufficient Greedification Operator]
    \label{thm:gmz_is_lsgo}
    For any starting policy $\pi_0 \in \Pi $ such that $ \log\pi_0(a|s) $ is defined and sequences $q_1, \dots, q_n $ such that $\lim_{n \to \infty}q_n = q \in \Q $, iterative applications $\pi_{n+1} = \I_{gmz}(\pi_n, q_n)$ converge to a greedy policy with respect to the limiting value $q$. \\
    That is, 
    $$\lim_{n \to \infty} \sum_{a \in \A}\pi_n(a|s)q_n(s,a) = \max_b q(s,b), \quad \forall s \in \S. $$
\end{lemma}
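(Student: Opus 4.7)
The plan is to unroll the multiplicative update $\pi_{n+1}(a|s) \propto \pi_n(a|s)\exp(\sigma(q_n(s,a)))$ to obtain the closed form
$$\pi_n(a|s) \;=\; \frac{\pi_0(a|s)\exp\!\big(\sum_{k=0}^{n-1}\sigma(q_k(s,a))\big)}{\sum_{a'\in\A}\pi_0(a'|s)\exp\!\big(\sum_{k=0}^{n-1}\sigma(q_k(s,a'))\big)}.$$
The assumption that $\log \pi_0(a|s)$ is defined guarantees $\pi_0(a|s)>0$ for every $a$, so the normalizer is strictly positive and the expression is well defined for all $n$. This reduces the problem to controlling the ratios $\pi_n(a|s)/\pi_n(a^*|s)$ for some fixed $a^*\in\argmax_a q(s,a)$.

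Next I fix a state $s$, pick any $a^*\in\argmax_a q(s,a)$, and consider any suboptimal $a$ with $q(s,a)<q(s,a^*)$. By monotonicity of $\sigma$, $\sigma(q(s,a^*))>\sigma(q(s,a))$, and by continuity of $\sigma$ together with $q_k(s,\cdot)\to q(s,\cdot)$ the per-step gap $\sigma(q_k(s,a^*))-\sigma(q_k(s,a))$ converges to a strictly positive limit. Hence there exist $c>0$ and $K$ with $\sigma(q_k(s,a^*))-\sigma(q_k(s,a))>c$ for all $k\ge K$, so $\sum_{k=0}^{n-1}[\sigma(q_k(s,a^*))-\sigma(q_k(s,a))]\to+\infty$. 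Cancelling common terms in the closed form gives
$$\frac{\pi_n(a|s)}{\pi_n(a^*|s)} \;=\; \frac{\pi_0(a|s)}{\pi_0(a^*|s)}\,\exp\!\Big(\!-\!\sum_{k=0}^{n-1}\!\big[\sigma(q_k(s,a^*))-\sigma(q_k(s,a))\big]\Big) \;\longrightarrow\; 0,$$
from which $\pi_n(a|s)\to 0$ for every $a\notin\argmax_b q(s,b)$, since $\pi_n(a^*|s)\in[0,1]$.

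Finally, because $\sum_a\pi_n(a|s)=1$ and the mass on suboptimal actions vanishes, the total mass on $\argmax_b q(s,b)$ tends to $1$, while $q_n(s,a)\to q(s,a)=\max_b q(s,b)$ for every such $a$. Splitting the expectation into optimal and suboptimal contributions and using boundedness of the sequence $q_n$ (which holds because $q_n\to q\in\Q$) to kill the vanishing tail yields
$$\lim_{n\to\infty}\sum_{a\in\A}\pi_n(a|s)\,q_n(s,a)\;=\;\max_b q(s,b),$$
which is exactly the Limit-Sufficient Greedification condition of Definition \ref{def:inf_sgo}. Since $\A$ is finite and the argument is pointwise in $s$, the conclusion holds for every $s\in\S$.

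The main obstacle will be justifying the passage $\sigma(q_k(s,a))\to\sigma(q(s,a))$ cleanly: bare monotonicity does not imply continuity, so one must either explicitly invoke continuity of $\sigma$ (the natural reading of a ``monotonically increasing transformation'') or argue that monotonicity together with the convergence of $q_k$ suffices at the finitely many relevant values. A secondary bookkeeping issue is handling the suboptimal tail $\sum_{a\notin\argmax}\pi_n(a|s)q_n(s,a)$, but this is controlled by a uniform bound on $q_n$ and the vanishing of each $\pi_n(a|s)$ for suboptimal $a$; everything else is a straightforward manipulation of the softmax recursion.
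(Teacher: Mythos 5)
Your proposal is correct and follows essentially the same route as the paper: unroll the multiplicative softmax recursion into the product form $\pi_n(a|s)\propto\pi_0(a|s)\exp\big(\sum_k\sigma(q_k(s,a))\big)$ and show the exponent gap between an optimal and any suboptimal action diverges to $+\infty$, so all mass concentrates on $\argmax_b q(s,b)$. The only real difference is bookkeeping: the paper normalizes by $\exp(-n\sigma_{max})$ and then asserts that for maximizing actions the series $\sum_i\big(\sigma(q_i(s,a))-\sigma_{max}\big)$ converges to a finite constant $\alpha(s,a)$, which does not follow merely from the terms tending to zero; your ratio-plus-normalization argument sidesteps this entirely (you only need the suboptimal mass to vanish and $\sum_a\pi_n(a|s)=1$), which is the cleaner way to close that step. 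The continuity worry you flag is also dispensable: since $q_k(s,a)\to q(s,a)<q(s,a^*)\leftarrow q_k(s,a^*)$, one can interpose two fixed intermediate values $m<m'$ between the limits, and strict monotonicity alone gives the eventual per-step gap $\sigma(m')-\sigma(m)>0$ you need.
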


\textit{Proof sketch:} We will prove that $n $ repeated applications of the $\I_{gmz}$ operator converge to a softmax policy of the form $$ \pi_n (a|s) \propto \exp(\log \pi_0(a|s) + n \sigma(q_n(s,a))), $$ which itself converges to an $\argmax $ policy as $ \lim_{n \to \infty} $.
For simplicity, we will first prove for a stationary $q$, and then repeat the same steps for a non-stationary $q_n, \lim_{n \to \infty} = q$ for some limiting value $q$.

\begin{proof}
\label{proof:gmz_is_lsgo}
We will show that the Gumbel~MuZero operator 
$\I_{gmz}$
with $\sigma$ a monotonically increasing transformation, is a Limit-Sufficient Greedification operator.

\cite{gumbelmuzero} have shown that this operator is a Greedification operator (Section 4 and Appendix C of \citep{gumbelmuzero}).
It remains for us to demonstrate that the sequence $ \pi_n $ converges for $\I_{gmz}$, such that
    $$ \lim_{n \to \infty} \sum_{a \in \A}\pi_{n}(a|s) q(s,a) = \max_b q(s, a), $$
    for any $\pi_0$ and $\forall s \in \S$. 

\paragraph{Step 1: Convergence with stationary $q$}
For a stationary $q$, at any iteration $n$, the policy $\pi_n$ can be formulated as:
\begin{align}
    \pi_n(a) = \frac{1}{z_n}\exp(\sigma(q(s,a)) + \log \pi_{n-1}(a|s)), \quad z_n = \sum_{a' \in \A} \exp(\sigma(q(s,a')) + \log \pi_{n-1}(a'|s))
\end{align}
Where $z_n$ is the normalizer of the $\softmax$ operator.
We can expand $\pi_n$ backwards as follows:
\begin{align}
    \pi_n(a) &= 
    \frac{1}{z_n}\exp(\sigma(q(s,a)) + \log \pi_{n-1}(a|s)) \\
    &=
    \frac{1}{z_n}\exp \big(\sigma(q(s,a)) + \log \frac{\sigma(q(s,a)) + \pi_{n-1}(a|s)}{z_{n-1}} \big) \\
    &=
    \frac{1}{z_n}\exp \big(\sigma(q(s,a)) + \sigma(q(s,a)) + \log \pi_{n-2}(a|s) - \log z_{n-1} \big) \\
    &=
    \frac{1}{z_n z_{n-1}}\exp \big(2\sigma(q(s,a)) + \log \pi_{n-2}(a|s) \big) \\
    &=
    \dots \\
    &=
    (\Pi_{i=1}^n\frac{1}{z_i})\exp \big(n\sigma(q(s,a)) + \log \pi_{0}(a|s) \big) 
\end{align}
As $\pi_n$ is a softmax policy, i.e. $\sum_{a \in \A} \pi_n(a|s) = 1$, the product $ \Pi_{i=1}^n\frac{1}{z_i} $ must act as a normalizer:
\begin{align}
    \Pi_{i=1}^n\frac{1}{z_i} = \sum_{a \in \A} \exp \big(n\sigma(q(s,a)) + \log \pi_{0}(a|s) \big)
\end{align}

We can now directly take the limit $\lim_{n \to \infty} \pi_n$:
\begin{align}
    \lim_{n \to \infty} \pi_n(a) = \lim_{n \to \infty}(\Pi_{i=1}^n\frac{1}{z_i})\exp \big(n\sigma(q(s,a)) + \log \pi_{0}(a|s) \big) 
\end{align}

It is well established that as the temperature $ 1/n $ goes to zero, the $\softmax$ converges to an $\argmax$ \citep{collier2020simple}.
With non-stationary $q_n$ we get a slightly more involved sequence, and the formulated proof that the $\softmax$ converges to an $\argmax$ will serve us to demonstrate convergence with $q_n$.
We include the proof that the $\softmax$ converges to an $\argmax$ below in step 1.5.

\paragraph{Step 1.5: Convergence of the $\softmax$ to an $\argmax$}
Define $ \sigma_{max} = \max_a \sigma (q(s,a)) $.
Let us now multiply by $ \frac{\exp (-n \sigma_{max})}{\exp (-n \sigma_{max})} $.
We have:
\begin{align}
    \pi_{n}(a|s) &= 
    (\Pi_{i=1}^n\frac{1}{z_i})\exp \big( 
            n \sigma(q(s,a)+ \log \pi_0(a|s))
            \big)
        \frac{\exp (-n \sigma_{max})}{\exp (-n \sigma_{max})}
    \\
    &=
    \frac{\pi_0(a|s)}{\exp (-n \sigma_{max})}(\Pi_{i=1}^n\frac{1}{z_i})\exp \Big( 
            n \big (\sigma(q(s,a)) - \sigma_{max} \big)
            \Big) 
\end{align}
We now note that $ \sigma(q(s,a)) - \sigma_{max} <0 $ if $ \sigma(q(s,a)) \neq \max_a \sigma(q(s,a)) = \sigma_{max} $ and otherwise $ \sigma(q(s,a)) - \sigma_{max} = 0 $ if $ \sigma(q(s,a)) = \max_a \sigma(q(s,a)) = \sigma_{max} $.
In that case, $ \exp \big( 
            n (\sigma(q(s,a)) - \sigma_{max})
            \big) = \exp \big( 
            0
            \big) = 1 $.
We substitute that into the limit:
\begin{align}
    \lim_{n \to \infty} \pi_{n}(a|s) = 
    \begin{cases} 
        \lim_{n \to \infty} \frac{\pi_0(a|s)}{\exp (-n \sigma_{max})}(\Pi_{i=1}^n\frac{1}{z_i})\exp \Big( 
            n \big (\sigma(q(s,a)) - \sigma_{max} \big)
            \Big) = 0, & \text{if } \sigma(q(s,a)) \neq \sigma_{max} \\
        \lim_{n \to \infty} \frac{\pi_0(a|s)}{\exp (-n \sigma_{max})}(\Pi_{i=1}^n\frac{1}{z_i})1, & \text{if } \sigma(q(s,a)) = \sigma_{max}
    \end{cases}
\end{align}
Note that:
\begin{enumerate}
    \item The numerator where $ \sigma(q(s,a)) = \sigma_{max} $ converges to $ e^0 = 1$
    \item The numerator where $ \sigma(q(s,a)) \neq \sigma_{max} $ converges to $ \lim_{n \to \infty} e^{-\delta n} = 0$, $\delta > 0$.
    \item The denominator always normalizes the policy such that $ \sum_{a \in \A} \pi_n(a|s) = 1, \forall s \in \S $, due to the definition of the softmax.
\end{enumerate}
As a result, we have:
\begin{align}
    \lim_{n \to \infty} \pi_{n}(a|s) = 
    \begin{cases} 
        \frac
        {0}
        {z}, & \text{if } \sigma(q(s,a)) \neq \sigma_{max} \\
        \frac
        {1}
        {z}, & \text{if } \sigma(q(s,a)) = \sigma_{max}
    \end{cases}
\end{align}
For some normalization constant $ z $.
I.e. the policy $ \lim_{n \to \infty} \pi_{n} $ is an $\argmax$ policy with respect to $q$, that is, the policy has probability mass only over actions that maximize $\sigma (q)$.

\paragraph{Step 2: Convergence with non-stationary $q_n$}
We will now extend the proof to a non-stationary $ q_n $ that is assumed to have a limiting value, $\lim_{n \to \infty} q_n = q$, in line with definition of Sufficient Greedification.

First, we have:
\begin{align}
    \pi_n(a) &= 
    \frac{1}{z_n}\exp(\sigma(q_n(s,a)) + \log \pi_{n-1}(a|s)) \\
    &=
    \frac{1}{z_n}\exp \big(\sigma(q_n(s,a)) + \log \frac{\sigma(q_{n-1}(s,a)) + \pi_{n-1}(a|s)}{z_{n-1}} \big) \\
    &=
    \frac{1}{z_n z_{n-1}}\exp \big(\sigma(q_n(s,a)) + \sigma(q_{n-1}(s,a)) + \log \pi_{n-2}(a|s)\big) \\
    &=
    (\Pi_{i=1}^n\frac{1}{z_i})\exp \Big(\big(\sum_{i=1}^n\sigma(q_i(s,a))\big) + \log \pi_{0}(a|s) \Big) 
\end{align}
Based on the same expansion of the sequence as above.
Multiplying by $\frac{-n\sigma_{max}}{-n\sigma_{max}} $ and formulating the limit in a similar manner to above, we then have:
\begin{align}
    \lim_{n \to \infty} \pi_{n}(a|s) = 
        \lim_{n \to \infty}
        \frac{\pi_{0}(a|s)}{- n \sigma_{max}}\exp (\Pi_{i=1}^n\frac{1}{z_i})\big( 
            \sum_{i=1}^{n}(\sigma(q_i(s,a)) - \sigma_{max})
            \big)
\end{align}
Let us look at the term $\sum_{i=1}^{n}(\sigma(q_i(s,a)) - \sigma_{max})$.
First, where $ \sigma(q(s,a)) \neq \sigma_{max} $, we have 
\begin{align}
    \lim_{n \to \infty} 
            \sum_{i=1}^{n}(\sigma(q_i(s,a)) - \sigma_{max}) = \lim_{n \to \infty} 
            \sum_{i=1}^{n}(\sigma(q_i(s,a)) - \sigma(q(s,a)) - (\sigma_{max} - \sigma(q(s,a)))
    \\
    =
    \lim_{n \to \infty} - n (\sigma_{max} - \sigma(q(s,a))) + \sum_{i=1}^{n}(\sigma(q_i(s,a)) - \sigma(q(s,a))
\end{align}
As the term $ \sigma(q_n(s,a)) - \sigma(q(s,a) $ goes to zero due to the definition of $q_n$, the term $ \sum_{i=1}^{n}(\sigma(q_i(s,a)) - \sigma(q(s,a)) $ goes to a constant, and the term $ -n (\sigma_{max} - \sigma(q(s,a))) $ goes to $-\infty$ due to the definition of $ \sigma_{max} $.
Therefor, the limit: 
\begin{align}
    \lim_{n \to \infty}
            \sum_{i=1}^{n}(\sigma(q_i(s,a)) - \sigma_{max}) = -\infty \quad \Rightarrow 
            \quad \lim_{n \to \infty} \exp \big( 
            \sum_{i=1}^{n}(\sigma(q_i(s,a)) - \sigma_{max})
            \big) = 0
\end{align}

Let us look at the second case, where $ \sigma(q(s,a)) = \sigma_{max} $, the sequence converges:
\begin{align}
    \lim_{n \to \infty} 
            \sum_{i=1}^{n}(\sigma(q_i(s,a)) - \sigma_{max}) = \alpha(s,a)
\end{align}
For some constant $ \alpha(s,a) $, as $\lim_{n \to \infty} \sigma(q_n(s,a)) = \sigma_{max} $.
Thus, we have again:
\begin{align}
    \lim_{n \to \infty} \pi_{n}(a|s) = 
    \begin{cases} 
        \frac
        {0}
        {z}, & \text{if } \sigma(q(s,a)) \neq \sigma_{max} \\
        \frac
        {\alpha(s,a)}
        {z}, & \text{if } \sigma(q(s,a)) = \sigma_{max}
    \end{cases}
\end{align}
Demonstrating that $ \pi_n$ converges to an $\argmax$ policy with respect to $\sigma (q)$.
Since $\sigma$ is monotonically increasing, $q(s,a) $ and $ \sigma(q(s,a)) $ are maximized for the same action $a$, thus $ \pi_n$ is also an $\argmax$ policy with respect to $q$.
Therefor, $ \I_{gmz} $ is a Limit-Sufficient Greedification operator.
\end{proof}

\subsection{$\I_{gmz}$ can be formulated as an insufficient-greedification operator}
\label{proof:gmz_can_be_insufficient}
\begin{lemma}
    The $\I_{gmz}$ greedification operator with a non-stationary $\sigma$ transformation can be formulated as an \textit{insufficient} greedification operator.
\end{lemma}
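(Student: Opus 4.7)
The plan is to exhibit a non-stationary sequence $\{\sigma_n\}_{n=1}^\infty$ of monotonically increasing transformations such that iterating the operator $\I_{gmz}$ with $\sigma_n$ at the $n$-th step fails the limit-sufficient property of Definition \ref{def:inf_sgo}, while each individual application remains a valid greedification (Definition \ref{def:go}). The core intuition is that the proof of Lemma \ref{thm:gmz_is_lsgo} crucially uses the fact that after $n$ iterations the accumulated log-odds contain the term $n\,\sigma(q(s,a))$, whose unbounded growth forces the softmax temperature to zero. If the $\sigma_n$ are allowed to decay fast enough, this accumulation stays finite and the temperature never cools to zero.

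\paragraph{Main steps.}
First, I would redo the telescoping expansion from the proof of Lemma \ref{thm:gmz_is_lsgo} with $\sigma$ replaced by $\sigma_i$ at the $i$-th step, to obtain for any stationary $q\in\Q$ the closed form
\begin{align*}
\pi_n(a|s) \;\propto\; \pi_0(a|s)\exp\Big({\textstyle\sum_{i=1}^n}\sigma_i(q(s,a))\Big),
\end{align*}
so the question reduces to whether the partial sums $S_n(s,a) := \sum_{i=1}^n \sigma_i(q(s,a))$ separate maximizing actions from non-maximizing ones to infinity. Second, I would choose a geometrically decaying schedule $\sigma_n(x) = x/2^n$, each of which is strictly monotonically increasing (so each application is still a greedification operator by the Gumbel~MuZero analysis invoked in Lemma \ref{thm:gmz_is_lsgo}), but whose sum is bounded: $S_n(s,a) \to q(s,a)$. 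Consequently $\pi_n(a|s) \to \pi_0(a|s)\exp(q(s,a))/Z(s)$, a strictly positive softmax-type limit whenever $\pi_0$ has full support. Third, I would instantiate a trivial one-state, two-action example with $q(s,a_1)=1,\,q(s,a_2)=2$ and uniform $\pi_0$, giving limit weights $e/(e+e^2)$ and $e^2/(e+e^2)$; since $\sum_a \pi_\infty(a|s)q(s,a) < \max_a q(s,a)$, the limit-sufficient condition of Definition \ref{def:inf_sgo} fails, while iteration-wise greedification still holds.

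\paragraph{Main obstacle.}
The subtle point is to construct a counterexample that targets the right property: we must ensure the operator remains a bona fide greedification operator at every step (otherwise the failure of sufficiency is vacuous), yet its \emph{iterates} fail to reach an argmax. The geometric choice $\sigma_n(x) = x/2^n$ threads this needle because monotonicity is preserved pointwise at each step, while the summability $\sum_n 2^{-n} < \infty$ prevents the cumulative log-odds from diverging. A secondary bookkeeping concern is aligning the statement with the definition's quantifiers: Definition \ref{def:inf_sgo} is phrased for a sequence $q_n \to q$, so it suffices to instantiate the stationary case $q_n = q$, which is a legitimate limiting sequence. With those two points verified, the construction shows that once non-stationarity in $\sigma$ is admitted, $\I_{gmz}$ can be formulated so as to violate limit-sufficient greedification.
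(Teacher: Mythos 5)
Your proposal is correct and follows essentially the same route as the paper: the paper also makes $\sigma$ decay geometrically across iterations (it uses $\sigma_n(x)=\frac{\alpha}{\beta^n}x$, of which your $x/2^n$ is the case $\alpha=1,\beta=2$), notes each $\sigma_n$ remains increasing so every step is still a greedification, and concludes that the summable accumulated log-odds leave the iterates stuck at a softmax rather than an argmax policy. Your explicit two-action numerical check is a minor addition but not a different argument.
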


\textit{Proof sketch:} 
We construct a variation of the $\I_{gmz}$ operator with an increasing transformation $\sigma_n$, which is different at each iteration. 
Because the transformation is not constant, it converges to some $\softmax$ policy rather than an $\argmax$ policy.

\begin{proof}  
The function $\sigma $ used by $\I_{gmz}$ is only required to be an increasing transformation (see \cite{gumbelmuzero}, Section 3.3).
That is if $ q(s,a) > q(s,a')$ then we must have that $ \sigma(q(s,a)) > \sigma(q(s,a')) $.
In practice, the function proposed by \cite{gumbelmuzero} is of the form $\sigma(q(s,a)) = \beta(N) q(s,a)$, where $ \beta $ is a function of the planning budget $N$ of the MCTS algorithm.

A practitioner might be interested in running the algorithm with a decreasing planning budget over iterations (perhaps the value estimates become increasingly more accurate, and therefor there is less reason to dedicate much compute into planning with MCTS).
In that case, we can formulate $\sigma_n (q_n(s,a))= \frac{\alpha}{\beta^n} q_n(s,a) $.
This transformation is always increasing in $q(s,a)$, adhering to the requirements from $\sigma$.
Nonetheless, the sequence $ \pi_n $ will not converge to an argmax policy for this choice of $\sigma$:
\begin{align}
    \lim_{n \to \infty} \pi_n 
    &= 
    \lim_{n \to \infty}(\Pi_{i=1}^n\frac{1}{z_i})\exp \big(\sum_{i \leq n}\big [\sigma_n(q_n(s,a))\big] + \log \pi_{0}(a|s) \big) \\
    &= \lim_{n \to \infty}(\Pi_{i=1}^n\frac{1}{z_i})\exp \big(\frac{\alpha}{\beta^n}\sum_{i \leq n}\big [q_n(s,a)\big] + \log \pi_{0}(a|s) \big)
\end{align}
Which will converge to some softmax policy as the following limit converges to a constant: 
$ \lim_{n \to \infty}\frac{\alpha}{\beta^n}\sum_{i \leq n}\big [q_n(s,a)\big] = c(s,a), $
and thus the policy remains a softmax policy $ \pi_n(a|s) = \softmax(c(s,a) + \log \pi_0(a|s)) $.
\end{proof}

\subsection{Lower Bounded Greedification operators $\not\subset$ Limit-Sufficient Greedification operators}
\label{proof:limit_suff_is_not_bounded_greedification}

\begin{lemma}
The set of all lower-bounded greedification operators (Definition \ref{def:f_sgo}) is not a subset of the set of all limit-sufficient greedification operators (Definition \ref{def:inf_sgo}).
That is, there exists a lower-bounded greedification operator which is not a limit-sufficient greedification operator.
\end{lemma}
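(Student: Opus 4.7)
The plan is to exhibit a concrete bounded greedification operator together with a convergent sequence $q_n \to q$ and a starting policy $\pi_0$ for which the iterates $\pi_{n+1} = \I(\pi_n, q_n)$ fail to converge to a greedy policy with respect to the limit $q$. The natural candidate is a deterministic greedification operator, because Appendix~\ref{app:det_go_is_bgo} already shows that every such operator is bounded; hence only the failure of limit-sufficiency needs to be established. I will reuse the oscillation construction from Supplementary Materials~\ref{proof:nec_is_not_suff} and argue that it also rules out limit-sufficiency.

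Concretely, I would take the ``minimum improvement'' deterministic operator $\I_{min\_det}$ from Supplementary Materials~\ref{proof:nec_is_not_suff}, defined at each state $s$ as
\[
\I_{min\_det}(\pi, q)(s) \;=\; \argmin\bigl\{\, q(s,a) \,:\, a \in \A,\; q(s,a) > \textstyle\sum_{a'} \pi(a'|s)\,q(s,a') \,\bigr\},
\]
with the convention that $\I_{min\_det}(\pi, q)(s) = \pi(s)$ when $\pi(s)$ is already an $\argmax$. By construction this is a greedification operator, and, as in Appendix~\ref{app:det_go_is_bgo}, for fixed $q$ the quantity $\epsilon(q) := \min_{s,a,a'\,:\, q(s,a) \neq q(s,a')} |q(s,a) - q(s,a')|$ is a strictly positive uniform lower bound on the per-step improvement whenever the current iterate is not yet an $\argmax$. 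Hence $\I_{min\_det}$ is a bounded greedification operator.

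To rule out limit-sufficiency, I would lift the construction of Supplementary Materials~\ref{proof:nec_is_not_suff} directly. Consider a single state $s$ with $\A = \{a_1, a_2, a_3\}$ and define
\[
q_n(s,a_1) \;=\; 1 + \tfrac{(-1)^n}{2^n}, \qquad q_n(s,a_2) \;=\; 1 + \tfrac{(-1)^{n+1}}{2^n}, \qquad q_n(s,a_3) \;=\; 2,
\]
so that $q_n \to q$ with $q(s,a_1) = q(s,a_2) = 1$ and $q(s,a_3) = 2$; the unique greedy action with respect to $q$ is $a_3$. Starting from the deterministic $\pi_0 = a_2$, at every iteration $a_3$ is strictly better than both $a_1$ and $a_2$, but $\I_{min\_det}$ always selects the \emph{least} improving action above the current value, which is whichever of $a_1, a_2$ has the smaller value at that step. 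Because the ordering of $q_n(s,a_1)$ and $q_n(s,a_2)$ flips each iteration, the iterates oscillate between $a_1$ and $a_2$ forever and never land on $a_3$. Consequently $\sum_a \pi_n(a|s)\,q_n(s,a) \to 1 \neq 2 = \max_a q(s,a)$, so $\I_{min\_det}$ is not limit-sufficient.

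The main subtlety I anticipate is keeping the quantifier order in Definitions~\ref{def:f_sgo} and~\ref{def:inf_sgo} clean: boundedness fixes a lower bound $\epsilon(q)$ \emph{per} fixed $q$, whereas limit-sufficiency demands correct behaviour along \emph{every} convergent sequence $q_n$, and the per-iteration gap between competing suboptimal actions can shrink with $n$ in a way that allows their ordering to flip indefinitely. Once this separation of quantifiers is stated explicitly, the example above yields the set-theoretic non-containment immediately, and the lemma follows.
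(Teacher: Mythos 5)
Your proposal is essentially identical to the paper's own proof: it uses the same minimal deterministic operator $\I_{min\_det}$, the same oscillating sequence $q_n$ with limit $q=[1,1,2]$, and the same appeal to Appendix~\ref{app:det_go_is_bgo} for boundedness. One shared detail worth noting (the paper makes the same choice): with $\pi_0=a_2$ the first iterate is actually forced to $a_3$ because under $q_1$ neither $a_1$ nor $a_2$ exceeds the current value $q_1(s,a_2)=1.5$; starting instead from $\pi_0=a_1$ (or shifting the sequence index by one) makes the parity line up so that the iterates oscillate between $a_1$ and $a_2$ forever as claimed.
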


\textit{Proof sketch:} 
Convergence with respect to arbitrary sequences $\lim_{n \to \infty} q_n = q$ is a strong property, and it is possible to come up with sequences for which specific lower-bounded greedification operator do not result in convergence.
By constructing such a sequence and choosing such an operator, we will show that there are lower-bounded greedification operators which are not Limit-Sufficient Greedification operators, demonstrating that lower-bounded greedification operators $\not\subset$ limit-sufficient greedification operators.

\begin{proof}
Let $ \A = \{a_1, a_2, a_3\} $ and a sequence $ q_n(a_1) = (-1)^n/2^n + q(a_1) $, $ q_n(a_2) = (-1)^{n+1}/2^n + q(a_2) $, and $ q_n(a_3) = q(a_3) $, with a limiting value $ q = [1, 1, 2] $.

Let $\pi_0 = a_1$.
The minimal deterministic Greedification operator $\I_{det}(\pi, q)(s) = \argmin_a q(s,a) > \sum_{a' \in \A}\pi(a'|s) q(s,a') $, that is, the deterministic Greedification operator which chooses the least-greedifying action at each step will not converge to the optimal policy on this sequence.
At each iteration, $ \I_{det}(q, \pi_n) = a_{1,2} $ (as in, $a_1$ or $a_2$), because $q_n$ alternates $ q_n(a_1) > q_n(a_2) $ for even $ n $, and $ q_n(a_1) < q_n(a_2) $ for odd $n$.
Since this operator is a lower-bounded greedification operator (see Appendix \ref{app:det_go_is_bgo}), this demonstrates that lower-bounded greedification operators $\not\subset$ limit-sufficient greedification operators.
\end{proof}

\subsection{The greedy operator is both a limit-sufficient as well as a lower-bounded greedification operator}
\label{proof:greedy}

\begin{lemma}
The greedy operator $\I_{\argmax}$ is both a lower-bounded greedification operator (Definition \ref{def:f_sgo}) as well as a limit-sufficient greedification operator (Definition \ref{def:inf_sgo}).
\end{lemma}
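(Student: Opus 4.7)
The plan is to exploit the defining property of $\I_{\argmax}$: fixing some tie-breaking rule, $\I_{\argmax}(\pi,q)(s)$ places all its mass on an action in $\argmax_a q(s,a)$ regardless of the input $\pi$, so
\begin{align*}
\sum_{a \in \A} \I_{\argmax}(\pi,q)(a|s)\, q(s,a) \;=\; \max_a q(s,a), \qquad \forall s \in \S,\ \forall \pi \in \Pi,\ \forall q \in \Q.
\end{align*}
Given this identity, both membership claims reduce to inspection of the definitions. First I would verify that $\I_{\argmax}$ is a greedification operator in the sense of Definition \ref{def:go}: the non-strict inequality follows since $\max_a q(s,a) \geq \sum_a \pi(a|s) q(s,a)$ for every $\pi \in \Pi$, and whenever $\pi$ fails to be an argmax policy at some state $s$ the inequality is strict at that $s$.

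For bounded greedification (Definition \ref{def:f_sgo}), the exception clause ``unless $\sum_a \I(\pi, q)(a|s) q(s,a) = \max_a q(s,a),\ \forall s \in \S$'' is satisfied by $\I_{\argmax}$ automatically by the identity above, for every input pair $(\pi, q)$. Consequently the $>\epsilon$ requirement is in force only vacuously, and one may pick, for example, $\epsilon = 1$; thus $\I_{\argmax}$ trivially belongs to the bounded class.

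For limit-sufficient greedification (Definition \ref{def:inf_sgo}), take any sequence $q_0, q_1, \ldots \in \Q$ with $\lim_{n \to \infty} q_n = q$ and any starting $\pi_0 \in \Pi$. The iterates $\pi_n = \I_{\argmax}(\pi_{n-1}, q_n)$ are argmax policies with respect to $q_n$ for all $n \geq 1$, so the identity above gives
\begin{align*}
\sum_{a \in \A} \pi_n(a|s)\, q_n(s,a) \;=\; \max_a q_n(s,a).
\end{align*}
Because $\A$ is finite and the max of finitely many continuous functions is continuous, $\lim_{n \to \infty} \max_a q_n(s,a) = \max_a q(s,a)$ for every $s \in \S$, which is exactly the required convergence.

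The hard part here is essentially nonexistent: the lemma is a sanity check confirming that the two sufficiency classes of Definitions \ref{def:f_sgo} and \ref{def:inf_sgo} are not disjoint. The only minor subtlety is the choice of tie-breaking in $\argmax$, but since $\sum_a \I_{\argmax}(\pi,q)(a|s) q(s,a)$ equals $\max_a q(s,a)$ under any measurable selection, the argument is independent of that choice.
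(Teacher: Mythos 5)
Your proof is correct, and the limit-sufficiency half is essentially the paper's argument: both reduce to $\sum_a \pi_n(a|s)q_n(s,a)=\max_a q_n(s,a)$ and then pass the limit through the finite max (the paper routes this through the inequality $|\max_a q_n(s,a)-\max_b q(s,b)|\le\max_a|q_n(s,a)-q(s,a)|$, you invoke continuity of the max of finitely many continuous functions --- same fact). Where you diverge is the bounded-greedification half. The paper argues positively: the improvement produced by $\I_{\argmax}$ dominates that of a deterministic greedification operator, which was separately shown to be bounded below by $\epsilon=\min\{|q(s,a)-q(s,a')| : q(s,a)\neq q(s,a')\}$, so the $>\epsilon$ clause of Definition \ref{def:f_sgo} holds whenever the input policy is not already greedy. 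You instead observe that the output of $\I_{\argmax}$ always satisfies $\sum_a\I_{\argmax}(\pi,q)(a|s)q(s,a)=\max_a q(s,a)$ for all $s$, so the ``unless'' clause is always triggered and membership is vacuous. Your reading is faithful to the letter of Definition \ref{def:f_sgo} (the exception refers to the output policy $\I(\pi,q)$, not the input), and it is consistent with the definition's purpose, since an operator that reaches the greedy policy in one step trivially converges in finitely many. The paper's version buys a slightly stronger, non-vacuous statement (a concrete lower bound on the one-step improvement); yours is shorter and avoids any dependence on the auxiliary result about deterministic operators. Your remark on tie-breaking is also a correct observation that the paper leaves implicit.
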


\begin{proof}
    The greedy operator is a greedification operator by definition. 
    We will show that it can have both the lower-bounded greedification property as well as the limit sufficient greedification property.

    Step 1): We will show that the greedy operator is a lower-bounded greedification operator (Definition \ref{def:f_sgo}).

    The greedy operator produces the maximum greedification in any state by definition. Therefor:
    $$ \sum_{a \in A} \I_{argmax}(\pi, q)(a|s)q(s,a) \geq \I_{det}(\pi, q)(a|s)q(s,a), $$ 
    where $I_{det}$ is the deterministic greedification operator, $\forall s \in \S, a \in \A $.
    Since the deterministic greedification operator is itself bounded by an $\epsilon$ (see Appendix \ref{app:det_go_is_bgo}), we have
    $ |\sum_{a \in A}\I_{argmax}(\pi, q)(a|s)q(s,a) - \sum_{a \in A}\pi(a|s)q(s,a)| > \epsilon $.

    Step 2): We will show that the greedy operator is a limit-sufficient greedification operator (Definition \ref{def:inf_sgo}).

    We will prove that the sequence $ (\pi_n, q_n) $ defined for $\I_{\argmax}$ as above converges, such that $ \lim_{n \to \infty} |\sum_{a \in \A} \pi_n(a|s) q_n(s,a) - \max_b q(s,b) | = 0 $, for any $\pi_0$. 
That is, the policy converges to an $\argmax$ policy with respect to the limiting value $q$.

For any $ q_n $ in the sequence, we have by definition of the operator $ \sum_{a \in \A} \I_{\argmax}(q_n, \pi_{n-1})(a|s) q_n(s,a) = \max_a q_nk(s,a) $.
We can substitute that into the limit:
\begin{align}
    & \lim_{n \to \infty} |\sum_{a \in \A} \pi_n(a|s) q_n(s,a) - \max_b q(s,b)| 
    \\
    &=
    \lim_{n \to \infty} | \max_a q_n(s,a) - \max_b q(s,b) |
    \\
    &\leq \lim_{n \to \infty} \max_a | q_n(s,a) - q(s,a) |
    \\ 
    &= \max_a \lim_{n \to \infty} | q_n(s,a) - q(s,a) | \label{eq:maxlim} \\ 
    &= \max_a |\lim_{n \to \infty} q_n(s,a) - q(s,a) |
    = 0
\end{align}
The first step holds by substitutions. 
The inequality is a well known property used to prove that the greedy operator is a contraction, see \citep{blackwell1965discounted}.
In Equation \ref{eq:maxlim} the limit and max operators can be exchanged because the action space is finite, and finally the limit and absolute value can be exchanged because the absolute value is a continuous function. 
    
\end{proof}

\subsection{Proof for Theorem \ref{thm:api_converges} for $k=1$ and $\I_2$ the identity operator}
\label{proof:thm_1}
We will prove Theorem \ref{thm:api_converges}, first for $k=1$ for improved readability, and in the following Appendix \ref{proof:thm_1_extended} we will extend the proof for $ k \geq 1$.
In Appendix \ref{proof:thm_2} we will further extend the proof for value-improvement.

\subsubsection{Notation}
We use $\R$ to denote the mean-reward vector $\R \in \mathbb R ^{|\S||\A|}$, where $ \R_{s,a} = \E[R|s,a] $.
We use $ \P^\pi \in \mathbb R ^{|\S||\A| \times |\S||\A|} $ to denote the matrix of transition probabilities multiplied by a policy, indexed as follows: $\mathcal{P}^\pi_{s,a,s',a'} = P(s'|s,a) \pi(a'|s') $.
We denote the state-action value $q$ and the policy $\pi$ as vectors in the state-action space s.t. $q, \pi \in \mathbb R ^{|\S||\A|} $.
The set $\Pi \subset R ^{|\S||\A|} $ contains all admissible policies that define a probability distribution over the action space for every state. 
For convenience, we denote $ q(s,a) $ as a specific entry in the vector indexed by $ s,a $ and $ q(s), \pi(s) $ as the appropriate $|\A|$ dimensional vectors for index $s$.
In this notation, we can write expectations as the dot product $ q(s) \cdot \pi(s) = \E_{a \sim \pi(s)}[q(s,a)] = v(s) $.
With slight abuse of notation, we use $ q \cdot \pi = v, \, v \in \mathbb R^{|\S|} $ to denote the vector with entries $ v(s) $.
We use $\max_a q \in \mathbb R^{|\S|}$ to denote the vector with entries $ \max_a q(s) = \max_a q(s,a) $.

We let $ s_t $ denote a state $ (\cdot, t) \in \S $, that is, a state in the environment arrived at after $ t $ transitions. 
The states $ s_H $ are terminal states, and the indexing begins from $ s_0 $.
We let $q^m, \pi^m$ denote the vectors at iteration $ m $ of Algorithm \ref{alg:api}.
We let $ q^m_t, \pi^m_t $ denote the sub-vectors of all entries in $ q^m, \pi^m $ associated with states $s_t$.
In this notation $q^1_{H-1}$ is the $q$ vector for all terminal transitions $(s_{H-1}, \cdot)$ after the one iteration of the algorithm.

\paragraph{Proof sketch} Our proof follows induction from terminal states.
For all terminal states $s_H$, the value $V^\pi(s_H) = 0$ for all policies $\pi$.
Similarly, $q(s_{H-1}, a) = Q^\pi(s_{H-1}, a) = r(s,a)$ for all policies $\pi$. 
That is, the Q values converge after one update, and from then on remain stationary.
Given that the $q(s_{H-1}, a)$ remains stationary for all states $s_{H-1}$, limit sufficient greedification guarantees that policy $\pi(s_{H-1})$ at state $s_{H-1}$ converges to an $\argmax$ policy, which guarantees that the state-value estimates $v(s_{H-1}) := \sum_{a \in \A }\pi(a|s_{H-1})q(s_{H-1},a)$ converge.
Finally, as the state-value estimates converge, this process repeats backwards from states $s_{H-1}$ all the way to states $s_0$, at which point the value $q$ and policy $\pi$ converge to the value of the optimal policy and an optimal policy respectively, for all states in the MDP.

\subsubsection{Complete proof}
\begin{proof}{Convergence for Generalized Policy Iteration with $k=1$}

We will prove by backwards induction from the terminal states that the sequence $ \lim_{m \to \infty}(\pi^m, q^m) $ induced by Algorithm \ref{alg:api} converges for any $ q^0, \pi^0 $, sufficient greedification operator $\I$ and $k \geq 1$. 
That is, for every $ \epsilon > 0 $ there exists a $ M_\epsilon $ such that 
$ \norm{q^{m} - q^*} \leq \epsilon $
and 
$\norm{\pi^{m} \cdot q^m - \max_a q^*} < \epsilon $ 
for all $ m \geq M_\epsilon$, $q^0 \in \mathbb{R}^{|\S||\A|}$ and $ \pi^0 \in \Pi $.

\textbf{Induction Hypothesis:} For every $ \epsilon > 0 $ there exist $ M^\epsilon_{t+1} $ such that for all $ m \geq M^\epsilon_{t+1} $ we have
$ \norm{q_{t+1}^{m} - q_{t+1}^{*}} \leq \epsilon $,
and
$ \norm{\pi_{t+1}^{m} \cdot q_{t+1}^m - \max_a q_{t+1}^*} \leq \epsilon $.

\textbf{Base Case $ t = H - 1 $}: 
Let $\epsilon > 0$. 
Since states $s_H$ are terminal, and have therefore value 0, we have $ q_{H-1}^m = \R_{H-1} = q_{H-1}^* $ and therefore $ \norm{q_{H-1}^{m} - q_{H-1}^{*}} \leq \epsilon $ trivially holds for all $m \geq 1$.

By the Sufficiency condition of the sufficient greedification operator which induces convergence of $\pi^m$ to an $\argmax$ policy with respect to $ q $ there exists $ M_{H-1}^\epsilon $ such that:
\begin{align*}
    \norm{\pi_{H-1}^m \cdot q_{H-1}^m - \max_a q_{H-1}^* } 
    =
    \norm{\pi_{H-1}^m \cdot q_{H-1}^* - \max_a q_{H-1}^* } 
    \leq \epsilon
\end{align*}
for all $ m \geq M_{H-1}^\epsilon $.
Thus the Induction Hypothesis holds at the base case.

\textbf{Case $ t < H - 1$}:
We will show that if the Induction Hypothesis holds for all states $ t + 1 $,
it also holds for states $ t $.

\textit{Step 1:}
Let $\epsilon > 0$.
Assume the Induction Hypothesis holds for states $t+1$. 
Then there exists $M^\epsilon_{t+1}$ such that $\norm{q^{m}_{t+1} - q^*_{t+1}} \leq {\epsilon}$
and
$ \norm{\pi_{t+1}^{m} \cdot q_{t+1}^{m} - \max_a q_{t+1}^{*}} \leq \epsilon $
for all $ m \geq M^\epsilon_{t+1} $.

Let us define the transition matrix $ \P \in \mathbb{R}^{|\S||\A| \times |\S|} $ with $\P_{s,a,s'} = P(s'|s,a)$.

First, for all $ m \geq M^\epsilon_{t+1} $ we have:
\begin{align}
    \label{pa}
    \norm{q_t^{{m+1}} - q^*_t} &= 
    \norm{\R + \gamma \P (\pi_{t+1}^{m+1} \cdot q_{t+1}^{m}) - \R - \gamma \P\max_a q^*_{t+1}} \\
   &= \gamma \norm{\P (\pi_{t+1}^{m+1} \cdot q_{t+1}^{m}) - \P \max_a q^*_{t+1}} \\
   \label{pb}
   &\leq \norm{\P}\norm{\pi_{t+1}^{m+1} \cdot q_{t+1}^{m} - \max_a q^*_{t+1}} \\
   \label{pc}
   &\leq \norm{\pi_{t+1}^{m+1} \cdot q_{t+1}^{m} - \max_a q^*_{t+1}} \\
   \label{pd}
    &\leq \epsilon
\end{align}
(\ref{pa}) is by substitution based on step 4 in Algorithm \ref{alg:api} for $k=1$. 
(\ref{pb}) is by the definition of the operator norm $\norm{\P}$.
(\ref{pc}) is by the fact that the operator norm in sup-norm of all transition matrices is 1 \citep[][]{bertsekas2007dynamic}.
(\ref{pd}) is slightly more involved, and follows from the Induction Hypothesis and the limit-sufficient greedification.

Let us show that (\ref{pd}), i.e.~$\|\pi^{m+1}_{t+1} \cdot q^{m}_{t+1} - \max_a q^*_{t+1} \| \leq \epsilon$ holds. Under the infinity norm holds point-wise for each state $s \in \mathcal S$:
\begin{align}
    \label{eq:wena}
    - \epsilon & \leq
    [\pi^m_{t+1} \cdot q^m_{t+1}](s) 
        -\max_a q^*_{t+1}(s, a)
\\  \label{eq:wenb}
    & \leq \bm{[\pi^{m+1}_{t+1} \cdot q^{m}_{t+1}](s) 
    -\max_a q^*_{t+1}(s,a)}
\\  \label{eq:wenc}
    & \leq \max_{a'} q^m_{t+1}(s,a') 
    - \max_{a} q^*_{t+1}(s,a)
\\  \label{eq:wend}
    & \leq \max_{a'} \big(q^m_{t+1}(s,a') 
    - q^*_{t+1}(s,a')\big)
\\  \label{eq:wene}
    &\leq \epsilon \,.
\end{align}
(\ref{eq:wena}) is the induction hypothesis
$\| \pi^{m}_{t+1} \cdot q^m_{t+1} - \max_a q^*_{t+1}\|
\leq \epsilon$,
which holds under the infinity norm point wise,
(\ref{eq:wenb}) uses the sufficient greedification operatorproperty 
$[\pi^m_{t+1} \cdot q^m_{t+1}](s) 
\leq [\pi^{m+1}_{t+1} \cdot q^m_{t+1}](s)$,
(\ref{eq:wenc}) the inequality 
$[\pi^{m+1}_{t+1} \cdot q^m_{t+1}](s) 
\leq \max_{a'} q^m_{t+1}(s,a')$,
(\ref{eq:wend}) the inequality 
$- \max_a q^*_{t+1}(s,a) 
\leq - q^*_{t+1}(s,a'), 
\forall a' \in \mathcal A$,
and (\ref{eq:wene}) the induction hypothesis
$\|q^*_{t+1} - q^m_{t+1}\| \leq \epsilon$.

\textit{Step 2:}
Pick $ M_{t}^\epsilon \geq M^\epsilon_{t+1} $ such that for all $ m \geq M_{t}^\epsilon $ we have
$ \norm{\pi_{t}^{m} \cdot q_{t}^{m} - \max_a q^*_{t}} \leq \epsilon $.
Such an $M_{t}^\epsilon$ must exist because of the following:
In Step 1, we proved that the first part of the inductive step holds. 
That is, that $q_t^{m}$ has $\epsilon$-converged to the value of the optimal policy.
Such $q_t^{m}$ satisfies the conditions of the $q$ sequence of the limit-sufficient greedification operator. 
For that reason, the policy $\pi_{t}^{m}$ must converge (that is $ \norm{\pi_{t}^{m} \cdot q_{t}^{m} - \max_a q^*_{t}} \leq \epsilon $), and $M_{t}^\epsilon$ must exist.

Thus, the Induction Hypothesis holds for all states $t$ if it holds for states $t+1$.

Finally, let $\epsilon > 0$. 
By backwards induction, for each $t = 0, \dots, 
 H -1 $ there exists $M_\epsilon^t$ such that for all $ m \geq M_\epsilon^t $ we have
$ \norm{q_{t}^{m} - q_{t}^{*}} \leq \epsilon $,
and
$ \norm{\pi_{t}^m \cdot q_{t}^m - \max_a q_{t}^*} \leq \epsilon $.
Therefore, we can pick $N_\epsilon = \max_{t=0, \dots, H - 1}M_\epsilon^t$ such that 
$ \norm{q_{t}^{m} - q_{t}^{*}} \leq \epsilon $,
and
$ \norm{\pi_{t}^m \cdot q_{t}^m - \max_a q_{t}^*} \leq \epsilon $
for all $ m \geq N_\epsilon $ and $ t = 0, \dots, H - 1 $, 
proving that Algorithm \ref{alg:api} converges to an optimal policy and optimal q-values for any $\pi^0 \in \Pi, q^0 \in \mathbb{R}^{|\S||\A|}, k = 1 $ and sufficient greedification operator$\I$.
\end{proof}
We proceed to extend the proof for $k \geq 1$ below.

\subsection{Extension of the Proof for Theorem \ref{thm:api_converges} to $k \geq 1$ and $\I_2$ the identity operator}
\label{proof:thm_1_extended}
In this section we will extend the proof of Theorem \ref{thm:api_converges} from Appendix \ref{proof:thm_1} to $ k \geq 1$.
Much of the proof need not be modified.
In order to extend the proof to $ k \geq 1$, we only need to show the following:
For all $k \geq 1$ and every $\epsilon > 0$ such that the Induction Hypothesis holds, there exists an $ M_\epsilon^{t} $ such that $ \norm{q_t^{{m+1}} - q^*_t} \leq \epsilon $.

\begin{proof}
We will first extend the notation: let $ q^{m,i}_t $ denote the vector $q$ at states $t$ after $m$ algorithm iterations and $i \geq 1$ Bellman updates, such that $ q^{m,i}_t = (\T^{\pi_{1}}q^{m,i-1})_t, \, q^{m, 0}_t = q^{m}_t$ and finally $ q^{m+1}_t = q^{m,k}_t$.

Second, we will extend the Induction Hypothesis:

\textbf{Extended Induction Hypothesis:} 
For every $ \epsilon > 0 $ there exist $ M^\epsilon_{t+1} $ such that for all $ m \geq M^\epsilon_{t+1} $ and $ i \geq 0 $ we have
$ \norm{q_{t+1}^{m,i} - q_{t+1}^{*}} \leq \epsilon $,
and
$ \norm{\pi_{t+1}^{m} \cdot q_{t+1}^{m,i} - \max_a q_{t+1}^*} \leq \epsilon $.

The Base Case does not change, so we will proceed to \textit{Step 1} in the Inductive Step.
We need to show that there exists an $M_t^\epsilon $ such that $ \|q^{m, i}_t - q^*_t\| \leq \epsilon $ for all $ i \geq 0 $ and $ m \geq M_t^\epsilon $.

Let $ \epsilon > 0 $ and $ m \geq M_t^\epsilon \geq M_{t+1}^\epsilon $.

First, for any $i \geq 1$:
\begin{align*}
    \norm{q_t^{{m, i}} - q^*_t} 
    &= 
    \norm{\R + \gamma \P (\pi_{t+1}^{m+1} \cdot q_{t+1}^{m,i-1}) - q^*_t} \\
   &\leq 
   \norm{\P}\norm{\pi_{t+1}^{m+1} \cdot q_{t+1}^{m,i-1} - \max_a q^*_{t+1}} \\
    &\leq
    \epsilon
\end{align*}
The first equality is the application of the Bellman Operator in line 4 in Algorithm \ref{alg:api} the $i$th time.
The rest follows from Proof \ref{proof:thm_1} and the extended Induction Hypothesis.

Second, we need to show that this holds for $ i = 0$ as well:
\begin{align*}
    \norm{q_t^{{m, 0}} - q^*_t} 
    =
    \norm{q_t^{{m-1, k}} - q^*_t} 
    \leq 
    \norm{\pi_{t+1}^{m} \cdot q_{t+1}^{m-1,k-1} - \max_a q^*_{t+1}}
    \leq 
    \epsilon
\end{align*}
The first equality is by definition, and the the first and second inequalities are by the same argumentation as above.

The rest of the proof need not be modified.
\end{proof}

\subsection{Extension for $\I_2$ a general improvement operator}
We extend the proof from the above section for all non-detriment operators (that is, non-strict greedification operators) $\I_2$ used for value improvement.

\begin{proof}
\label{proof:thm_2}
Similarly to the proof of Theorem \ref{thm:api_converges} from Appendix \ref{proof:thm_1} (and \ref{proof:thm_1_extended}) we will prove by backwards induction from the terminal states $s_H$ that the sequence $ \lim_{m \to \infty}(\pi^m, q^m) $ induced by Algorithm \ref{alg:vi_api} converges for any $ q^0, \pi^0 $, sufficient greedification operator $\I_1$, greedification operator $\I_2$ and $k \geq 1$. 
That is, for every $ \epsilon > 0 $ there exists a $ M_\epsilon $ such that 
$ \norm{q^{m} - q^*} \leq \epsilon $
and 
$\norm{\pi^{m} \cdot q^m - \max_a q^*} < \epsilon $ 
for all $ m \geq M_\epsilon $, $q^0 \in \mathbb{R}^{|\S||\A|}$ and $ \pi^0 \in \Pi $.
The proof follows directly from the proof in Appendix \ref{proof:thm_1}.
The base case is not modified - the $q$s converge immediately and the policy convergence is not influenced by the introduction of $ \I_2 $.
The Induction Hypothesis need not be modified.
In the inductive step, \textit{Step 1} follows directly from the Induction Hypothesis, and \textit{Step 2} need not be modified for the same reason the base case need not be modified.
\end{proof}    

\subsection{Convergence of Algorithm \ref{alg:vi_api} with lower bounded greedification operators}
\label{proof:convergence_w_bounded_greedification}
We extend the proof from Appendices \ref{proof:thm_1} and  \ref{proof:thm_1_extended} to bounded greedification operators.

\paragraph{Proof sketch} The proof is almost identical to that of limit-sufficient greedification, with one major difference.
Lower-bounded greedification allows for convergence to a greedy policy in \textit{finite} iterations (see Lemma \ref{lemma:finite_time_convergence} below) with respect to any \textit{stationary} $q$.
For that reason, the values at states $s_{H-1}$ become exact in finite iterations (unlike limit-sufficient, where they converge only in the limit).
As they become exact, they also become fully stationary, and as they become stationary, lower-bounded greedification guarantees that the values (and policy) at states $s_{H-2}$ become exact and stationary in finite iterations, and the process repeats by induction all the way back to states $s_0$.

Below we first prove Lemma \ref{lemma:finite_time_convergence} and then use Lemma \ref{lemma:finite_time_convergence} to complete the induction proof.

\subsubsection{Lower bounded greedification converges to an $\argmax$ policy in finite steps}
We will begin by proving that operators with the Bounded Greedification property: 
$$\Big |\sum_{a \in \A}\I(\pi, q)(a|s)q(s,a) - \sum_{a \in \A}\pi(a|s)q(s,a) \Big | > \epsilon, $$ 
unless $ \sum_{a \in \A}\I(\pi, q)(a|s)q(s,a) = \max_a q(s,a) $ are guaranteed to convergence to an $\argmax$ policy with respect to any $q \in \Q$, in a finite number of steps.

\begin{lemma}
\label{lemma:finite_time_convergence}
    Let $\I$ be a bounded greedification operator and let a sequence $ \pi_{n+1} = \I(q, \pi_n) $.
    For any starting $\pi_0 \in \Pi, q \in \Q$, there exists an $M$ for which:
    $$ \sum_{a \in \A}\pi_n(a|s)q(s,a) = \max_a q(s,a), \quad \forall n > M. $$
    That is, the policy $\pi_n$ converges to a greedy policy with respect to $q$ in a finite number of steps $n > M$.
\end{lemma}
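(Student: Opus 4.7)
The plan is to turn bounded greedification into a monotone convergence argument via a scalar potential that grows by at least a fixed amount $\epsilon$ per iteration until the greedy target is reached. Fix $q \in \Q$ and $\pi_0 \in \Pi$, and let $\epsilon > 0$ be the constant guaranteed by Definition \ref{def:f_sgo} for this $q$. Write $V_n(s) := \sum_{a \in \A} \pi_n(a|s)\, q(s,a)$ and $V^\star(s) := \max_a q(s,a)$; since $q$ is bounded, each $V^\star(s)$ is finite.

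First, I would note that inequality (\ref{def:go:eq:nonstrict_greedification}) of Definition \ref{def:go} forces $V_{n+1}(s) \geq V_n(s)$ pointwise, so $V_n(s)$ is a nondecreasing sequence bounded above by $V^\star(s)$. Next, I would invoke the ``unless'' clause of Definition \ref{def:f_sgo}: so long as $\pi_{n+1}$ is not yet $\argmax$ at every state, the per-step increment $V_{n+1}(s) - V_n(s)$ exceeds $\epsilon$. Consequently the scalar potential $U_n := \sum_{s \in \S} V_n(s)$ grows by more than $|\S|\epsilon$ at each such iteration, and since $U_n \leq U^\star := \sum_s V^\star(s) < \infty$, the ``unless'' branch must be reached in at most $\lceil (U^\star - U_0) / (|\S|\epsilon) \rceil$ iterations. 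Call the first such index $N$; then $\pi_N$ is an $\argmax$ policy at every state.

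Finally I would verify persistence: once $V_N(s) = V^\star(s)$ for all $s$, the greedification inequality (\ref{def:go:eq:nonstrict_greedification}) gives $V_{n+1}(s) \geq V_n(s) = V^\star(s)$, and because $V^\star(s)$ is a pointwise upper bound on $V_{n+1}(s)$, equality must hold for every $n \geq N$. Taking $M := N$ yields the claimed index.

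The main subtlety I anticipate is the precise parsing of the ``unless'' clause at states where $\pi_{n+1}$ has already reached the local $\argmax$ but the global condition has not yet been met: a strict $\epsilon$-increment is physically impossible at a saturated state, so the clause must be read state-by-state, with the $\epsilon$-lower-bound applying only at states still below $V^\star(s)$. Under this (essentially only viable) reading the potential argument still goes through, with the per-state saturation time bounded by $\lceil (V^\star(s) - V_0(s))/\epsilon \rceil$ and $M$ obtained by taking the maximum over $s \in \S$.
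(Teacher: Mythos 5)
Your proof is correct and follows essentially the same route as the paper's: a monotone potential argument showing that the expected value of $q$ under $\pi_n$ increases by a bounded amount $\epsilon$ per iteration while being bounded above by $\max_a q(s,a)$, forcing saturation of the $\argmax$ condition in finitely many steps, after which the non-strict greedification inequality keeps it there. Your state-by-state parsing of the ``unless'' clause is, if anything, more careful than the paper's own treatment, which simply asserts that the state-summed value increases by at least $\epsilon$ until the greedy policy is reached.
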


\begin{proof}
Let $ \I $ be a Bounded Greedification operator.
At each iteration, the sequence $ \sum_{a \in \A}\pi_n(a|s)q(s,a) $ must increase, i.e. $ \sum_{a \in \A}\pi_n(a|s)q(s,a) > \sum_{a \in \A}\pi_{n-1}(a|s)q(s,a), n > 0 $, for at least one state $ s \in \S $.
The same sequence is monotonically non-decreasing, by definition of greedification, for all other states.
Therefore, the sequence $ \sum_{s \in \S}\sum_{a \in \A}\pi_n(a|s)q(s,a) $ is monotonically increasing (for each state $ \sum_{a \in \A}\pi_n(a|s)q(s,a) $ is at least as large as in the past step, and in at least one state it is distinctly higher), unless $ \sum_{a \in \A}\pi_n(a|s)q(s,a) = \max_a q(s,a) $.

Due to the Bounded Greedification property, the minimum increase is bounded by $\epsilon$, that is: 
$$ \min_{\pi_n} \Big |\sum_{s \in \S}\sum_{a \in \A}\pi_n(a|s)q(s,a) - \sum_{s \in \S}\sum_{a \in \A}\pi_{n-1}(a|s)q(s,a) \Big | > \epsilon, \quad n > 0. $$
The sequence $ \sum_{s \in \S}\sum_{a \in \A}\pi_n(a|s)q(s,a) $ is bounded by $ \sum_{s \in \S}\max_a q(s,a) $ from above, and by $ \sum_{s \in \S}\min_a q(s,a) $ from below.
The increases between any two iterations is bounded from below by $\epsilon$ by definition unless the policy is already greedy, as $I$ is a lower-bounded greedification operator.

Since the sequence is bounded from below and above and the increase is bounded a constant amount $\epsilon > 0 $, it must converge in a finite $n < \infty$ to the maximum of the sequence $\sum_{a \in \A}\pi_n(a|s)q(s,a) = \max_a q(s,a)$.
That is, $\pi_n$ converges to a greedy policy with respect to $q$ in a finite number of iterations $n$.
\end{proof}

\subsubsection{Modified Induction for Bounded Greedification}

We modify the induction of the proof of Theorem \ref{thm:api_converges} with finite-sufficient greedification operators, that converge to an $\argmax$ policy in a finite number of iterations.

\begin{proof}
\textbf{Modified Induction Hypothesis:}
There exist $ M_{t+1} $ such that for all $ m \geq M_{t+1} $ we have
$ q_{t+1}^{m} = q_{t+1}^{*} $,
and
$ \pi_{t+1}^{m} \cdot q_{t+1}^m = \max_a q_{t+1}^*$.

\textbf{Modified Base Case:} Because the convergence to the $\argmax$ is in finite time (Lemma \ref{lemma:finite_time_convergence}) there exists $ M_{H-1} $ such that:
\begin{align*}
    \pi_{H-1}^m \cdot q_{H-1}^m = \max_a q_{H-1}^* 
\end{align*}
for all $ m \geq M_{H-1} $.
Thus the Modified Induction Hypothesis holds at the base case.

\textbf{Modified Case $ t < H - 1$}
Step (1):
Similarly, for all $ m \geq M_{t+1} $ we have:
\begin{align}
    \norm{q_t^{{m+1}} - q^*_t} &= 
    \norm{\R + \gamma \P (\pi_{t+1}^{m+1} \cdot q_{t+1}^{m}) - \R - \gamma \P\max_a q^*_{t+1}} \\
   &= \gamma \norm{\P (\pi_{t+1}^{m+1} \cdot q_{t+1}^{m}) - \P \max_a q^*_{t+1}} \\
   &\leq \norm{\P}\norm{\pi_{t+1}^{m+1} \cdot q_{t+1}^{m} - \max_a q^*_{t+1}} \\
   &= 0
\end{align}
Since also $ \norm{q_t^{{m+1}} - q^*_t} \geq 0 $, we have $ \norm{q_t^{{m+1}} - q^*_t} = 0$ and $ q_t^{{m+1}} = q^*_t $.

Step (2):
Pick $ M_{t} \geq M_{t+1} $ such that for all $ m \geq M_{t} $ we have
$ \norm{\pi_{t}^{m} \cdot q_{t}^{m} - \max_a q^*_{t}} = 0 $ 
which must exist due to convergence to the argmax in finite time of this operator class.
Thus, the Modified Induction Hypothesis holds for all states $t$ if it holds for states $t+1$.
\end{proof}
\section{Additional Results}
\label{app:additional_albations}

\subsection{Value improvement and over estimation}
Explicit value-improvement results in greedier evaluation policies. 
As such, it should directly increase the value targets compared to no value improvement (demonstrated empirically in Figure \ref{fig:pg_greedification} center). 
The same can be expected to happen when the value improvement relies on implicit improvement operators such as the expectile operator.
Respectively, any increase to the value targets can be expected to interact with (and more specifically, increase) over-estimation bias.

It is well known that overestimation bias can induce pseudo optimistic exploration. 
This is because increased overestimation bias is more likely to increase overestimation of unvisited state-actions, thus driving exploration into these unvisited state-actions.
For that reason, while it can be detrimental in certain environments \citep[as demonstrated by][]{td3}, it can be beneficial in others.

In Figure \ref{fig:all_over_estimation_bias_ablations} we investigate the interaction between implicit improvement and over estimation with the expectile operator, with VI-TD3.
Each row presents results for a pair of environments.
We plot (i) final averaged evaluation return after $3$ million environment interactions vs. $\tau$ (the first and third columns from the left).
And (ii) we plot final over estimation bias after $3$ million environment interactions vs. $\tau$ (the second and last columns from the left).

Generally as $\tau$ increases over estimation bias increases.
On the other hand, in many environments the majority of the performance gain is observed for values of $\tau \leq 0.6$ (for example hopper-hop, humanoid-stand/walk/run), for which none to negligible over estimation bias is observed.
This is summarized in Table~\ref{table:over_estimation_summary}

We conclude that while in this domain overestimation can be beneficial (e.g. fish-swim), benefits of VIAC are not limited to the benefits of overestimation ($\tau \leq 0.6$ in hopper-hop, humanoid-stand/walk/run, for example).

\begin{figure}[H]
    \centering
    \includegraphics[width=1\linewidth]{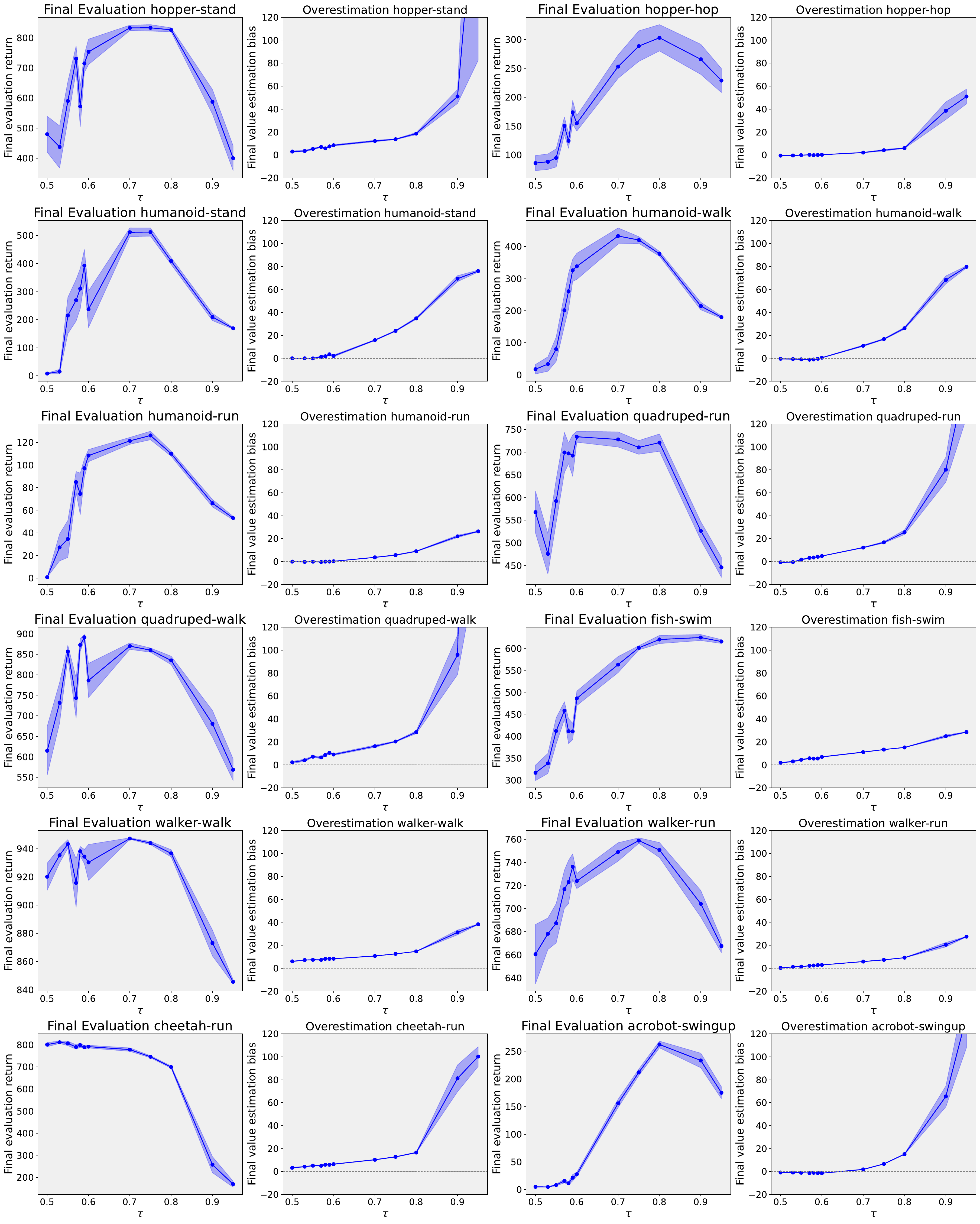}
    \caption{
    Mean and one standard error across 10 seeds.
    Left: Final evaluation vs. greedification parameter $\tau$ for VI-TD3 with implicit improvement after $3m$ environment interactions. $\tau=0.5$ is baseline TD3.
    Right: Final overestimation bias vs. $\tau$ after $3m$ environment interactions.
    The majority of the performance increases are independent from an increase in over estimation bias.
    }
    \label{fig:all_over_estimation_bias_ablations}
\end{figure}

\begin{table}[h!]
\centering
\caption{Performance and overestimation statistics across environments.}
\label{table:over_estimation_summary}
\begin{tabular}{l|cc|cccc}
\toprule
\textbf{Environment} &
\makecell{\textbf{Performance}\\ \textbf{gain without}\\ \textbf{overest. increase}} &
$\boldsymbol{\tau}$ &
\makecell{\textbf{Max}\\ \textbf{performance}\\ \textbf{gain}} &
$\boldsymbol{\tau}$ &
\makecell{\textbf{Overest.}\\ \textbf{as \% of} \\ \textbf{max overest.}} &
\makecell{\textbf{Overest. as }\\ \textbf{\% of} \\ \textbf{performance}} \\
\midrule
hopper-stand & -- & -- & $353.80 \pm 60.77$ & 0.75 & 5\% & 2\% \\
hopper-hop & $88.01 \pm 24.31$ & 0.59 & $217.25 \pm 26.38$ & 0.8 & 12\% & 2\% \\
humanoid-stand & $206.79 \pm 64.93$ & 0.55 & $503.93 \pm 14.72$ & 0.75 & 31\% & 5\% \\
humanoid-walk & $320.64 \pm 43.42$ & 0.6 & $415.28 \pm 29.29$ & 0.7 & 14\% & 3\% \\
humanoid-run & $107.50 \pm 5.38$ & 0.6 & $125.25 \pm 3.78$ & 0.75 & 22\% & 5\% \\
quadruped-run & -- & -- & $166.18 \pm 47.69$ & 0.6 & 3\% & 1\% \\
quadruped-walk & -- & -- & $276.55 \pm 59.85$ & 0.59 & 1\% & 1\% \\
fish-swim & -- & -- & $308.07 \pm 19.02$ & 0.9 & 87\% & 4\% \\
walker-walk & -- & -- & $27.04 \pm 9.63$ & 0.7 & 28\% & 1\% \\
walker-run & $17.65 \pm 29.07$ & 0.53 & $98.23 \pm 25.79$ & 0.75 & 27\% & 1\% \\
cheetah-run & $10.37 \pm 11.18$ & 0.53 & $10.37 \pm 11.18$ & 0.53 & 4\% & 1\% \\
acrobot-swingup & $22.74 \pm 3.94$ & 0.6 & $257.24 \pm 6.13$ & 0.8 & 11\% & 6\% \\
\bottomrule
\end{tabular}
\end{table}




\subsection{Value Improved TD7}
\begin{figure}[H]
    \vspace{-5mm}
    \centering
    \includegraphics[width=1\linewidth]{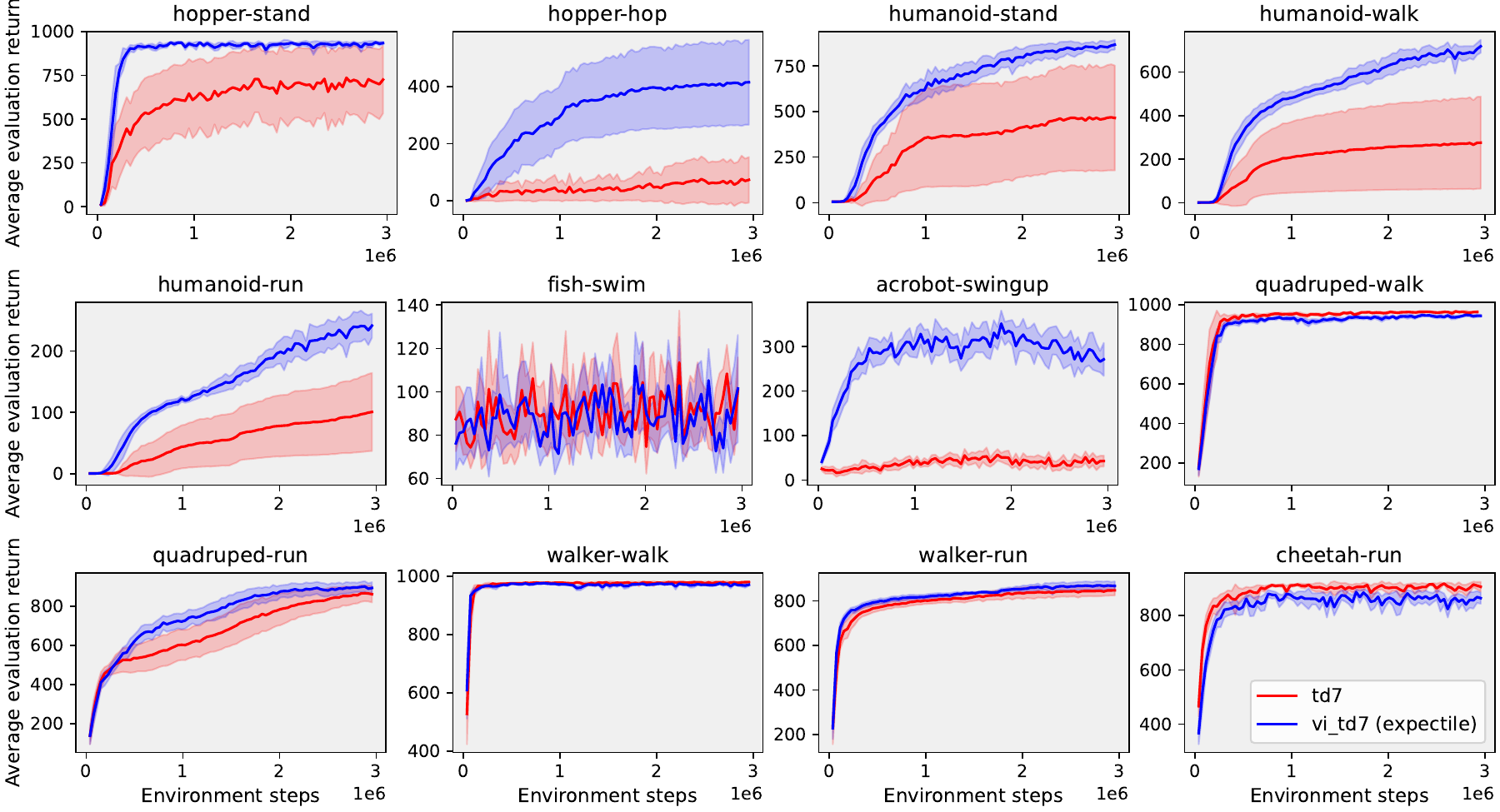}
    \caption{
    Mean and two standard errors across 10 seeds of VI-TD7 with expectile loss vs. TD7 on the same tasks as Figure \ref{fig:results_vi_td3_vi_sac}. 
    Similar performance gains are observed for VI-TD7 in this domain.
    }
    \label{fig:td7_results}
    \vspace{-5mm}
\end{figure}

\subsection{Increased greedification of the acting policy}
\begin{figure}[H]
    \vspace{-5mm}
    \centering
    \includegraphics[width=1\linewidth]{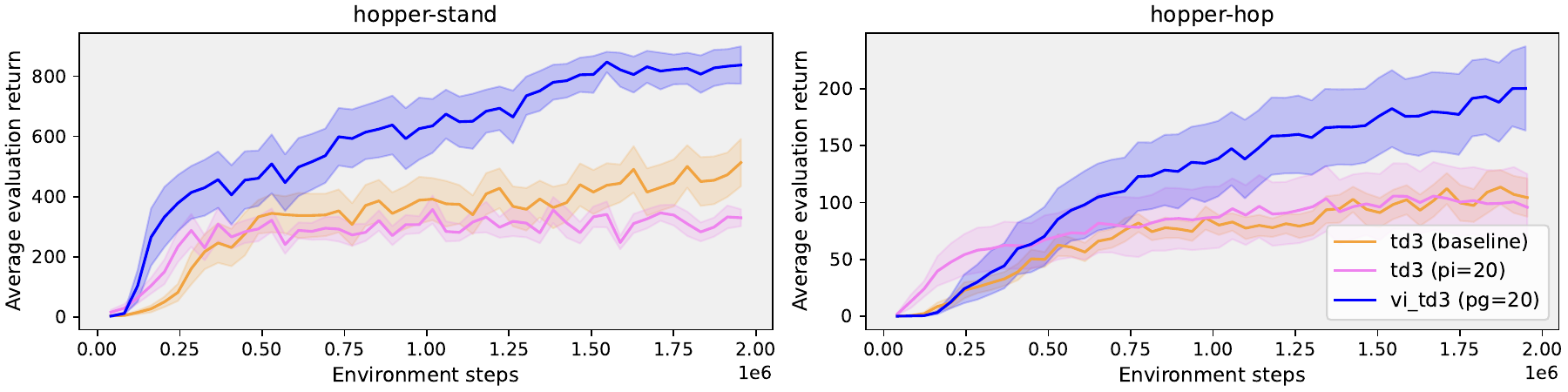}
    \caption{
    Mean and one standard error across 10 seeds in evaluation of VI-TD3 with Policy Gradient as the value improvement operator, vs. TD3 with 20 repeating policy gradient steps in each update, vs. baseline TD3.
    Increasing the number of acting-policy updates on the same batch does not contribute to performance.
    }
    \label{fig:stupidity_ablation}
    \vspace{-5mm}
\end{figure}


\subsection{Increased value improvement vs. increased replay ratio}
If one is able to spend additional compute on gradient updates, an increased replay ratio is an attractive alternative to value improvement.
In Figure \ref{fig:stability_ablations} we compare VI-TD3 with increasing number of gradient steps to TD3 with increasing replay ratio.
In line with similar findings in literature \citep{redq}, replay ratio provides a very strong performance gain for small ratios. 
As the ratio increases, performance degrades, a result which the literature generally attributes to instability.
The VI agent on the other hand does not degrade with increased compute.
This suggests a reduced interaction between greedification of the evaluated policy and instability compared to that of the acting policy.
\begin{figure}[H]
    \vspace{-2.5mm}
    \centering
    \includegraphics[width=1\linewidth]{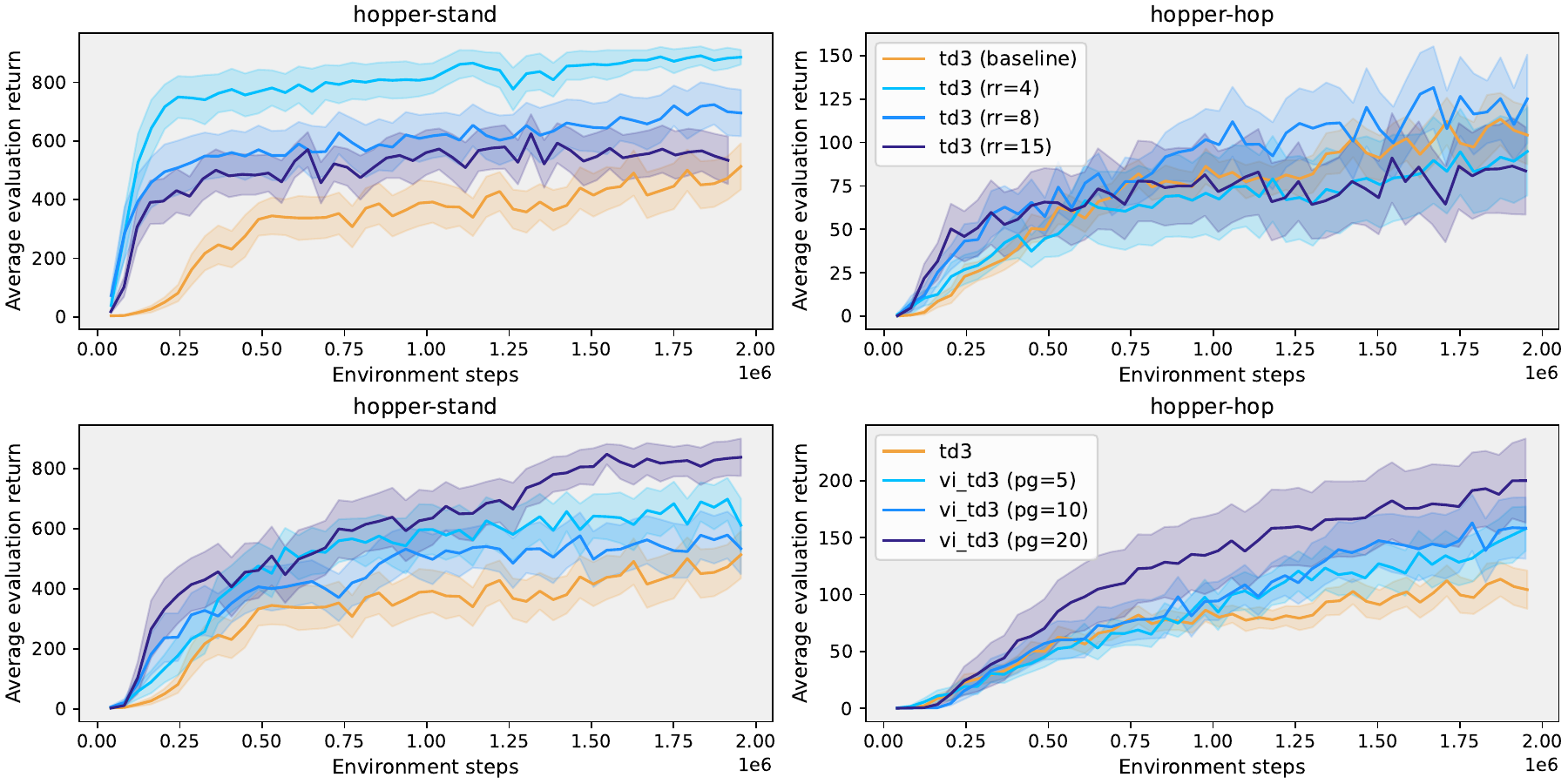}
    \caption{
    Mean and one standard errors across 10 seeds in evaluation of VI-TD3 with policy-gradient based value improvement vs. td3 with increased replay ratio. 
    Number of gradient steps are equated across rr / VI agent pairs as a pseudo metric for compute.
    The performance of TD3 generally degrades with increased replay ratio, in line with the results of \cite{redq}.
    In contrast, the performance of VI-TD3 increases with compute, without access to additional mechanisms to address instability.
    }
    \label{fig:stability_ablations}
    \vspace{-2.5mm}
\end{figure}

\subsection{Value Improvement vs. Policy Improvement}
A few questions that VIAC naturally raises are \textit{how does the greedification of the acting policy compare to the greedification of the evaluated policy? 
Is one more important than the other? Does one render the other unnecessary?}

To answer these questions stripped off as many additional influences as possible, we construct a simple experiment in a toy grid environment with Value Improved Generalized Policy Iteration (VIGPI).
The VIGPI algorithm uses the operator $\I_{GMZ}$ for both policy improvement (PI) as well as value improvement (VI) with increasing $\beta$.
The value table is initialized with $0$s.
Evaluation uses the Bellman update until $|V_j(s) - V_{j-1}(s)| < 0.0005 $ for all states.
We plot the value at iteration $i$ of the starting state $s_0$ as a function of $i$.

In the left plot we compare agents with increasing $\beta$ for the \textit{acting policy improvement} (PI).
In the right plot we compare agents with increasing $\beta$ for the \textit{evaluated policy improvement} (VI).
For both agents we include a final variation which uses the greedy operator for either PI or VI, as well as versions that use $\beta = 0$ (i.e. no greedification).

Since any value improvement that is less greedy than the greedy update depends on the acting policy, slow greedification of the acting policy slows down the convergence of the VI variants.
On the other hand, VI + PI is able to significantly increase the rate of convergence compared to the same PI without VI.
Similarly, when the rate of greedification $\beta = 0$, with PI there is no learning at all (because the policy never changes) and with VI there is no issue, because it remains a  non-detriment operator, and thus does not prevent the improvement of the acting policy (Corollary \ref{cor:vi_need_not_sgo}).

\begin{figure}[H]
    \centering
    \includegraphics[width=1\linewidth]
    {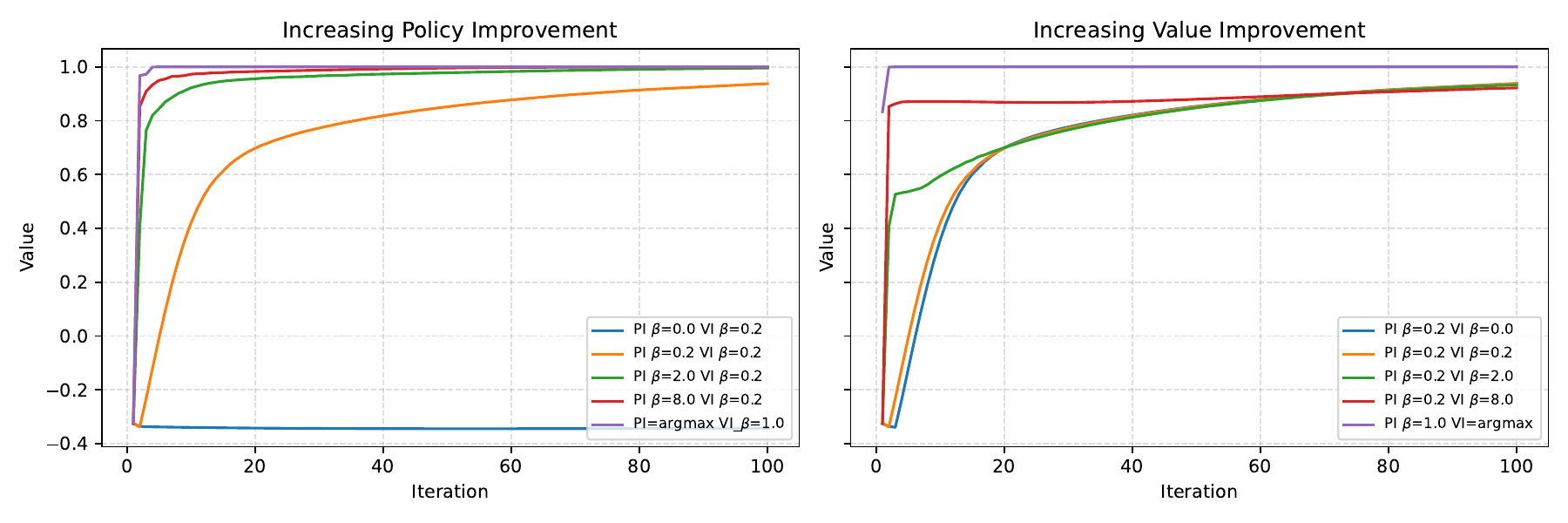}
    \caption{
    Starting-state value vs. iteration.
    }
    \label{fig:dp_pi_vs_vi}
\end{figure}
\section{Explicit and implicit Value Improved Actor Critic algorithms}
\label{app:vi_ac_psudocode}
In Algorithms \ref{alg:vi_ac} 
and \ref{alg:implicit_vi_ac}
modifications to baseline off-policy Actor Critic are marked in \textcolor{blue}{blue}.
\begin{algorithm}[ht]
    \caption{\textcolor{blue}{\textit{Explicit}} Off-policy \textcolor{blue}{Value-Improved} Actor Critic}
    \label{alg:vi_ac}
    \begin{algorithmic}[1]
        \State Initialize policy network $\pi_\theta$, $Q$ network $ q_{\phi} $, Greedification Operator\textcolor{blue}{s} $\I_1$ \textcolor{blue}{and $\I_2$}, replay buffer $\B$
        \For{each episode}
            \For{each environment interaction $ t $}
                \State Act $ a_t \sim \pi_\theta(s_t) $
                \State Observe $ s_{t+1}, r_{t} $
                \State Add the transition $ (s_t,a_t,r_t, s_{t+1}) $ to the buffer $ \B $
                \State Sample a batch $ b $ from $ \B $ of transitions of the form $ (s_t,a_t,r_t, s_{t+1}) $
                \State Update the policy $ \pi_\theta(s_t) \gets \I_1(\pi_\theta, q_{\phi})(s_t), \forall s_t \in b$
                \State \textcolor{blue}{Further improve the policy $ \pi'(s_{t+1}) \gets \I_2(\pi_\theta, q_{\phi})(s_{t+1}), \forall s_{t+1} \in b $}
                \State Sample an action from the \textcolor{blue}{improved policy} $ a \sim \textcolor{blue}{\pi'}(s_{t+1}),  \forall s_{t+1} \in b $
                \State Compute the value targets $ y(s_t,a_t) \gets r_t + \gamma q_{\phi}(s_{t+1}, a), \forall (s_t, a_t, r_t, s_{t+1}) \in b $ \label{line:vi_ac}
                \State Update $ q_{\phi} $ with gradient descent and MSE loss using targets $ y $
            \EndFor
        \EndFor
    \end{algorithmic}
\end{algorithm}

\begin{algorithm}[ht]
    \caption{\textcolor{blue}{\textit{Implicit}} Off-policy \textcolor{blue}{Value-Improved} Actor Critic}
    \label{alg:implicit_vi_ac}
    \begin{algorithmic}[1]
        \State Initialize policy network $\pi_\theta$, $Q$ network $ q_{\phi} $, Greedification Operator $\I_1$, \textcolor{blue}{implicit greedification parameter $\tau$} and replay buffer $\B$
        \For{each episode}
            \For{each environment interaction $ t $}
                \State Act $ a_t \sim \pi_\theta(s_t) $
                \State Observe $ s_{t+1}, r_{t} $
                \State Add the transition $ (s_t,a_t,r_t, s_{t+1}) $ to the buffer $ \B $
                \State Sample a batch $ b $ from $ \B $ of transitions of the form $ (s_t,a_t,r_t, s_{t+1}) $
                \State Update the policy $ \pi_\theta(s_t) \gets \I_1(\pi_\theta, q_{\phi})(s_t), \forall s_t \in b$
                \State Sample an action from the policy $ a \sim \pi(s_{t+1}),  \forall s_{t+1} \in b $
                \State Compute the value targets $ y(s_t,a_t) \gets r_t + \gamma q_{\phi}(s_{t+1}, a), \forall (s_t, a_t, r_t, s_{t+1}) \in b $ \label{line:vi:alg:vi_ac}
                \State Update $ q_{\phi} $ with gradient descent and \textcolor{blue}{$\mathcal{L}^\tau_2$ loss} using targets $ y $, see Supplement \ref{appendix:iql}
            \EndFor
        \EndFor
    \end{algorithmic}
\end{algorithm}
\section{Experimental Details}
\label{appendix:experimental_details}

\subsection{Gradient-Based VI-TD3}
\label{appendix:gradient_vitd3}
Gradient-based VI-TD3 copies the existing policy used to compute value targets (the target policy, in TD3) $ \pi_{\theta'} $ into a new policy $\pi_{\theta'}'$.
The algorithm executes N repeating gradient steps on $\pi_{\theta'}'$ with respect to states $s_{t+1} \in b$ with the same operator TD3 uses to improve the policy (the deterministic policy gradient) and with respect to the same batch $b$.
The value-improved target $y(s_t,a_t)$ is computed in the same manner to the original target of TD3 but with the fresh greedified target network $\pi_{\theta'}'$.
In TD3, that summarizes to sampling an action from a clipped Gaussian distribution with mean $\pi_{\theta'}'(s_{t+1}) $, variance parameter $\sigma $ and clipped between $(-\beta, \beta)$:
\begin{align}
    a' \sim \N(\pi_{\theta'}'(s_{t+1}), \sigma).clip(-\beta, \beta)
\end{align}
And using the action $a'$ to compute the value target in the Sarsa manner:
\begin{align}
    y(s_t,a_t) = r_t + \gamma \min_{i \in \{1, 2\}}q_{\phi_i}(s_{t+1}, a'), \forall (s_t, a_t, r_t, s_{t+1}) \in b 
\end{align}
The policy used to compute the value targets $\pi_{\theta'}$ is then discarded.


\subsection{Implicit Policy Improvement with Expectile Loss}
\label{appendix:iql}
The expectile-loss $\mathcal{L}^\tau_2$ proposed by \citet{iql} as an implicit policy improvement operator for continuous-domain Q-learning can be formulated as follows: when $ y(s_t, a_t) > q(s_t, a_t) $ (the target is greater than the prediction), the loss equals $ \tau(y(s_t, a_t) - q(s_t, a_t))^2 $. 
When $ y(s_t, a_t) \leq q(s_t, a_t) $ (the target is smaller than the prediction) the loss equals $ (1-\tau)(y(s_t, a_t) - q(s_t, a_t))^2  $.
If $\tau = 0.5$, this loss is equivalent to the baseline $\mathcal{L}_2$ loss.
Intuitively, when $\tau > 0.5$ the agent favors errors where the prediction should increase, over predictions where it should reduce.
I.e. the agent favors targets where $ \pi'(s_{t+1}) $ (the implicit policy evaluated on the next state) chooses "better" actions than the current policy, directly approximating the value of an improved policy.

By imposing this loss on the value network, in stochastic environments the network may learn to be \textit{risk-seeking}, by implicitly favoring interactions $s_t,a_t, r_t, s_{t+1}$ where the observed $r_t$ was large or the state $s_{t+1}$ was favorable. 
This is addressed by \citet{iql} by learning an additional $v_\psi$ network that is trained with the expectile loss, while the $q$ network is trained with SARSA targets $r_t + \gamma v_{\psi}(s_{t+1})$ and the regular $\mathcal{L}_2$ loss, while the $v_\psi$ network is trained with targets $ y(s_t,a_t) = q_\phi(s_t,a_t) $ and the expectile loss.
In deterministic environments this is not necessary however, and in our experiments we have directly replaced the $\mathcal{L}_2$ loss on the value $q_\phi$ with the expectile loss.

The value target $y(s_t,a_t)$ remained the unmodified target used by TD3 / SAC respectively.

Another aspect where our use of the expectile loss diverges from that of \citet{iql}, is that we use it with respect to the \textit{online acting policy} $\pi_\theta$, rather than with respect to the policy captured by the replay buffer.
That is, the original targets of IQL are:
\begin{align}
    y(s_t,a_t) = r_t + \gamma v_{\psi}(s_{t+1}), \quad (s_t,a_t, r_t, s_{t+1}) \in \mathcal D.
\end{align}
On the other hand, our usage is with respect to $\pi_\theta$:
\begin{align}
    y(s_t,a_t) = r_t + \gamma q_{\phi}(s_{t+1}, \pi_\theta(s_{t+1})), \quad (s_t,r_t, s_{t+1}) \in \mathcal D.
\end{align}
This results in targets that are much more on policy (more "fresh", if you will), compared to the information in the replay buffer, unless the replay buffer contains trajectories exclusively from $\pi_\theta$.
This also enables compute more accurate targets by averaging across any number of fresh samples $\{a_i\}_{i=1}^N \sim \pi_\theta(s_{t+1})$ with: $y(s_t,a_t) = r_t + \gamma \frac{1}{N}\sum^N_{i=1} q_{\phi}(s_{t+1}, a_i)$, although this was not used in our experiments.

\subsection{Evaluation Method}
\label{appendix:evaluation_details}
We plot the mean and standard error for \textit{evaluation curves} across multiple seeds.
Evaluation curves are computed as follows:
after every $ n = 5000 $ interactions with the environment, $m=3$ evaluation episodes are ran with the latest network of the agent (actor and critic).
The score of the agent is the return averaged across the $m$ episodes.
The actions in evaluation are chosen deterministically for TD3, SAC and TD7 with the mean of the policy (the agents use Gaussian policies).
The evaluation episodes are not included in the agent's replay buffer or used for training, nor do they count towards the number of interactions.

\subsection{Compute}
\label{appendix:compute_cost}
The experiments were run on the internal compute cluster
\cite{DAIC}
using any of the following GPU architectures: NVIDIA Quadro K2200, Tesla P100, GeForce GTX 1080 Ti, GeForce RTX 2080 Ti, Tesla V100S and Nvidia A-40. 
Each seed was ran on one GPU, and was given access to 6GB of RAM and 2 CPU cores.
Total training wall-clock time averages were in the range of $0.5$ to $2$ hours per $10^6$ environment steps, depending on GPU architecture, the baseline algorithm and VI variations.
For example, baseline TD3 wall-clock time averages were roughly $1.25$ hours per $10^6$ environment steps on average.
The total wall clock time over all experiments presented in this paper (main results, baselines and ablations) is estimated at  $\approx 12000$ wall-clock hours of the compute resources detailed above:
$\approx 7320$ for the results in the paper and $\approx 4300$ for the ablations in the appendix.
Additional experiments that are not included in the paper were run in the process of implementation and testing.




\subsection{Implementation \& Hyperparemeter Tuning}
\label{appendix:tuning}
Our implementation of TD3 and SAC relies on the popular code base CleanRL \citep[][]{huang2022cleanrl}.
CleanRL consists of implementations of many popular RL algorithms which are carefully tuned to match or improve upon the performance reported in the original paper.
The implementations of TD3 and SAC use the same hyperparameters as used by the authors (\cite{td3} and \cite{sac1} respectively), with the exception of the different learning rates for the actor and the critic in SAC, which were tuned by CleanRL.

For the TD7 agent, we use the original implementation by the authors \citep{td7}, adapting the action space to the DeepMind control's in the same manner as CleanRL's implementation of TD3.
Additionally, a non-prioritized replay buffer has been used for TD7 which was used by the TD3 and SAC agents as well.
The hyperparameters are the same as used by the author.

The VI-variations of all algorithms use the same hyperparameters as the baseline algorithms without any additional tuning, with the exception of grid search for the greedification parameters $\tau$ presented in Figure \ref{fig:tau_ablations}.

\subsection{Network Architectures}
\label{appendix:architectures}
The experiments presented in this paper rely on standard architectures for every baseline.
TD3 and SAC used the same architecture, with the exception that SAC's policy network predicts a mean of a Gaussian distribution as well as standard deviation, while TD3 predicts only the mean.
TD7 used the same architecture proposed and used by \cite{td7}.

\textbf{TD3 and SAC}:

Actor:
3 layer MLP of width 256 per layer, with ReLU activations on the hidden layers.
The final action prediction is passed through a $\tanh$ function.

Critic: 
3 layer MLP of width 256 per layer, with ReLU activations on the hidden layers and no activation on the output layer.

\textbf{TD7}: Has a more complex architecture, which is specified in \citep{td7}.

\subsection{Hyperparemeters}
\label{appendix:hps}
\begin{table}[ht]
\centering
    \begin{adjustbox}{width=1\textwidth}
    \small
\begin{tabular}{|cc|cc|cc|}
\hline
\multicolumn{2}{|c|}{TD3}                            & \multicolumn{2}{c|}{SAC}                               & \multicolumn{2}{c|}{TD7}                             \\ \hline
\multicolumn{1}{|c|}{exploration noise}   & $ 0.1 $  & \multicolumn{1}{c|}{}                       &          & \multicolumn{1}{c|}{exploration noise}    & $ 0.1 $  \\ \hline
\multicolumn{1}{|c|}{Target policy noise} & $ 0.2 $  & \multicolumn{1}{c|}{}                       &          & \multicolumn{1}{c|}{Target policy noise}  & $ 0.2 $  \\ \hline
\multicolumn{1}{|c|}{Target smoothing}    & $0.005$  & \multicolumn{1}{c|}{Target smoothing}       & $0.005$  & \multicolumn{1}{c|}{}                     &          \\ \hline
\multicolumn{1}{|c|}{noise clip}          & $0.5$    & \multicolumn{1}{c|}{auto tuning of entropy} & True     & \multicolumn{1}{c|}{noise clip}           & $0.5$    \\ \hline
\multicolumn{1}{|c|}{}                    &          & \multicolumn{1}{c|}{Critic learning rate}   & 1e-3     & \multicolumn{1}{c|}{Critic learning rate} & 3e-4     \\ \hline
\multicolumn{1}{|c|}{Learning rate}       & 3e-4     & \multicolumn{1}{c|}{Policy learning rate}   & 3e-4     & \multicolumn{1}{c|}{Policy learning rate} & 3e-4     \\ \hline
\multicolumn{1}{|c|}{Policy update frequency} & 2      & \multicolumn{1}{c|}{Policy update frequency} & 2      & \multicolumn{1}{c|}{Policy update frequency} & 2      \\ \hline
\multicolumn{1}{|c|}{$\gamma$}            & $0.99$   & \multicolumn{1}{c|}{$\gamma$}               & $0.99$   & \multicolumn{1}{c|}{$\gamma$}             & $0.99$   \\ \hline
\multicolumn{1}{|c|}{Buffer size}         & $ 10^6 $ & \multicolumn{1}{c|}{Buffer size}            & $ 10^6 $ & \multicolumn{1}{c|}{Buffer size}          & $ 10^6 $ \\ \hline
\multicolumn{1}{|c|}{Batch size}          & 256      & \multicolumn{1}{c|}{Batch size}             & 256      & \multicolumn{1}{c|}{Batch size}           & 256      \\ \hline
\multicolumn{1}{|c|}{learning start}      & $10^4$   & \multicolumn{1}{c|}{learning start}         & $10^4$   & \multicolumn{1}{c|}{learning start}       & $10^4$   \\ \hline
\multicolumn{1}{|c|}{evaluation frequency}    & $5000$ & \multicolumn{1}{c|}{evaluation frequency}    & $5000$ & \multicolumn{1}{c|}{evaluation frequency}    & $5000$ \\ \hline
\multicolumn{1}{|c|}{Num. eval. episodes} & $3$      & \multicolumn{1}{c|}{Num. eval. episodes}    & $3$      & \multicolumn{1}{c|}{Num. eval. episodes}  & $3$      \\ \hline
\end{tabular}
\end{adjustbox}
\end{table}

\end{document}